\DeclarePairedDelimiter\ceil{\lceil}{\rceil}
\DeclarePairedDelimiter\floor{\lfloor}{\rfloor}
\newcommand{\Wbf}{\mathbf{W}}
\newcommand{\KL}{{\rm KL}}
\newcommand{\ep}{\epsilon}
\newcommand{\Zbf}{\mathbf{Z}}
\newcommand{\Acal}{\mathcal{A}}
\newcommand{\Nbb}{\mathbb{N}}
\newcommand{\GP}{\mathrm{GP}}
\newcommand{\Xbf}{\mathbf{X}}
\newcommand{\Ybf}{\mathbf{Y}}
\newcommand{\myspan}{\mathrm{span}}
\newcommand{\Srm}{\mathrm{S}}
\newcommand{\OT}{\mathrm{OT}}
\newcommand{\Pcal}{\mathcal{P}}
\newcommand{\Probs}{\mathcal{P}}
\newcommand{\ADM}{\mathrm{Joint}}
\newcommand{\Gauss}{\mathrm{Gauss}}
\newcommand{\zbf}{\mathbf{z}}
\setlist[enumerate]{leftmargin=.5in}
\setlist[itemize]{leftmargin=.5in}
\crefname{hypothesis}{Hypothesis}{Hypotheses}
\title{Finite sample approximations of exact and entropic Wasserstein distances between covariance operators and Gaussian processes\thanks{Submitted to the editors DATE.
\funding{This work was partially supported by JSPS KAKENHI Grant Number JP20H04250.}}}
\author{H\`a Quang Minh\thanks{RIKEN Center for Advanced Intelligence Project, Tokyo, Japan 
  (\email{minh.haquang@riken.jp})}}
\newcommand{\mapto}{\ensuremath{\rightarrow}}
\newcommand{\approach}{\ensuremath{\rightarrow}}
\newcommand{\imply}{\ensuremath{\Rightarrow}}
\newcommand{\equivalent}{\ensuremath{\Longleftrightarrow}}
\newcommand{\inclusion}{\ensuremath{\hookrightarrow}}
\newcommand{\R}{\mathbb{R}}
\newcommand{\bP}{\mathbb{P}}
\newcommand{\bE}{\mathbb{E}}
\newcommand{\Fcal}{\mathcal{F}}
\renewcommand{\S}{\mathcal{S}}
\newcommand{\la}{\langle}
\newcommand{\ra}{\rangle}
\renewcommand{\H}{\mathcal{H}}
\newcommand{\Lcal}{\mathcal{L}}
\def\trace{{\rm tr}}
\newcommand\HS{{\rm HS}}
\def\Sym{{\rm Sym}}
\def\tr{{\rm tr}}
\def\Tr{{\rm Tr}}
\newcommand{\z}{\mathbf{z}}
\def\1{\mathbf{1}}
\newcommand{\Ncal}{\mathcal{N}}
\newcommand{\Bsc}{\mathscr{B}}
\begin{document}
	\maketitle
		
		%
		%
		
		
		\begin{abstract}
			This work studies finite sample approximations of the exact and entropic regularized Wasserstein distances between centered Gaussian processes and, more generally, covariance operators of functional random processes. We first show that these distances/divergences are fully represented by
			reproducing kernel Hilbert space (RKHS) covariance and cross-covariance operators associated with the corresponding covariance functions.
			Using this representation, we show that the Sinkhorn divergence between two centered Gaussian processes can be consistently and efficiently estimated from the divergence between their corresponding normalized finite-dimensional covariance matrices, or alternatively, their sample covariance operators.
			Consequently, this leads to a consistent and efficient algorithm for estimating the Sinkhorn divergence 
			from finite samples generated by the two processes.  
			For a fixed regularization parameter, the convergence rates are {\it dimension-independent} and of the same order as those for the Hilbert-Schmidt distance.
			If at least one of the RKHS is finite-dimensional, we obtain a {\it dimension-dependent} sample complexity for the exact Wasserstein distance between the Gaussian processes. 
		\end{abstract}
		\begin{keyword}
			Wasserstein distance, entropic regularization, Gaussian processes, reproducing kernel Hilbert spaces
		\end{keyword}
	\begin{AMS}
		 	60G15, 49Q22
		\end{AMS}
		

\section{Introduction}
\label{section:introduction}

This work studies exact and entropic regularized Wasserstein distances and divergences between centered Gaussian processes, and more generally, between covariance operators 
associated with functional random processes.
Our main focus is on the 
finite sample approximations of the entropic divergences, which we show to be {\it dimension-independent}.
Our main results are obtained via the analysis of reproducing kernel Hilbert space (RKHS) covariance and cross-covariance operators associated with the covariance functions of the given random processes.
This work builds upon \cite{Minh2020:EntropicHilbert,Minh:2021EntropicConvergenceGaussianMeasures},
which formulated entropic Wasserstein distances between Gaussian measures on
 Hilbert spaces and their convergence properties.

The topic of distances/divergences between covariance operators and stochastic processes has attracted increasing interests in statistics and machine learning recently, e.g. \cite{Panaretos:jasa2010,Fremdt:2013testing,Pigoli:2014,masarotto2019procrustes,Matthews2016sparseKL,Sun2019functionalKL}. In \cite{Matthews2016sparseKL,Sun2019functionalKL}, the Kullback-Leibler divergence between stochastic processes was studied, the latter in the context of functional Bayesian neural networks.
In the field of functional data analysis,
 see e.g. \cite{Ramsay:2005functional,Ferraty:2006nonparametric,Horvath:2012functional},
one particular approach for analyzing functional data has been via the analysis of covariance operators. 
Recent work along this direction includes \cite{Panaretos:jasa2010,Fremdt:2013testing}, which utilize the Hilbert-Schmidt distance between covariance operators and \cite{Pigoli:2014, masarotto2019procrustes}, which utilize non-Euclidean distances, in particular the Procrustes distance, also known as Bures-Wasserstein distance.
The latter distance
is precisely the $2$-Wasserstein distance between two centered Gaussian measures on Hilbert space in the setting of optimal transport (OT) and can better capture the intrinsic geometry of the set of covariance operators. 
This distance is always well-defined for {\it singular covariance operators}, which is a distinct advantage over the Kullback-Leibler divergence, which requires equivalent Gaussian measures \cite{Minh:2020regularizedDiv}.
OT distances are, however, generally numerically difficult to compute and  can have, moreover, poor convergence rates (more below), which motivated
the study of entropic regularized OT. This direction has recently attracted much attention in 
machine learning, statistics, and related fields
\cite{cuturi13,
	feydy18,
	genevay17,
	mena2019samplecomplexityEntropicOT,
	patrini18}). This line of research is also closely connected
with the {\it Schr\"odinger bridge problem}~\cite{Schr31}, which has been studied extensively
~\cite{BorLewNus94,Csi75,peyre17,
	LeoSurvey,
	rus98}.

In \cite{Mallasto2020entropyregularized,Janati2020entropicOT,barrio2020entropic},
explicit formulas were obtained for
the entropic regularized 2-Wasserstein distance and Sinkhorn divergence between Gaussian measures on Euclidean space. 
These
were generalized
 to infinite-dimensional  
Gaussian measures on
Hilbert spaces in \cite{Minh2020:EntropicHilbert},
with the entropic formulation
being valid for both settings of {\it singular} and {\it nonsingular} covariance operators.
The 
Gaussian setting reveals explicitly several {\it favorable theoretical properties} of the entropic regularization formulation, including strict convexity, unique solution of barycenter equation in the singular setting, and Fr\'echet differentiability,
in contrast to the $2$-Wasserstein distance, which is 
{\it not} Fr\'echet differentiable in the infinite-dimensional setting.
%

Furthermore, it has been shown that
the Sinkhorn divergence has much {\it better convergence behavior} and {\it sample complexity} compared with the exact Wasserstein distance. 
%
It is well-known that
the sample complexity of the Wasserstein distance
can grow exponentially in the dimension of the underlying space $\R^d$, with the worst case being $O(n^{-1/d})$ \cite{dudley1969speed,
	weed17,fournier2015rate,horowitz1994mean}. 
In \cite{genevay18sample}, it is shown that,
as a consequence of entropic regularization,
the Sinkhorn divergence between two probability measures with {\it bounded support} on $\R^d$ achieves sample complexity
$O((1+\ep^{-\floor{d/2}})n^{-1/2})$, that is the same as the Maximum Mean Discrepancy (MMD) for a fixed $\ep>0$.
However, 
the constant factor in 
the sample complexity in \cite{genevay18sample} depends exponentially on the diameter of the support.
In \cite{mena2019samplecomplexityEntropicOT}, the rate of convergence 
$O\left(\ep\left(1+\frac{\sigma^{\ceil{5d/2} + 6}}{\ep^{\ceil{5d/4} + 3}}\right)n^{-1/2}\right)$
was obtained for $\sigma^2$-subgaussian measures on $\R^d$.
In \cite{Minh:2021EntropicConvergenceGaussianMeasures},
it was shown that the Sinkhorn divergence in the RKHS setting achieves the rate of convergence
$O\left((1+\frac{1}{\ep})n^{-1/2}\right)$ for all $\ep > 0$, which is thus {\it dimension-independent}.
In particular, this applies to Sinkhorn divergence between Gaussian measures on Euclidean space
 and infinite-dimensional Hilbert spaces.



{\bf Contributions of this work}.
In this work, 
we apply the results in \cite{Minh2020:EntropicHilbert,Minh:2021EntropicConvergenceGaussianMeasures} to the setting
of centered Gaussian processes, and more generally, covariance operators associated with functional random processes. Specifically,
\begin{enumerate}
	\item We show that the Wasserstein distance/Sinkhorn divergence
	between centered Gaussian processes are fully represented by RKHS covariance and cross-covariance operators associated with the corresponding covariance functions.
	 From this representation, we show that the Sinkhorn divergence 
	 can be consistently and efficiently estimated via the corresponding {\it normalized
	 finite covariance matrices}.
	 The convergence rate is {\it dimension-independent} and has the form $O(\frac{1}{\ep}n^{-1/2})$
	 (Section \ref{section:Sinkhorn-estimate-known-covariance-function}). Alternatively,
	 the Sinkhorn divergence can be consistently estimated via the corresponding {\it sample covariance operators} with similar convergence rate (Section \ref{section:Sinkhorn-estimate-sample-covariance}).
	\item We present an algorithm that consistently and efficiently estimates the Sinkhorn divergence 
	from 
	{\it finite samples} of the two given random processes. The convergence rate is {\it dimension-independent}.
	(Section \ref{section:Sinkhorn-estimate-unknown-covariance-functions}).
	\item For the exact $2$-Wasserstein distance, we obtain the corresponding sample complexity when
	the RKHS of at least one of the covariance functions is finite-dimensional.
	The convergence rate is {\it dimension-dependent} (Section \ref{section:exact-Wasserstein}).
\end{enumerate}


{\bf Notation}.
Throughout the paper,
let $\H$ be a real, separable Hilbert space, with $\dim(\H) = \infty$ unless explicitly stated otherwise.
Let $\Lcal(\H)$ denote the set of bounded linear operators on $\H$, 
with norm $||A|| = \sup_{||x||\leq 1}||Ax||$.
%
Let $\Sym(\H) \subset \Lcal(\H)$ be the set of bounded, self-adjoint linear operators on $\H$. Let $\Sym^{+}(\H) \subset \Sym(\H)$ be the set of
self-adjoint, {\it positive} operators on $\H$, i.e. $A \in \Sym^{+}(\H) \equivalent \la Ax,x\ra \geq 0 \forall x \in \H$. 
The Banach space $\Tr(\H)$  of trace class operators on $\H$ is defined by (e.g. \cite{ReedSimon:Functional})
$\Tr(\H) = \{A \in \Lcal(\H): ||A||_{\tr} = \sum_{k=1}^{\infty}\la e_k, (A^{*}A)^{1/2}e_k\ra < \infty\}$,
for any orthonormal basis $\{e_k\}_{k \in \Nbb} \in \H$, where $||\;||_{\tr}$ is the trace norm.
For $A \in \Tr(\H)$, its trace is then given by $\trace(A) = \sum_{k=1}^{\infty}\la e_k, Ae_k\ra$.
%
The Hilbert space $\HS(\H_1,\H_2)$ of Hilbert-Schmidt operators from $\H_1$ to $\H_2$ is defined by 
(e.g.\cite{Kadison:1983})
$\HS(\H_1, \H_2) = \{A \in \Lcal(\H_1, \H_2):||A||^2_{\HS} = \trace(A^{*}A) =\sum_{k=1}^{\infty}||Ae_k||^2 < \infty\}$,
for any orthonormal basis $\{e_k\}_{k \in \Nbb}$ in $\H_1$,
with inner product $\la A,B\ra_{\HS}=\trace(A^{*}B)$. For $\H_1 = \H_2 = \H$, we write $\HS(\H)$. 
We give more detail of $\HS(\H_1,\H_2)$ and the Hilbert-Schmidt norm $||\;||_{\HS}$ in Section \ref{section:Hilbert-Schmidt}.

{\it Proofs for all main results are presented in Section \ref{section:proofs}}.

\section{Background and previous work}
\label{section:background}
Let $(X,d)$ be a 
complete separable metric space equipped with a lower semi-continuous \emph{cost function} $c:X\times X \to \mathbb{R}_{\geq 0}$. 
Let $\Pcal(X)$ denote the set of all probability measures on $X$.
The {\it optimal transport} (OT) problem between two probability measures $\nu_0, \nu_1 \in \Probs(X)$ is  
(see e.g. \cite{villani2016})
\begin{equation}
	\OT(\nu_0, \nu_1) = \min_{\gamma\in \ADM(\nu_0,\nu_1)}\bE_\gamma[c] = \min_{\gamma \in \ADM(\nu_0, \nu_1)}\int_{X \times X}c(x,y)d\gamma(x,y)
	\label{equation:OT-exact}
\end{equation}
where $\ADM(\nu_0,\nu_1)$ is the set of joint probabilities with marginals $\nu_0$ and $\nu_1$.
%
For $1 \leq p < \infty$, let $\Pcal_p(X)$ denote the set of all probability measures $\mu$ on $X$ of finite moment of order $p$,
i.e. 
$\int_{X}d^p(x_0,x)d\mu(x) < \infty$ for some (and hence any) $x_0 \in X$.
The following {\it $p$-Wasserstein distance} $W_p$ between $\nu_0$ and $\nu_1$
defines a metric on $\Pcal_p(X)$ (Theorem 7.3, \cite{villani2016})
\begin{equation}
	W_p(\nu_0,\nu_1) = \OT_{d^p}(\nu_0, \nu_1)^{\frac{1}{p}}.
\end{equation}
For two Gaussian measures
$\nu_i=\Ncal(m_i,C_i)$, $i=0,1$ on $\R^n$ \cite{givens84,dowson82,olkin82,knott84} and  on a separable Hilbert space $\H$\cite{Gelbrich:1990Wasserstein,cuesta1996:WassersteinHilbert},
$W_2(\nu_0, \nu_1)$ admits the following closed form
\begin{equation}
	\label{equation:Gaussian-Wass-finite}
	W_2^2(\nu_0, \nu_1) = \|m_0-m_1\|^2 + \tr(C_0) + \tr(C_1) - 2 \tr\left(C_0^{1/2} C_1 C_0^{1/2}\right)^{1/2}.
\end{equation}
{\bf Entropic regularization.
}
The OT problem \eqref{equation:OT-exact} is often computationally challenging and it is more numerically efficient to solve the following regularized optimization problem \cite{cuturi13}
\begin{equation}
	\label{equation:OT-entropic}
	\OT_c^\epsilon(\mu, \nu) = \min_{\gamma\in \ADM(\mu,\nu)}\left\lbrace\bE_\gamma[c]
	+ \epsilon \KL(\gamma || \mu \otimes \nu) \right\rbrace, \;\; \ep > 0,
\end{equation}
where $\KL(\nu || \mu)$ denotes the Kullback-Leibler divergence between $\nu$ and $\mu$.
%
The KL
in \eqref{equation:OT-entropic} acts as a bias \cite{feydy18}, with the consequence that in general $\OT_c^\epsilon(\mu, \mu) \neq 0$. The following {\it $p$-Sinkhorn divergence} \cite{genevay17,feydy18} removes this bias
\begin{equation}
	\label{equation:sinkhorn}
	S_{p}^\epsilon(\mu, \nu) = \OT_{d^p}^\epsilon(\mu, \nu) - \frac{1}{2}(\OT_{d^p}^\epsilon(\mu,\mu) + \OT_{d^p}^\epsilon(\nu,\nu) ).
\end{equation}
In the case $X=\H$ is a separable Hilbert space and $\mu,\nu$ are Gaussian measures on $\H$,
both $\OT^{\ep}_{d^2}$ and $\Srm^{\ep}_{2}$ admit closed form expressions, as follows.
\begin{theorem}
	[\textbf{Entropic Wasserstein distance and Sinkhorn divergence between Gaussian measures on Hilbert space},
	\cite{Minh2020:EntropicHilbert}, Theorems 3, 4, and 7]
	\label{theorem:OT-regularized-Gaussian}
	Let 
	$\mu_0 = \Ncal(m_0, C_0)$, $\mu_1 = \Ncal(m_1, C_1)$ be two Gaussian measures on $\H$. For each fixed $\ep > 0$,
	\begin{align}
		\OT^{\ep}_{d^2}(\mu_0, \mu_1) &= ||m_0-m_1||^2 + \trace(C_0) + \trace(C_1) - \frac{\ep}{2}\trace(M^{\ep}_{01})
		+\frac{\ep}{2}\log\det\left(I + \frac{1}{2}M^{\ep}_{01}\right).
\\
			\Srm^{\ep}_{2}(\mu_0, \mu_1) &= ||m_0 - m_1||^2 + \frac{\ep}{4}\trace\left[M^{\ep}_{00} - 2M^{\ep}_{01} + M^{\ep}_{11}\right] 
				\label{equation:gauss-sinkhorn-infinite}
			\\
			& \quad
			 + \frac{\ep}{4}\log\det\left[\frac{\left(I + \frac{1}{2}M^{\ep}_{01}\right)^2}{\left(I + \frac{1}{2}M^{\ep}_{00}\right)\left(I + \frac{1}{2}M^{\ep}_{11}\right)}\right].
	\nonumber
	\end{align}
	The optimal joint measure is
	 the unique  Gaussian measure	
		$\gamma^{\ep} = \Ncal\left(\begin{pmatrix} m_0 \\ m_1 \end{pmatrix},
		\begin{pmatrix} C_0 & C_{XY}
			\\
			C_{XY}^{*} & C_1\end{pmatrix}
		\right)
		$,
		$\text{where  } C_{XY} = \frac{2}{\ep}C_0^{1/2}\left(I+\frac{1}{2}M^{\ep}_{01}\right)^{-1}
		C_0^{1/2}C_1$.
	Here $\det$ is the Fredholm determinant and
	$M^{\ep}_{ij}: \H \mapto \H$ 
	are trace class operators
	 defined by
		$M^{\ep}_{ij} = -I + \left(I + \frac{16}{\epsilon^2}C_i^{1/2}C_jC_i^{1/2}\right)^{1/2}$,
		$i,j=0,1$.
\end{theorem}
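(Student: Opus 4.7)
The plan is to obtain the formula for $\OT^{\ep}_{d^2}(\mu_0,\mu_1)$ first and then derive \eqref{equation:gauss-sinkhorn-infinite} by direct substitution into \eqref{equation:sinkhorn}. First, I would reduce to the centered case: translating $\gamma$ by $(m_0,m_1)$ leaves $\KL(\gamma\|\mu_0\otimes\mu_1)$ invariant and shifts $\mathbb{E}_\gamma\|X-Y\|^2$ by exactly $\|m_0-m_1\|^2$, so this term decouples and I may henceforth assume $m_0 = m_1 = 0$.

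Next I would show that the infimum is attained at a Gaussian coupling. Given any admissible $\gamma$ with finite second moment, replace it by the centered Gaussian $\tilde\gamma$ on $\H\oplus\H$ with the same covariance $C_\gamma$: the quadratic cost $\mathbb{E}[\|X-Y\|^2]$ is preserved since it depends only on second moments, while $\KL(\tilde\gamma\|\mu_0\otimes\mu_1)\le\KL(\gamma\|\mu_0\otimes\mu_1)$ by the maximum-entropy characterisation of Gaussians relative to a Gaussian reference with matching covariance. Parametrising the candidate Gaussians by the cross-covariance $C_{XY}:\H\to\H$ (with $C_\gamma$ having the stated block form), the quadratic cost becomes $\trace(C_0)+\trace(C_1)-2\trace(C_{XY})$, and the Feldman--H\'ajek formula for $\KL$ between equivalent centered Gaussians on $\H\oplus\H$ expresses the regulariser in terms of the Fredholm determinant and trace of $(C_0\oplus C_1)^{-1/2}C_\gamma(C_0\oplus C_1)^{-1/2}-I$. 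A Schur-complement manipulation rewrites this in terms of the single operator $A := C_0^{-1/2} C_{XY} C_1^{-1/2}$ (to be interpreted on the appropriate closures of ranges when $C_i$ are singular), reducing the problem to an explicit convex functional of $C_{XY}$.

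Setting the first-order condition to zero yields an operator equation whose solution I would verify to be
\[
C_{XY}=\tfrac{2}{\ep}\,C_0^{1/2}\bigl(I+\tfrac{1}{2}M^{\ep}_{01}\bigr)^{-1}C_0^{1/2}C_1,
\]
using the fact that $M^{\ep}_{01}$ defined via the functional calculus satisfies $(I+\tfrac12 M^{\ep}_{01})^2 = I+\tfrac{4}{\ep^2}C_0^{1/2}C_1^{}C_0^{1/2}$ by construction; this identity and the half-angle style manipulation produce both the trace term $-\tfrac{\ep}{2}\trace(M^{\ep}_{01})$ and the Fredholm determinant contribution $\tfrac{\ep}{2}\log\det(I+\tfrac12 M^{\ep}_{01})$ in the stated formula for $\OT^{\ep}_{d^2}$.

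The main obstacle is the passage to infinite dimensions. I would need to verify (i) that the candidate joint covariance $C_\gamma$ yields a Gaussian measure equivalent to $\mu_0\otimes\mu_1$, so that $\KL$ is finite and the Feldman--H\'ajek formula applies; (ii) that $M^{\ep}_{ij}$ is trace class whenever $C_0,C_1$ are trace class, so that $\log\det(I+\tfrac12 M^{\ep}_{ij})$ is a bona fide Fredholm determinant; and (iii) that the Schur-complement and inverse-square-root manipulations carry through when $C_0,C_1$ are non-invertible, by restricting all operators to $\overline{\mathrm{ran}(C_0)}\oplus\overline{\mathrm{ran}(C_1)}$. Once the formula for $\OT^{\ep}_{d^2}(\mu_0,\mu_1)$ is established, plugging into \eqref{equation:sinkhorn}, together with the specialisation $M^{\ep}_{ii}=-I+(I+\tfrac{16}{\ep^2}C_i^2)^{1/2}$, and collecting the logarithmic determinants into a single ratio, yields \eqref{equation:gauss-sinkhorn-infinite}.
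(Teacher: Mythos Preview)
The paper does not prove this theorem at all: it is quoted verbatim as background from \cite{Minh2020:EntropicHilbert} (Theorems 3, 4, 7), and the proofs section of the present paper (Section~\ref{section:proofs}) contains no argument for it. So there is no ``paper's own proof'' to compare against; any comparison would have to be with the cited source.

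Your sketch is broadly the standard route taken in the literature (reduction to the centered case, Gaussian optimality via maximum entropy relative to a Gaussian reference, parametrisation by the cross-covariance block, Feldman--H\'ajek for the KL term, first-order stationarity, then substitution into the Sinkhorn definition). One concrete slip: the identity you invoke,
\[
\bigl(I+\tfrac{1}{2}M^{\ep}_{01}\bigr)^2 \;=\; I+\tfrac{4}{\ep^2}C_0^{1/2}C_1C_0^{1/2},
\]
is false. From the definition $M^{\ep}_{01}=-I+(I+\tfrac{16}{\ep^2}C_0^{1/2}C_1C_0^{1/2})^{1/2}$ one has $(I+M^{\ep}_{01})^2 = I+\tfrac{16}{\ep^2}C_0^{1/2}C_1C_0^{1/2}$, and hence
\[
\bigl(I+\tfrac{1}{2}M^{\ep}_{01}\bigr)^2 = \tfrac{1}{4}(I+M^{\ep}_{01})^2 + \tfrac{1}{2}(I+M^{\ep}_{01}) + \tfrac{1}{4}I
\]
does not simplify to the expression you wrote. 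This matters because the clean form of $C_{XY}$ and the appearance of the $\log\det(I+\tfrac{1}{2}M^{\ep}_{01})$ term come out of the correct quadratic relation $M^{\ep}_{01}(M^{\ep}_{01}+2I)=\tfrac{16}{\ep^2}C_0^{1/2}C_1C_0^{1/2}$, not the one you stated; with the wrong identity the subsequent ``half-angle'' manipulation would not close. Beyond this, the infinite-dimensional caveats you list (equivalence of $\gamma^{\ep}$ with $\mu_0\otimes\mu_1$, trace-class property of $M^{\ep}_{ij}$, handling singular $C_i$ by restricting to $\overline{\mathrm{ran}(C_i)}$) are exactly the delicate points addressed in \cite{Minh2020:EntropicHilbert}, so your outline is consistent with the actual source of the result.
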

In particular, 
	$\lim_{\ep \approach 0}\OT^{\ep}_{d^2}(\mu_0, \mu_1) = \lim_{\ep \approach 0}\Srm^{\ep}_{2}(\mu_0,\mu_1) = W_2^2(\mu_0,\mu_1)$ and
	$\lim_{\ep \approach \infty}\Srm^{\ep}_{2}(\mu_0,\mu_1) = ||m_0 - m_1||^2$.
When $\dim(\H) < \infty$, we recover the finite-dimensional results in \cite{Mallasto2020entropyregularized,Janati2020entropicOT,barrio2020entropic}. 

	{\bf Convergence property}. $\Srm^{\ep}_{2}$ is a divergence function on $\Gauss(\H)$, the set of all Gaussian measures on $\H$ and has the following convergence property.
	
	

\begin{theorem}
	[\cite{Minh:2021EntropicConvergenceGaussianMeasures}-Theorems 2 and 5]
\label{theorem:Sinkhorn-convergence}
Let $A,B,\{A_n,B_n\}_{n \in \Nbb} \in \Sym^{+}(\H) \cap \Tr(\H)$. 
$\forall \ep > 0$,
\begin{align}
&\Srm^{\ep}[\Ncal(0, A_n), \Ncal(0,A)] \leq \frac{3}{\ep}[||A_n||_{\HS} + ||A||_{\HS}]||A_n - A||_{\HS}.
\\
&\left|\Srm^{\ep}_2[\Ncal(0,A_n), \Ncal(0,B_n)] - \Srm^{\ep}_2[\Ncal(0,A),\Ncal(0,B)]\right|
\nonumber
\\
&
\leq \frac{3}{\ep}[||A_n||_{\HS} + ||A||_{\HS} + 2||B||_{\HS}]||A_n - A||_{\HS}
\nonumber
\\
&\quad +\frac{3}{\ep}[2||A_n||_{\HS} + ||A||_{\HS} + ||B||_{\HS}]||B_n - B||_{\HS}.
\end{align}
\end{theorem}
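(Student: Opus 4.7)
The approach is to exploit the closed form from Theorem \ref{theorem:OT-regularized-Gaussian} with $m_0 = m_1 = 0$. Writing $M_{AB} := -I + (I + \frac{16}{\ep^2}A^{1/2}BA^{1/2})^{1/2}$, we have
\begin{equation*}
\Srm^{\ep}_2(\Ncal(0,A), \Ncal(0,B)) = \frac{\ep}{4}\trace(M_{AA} - 2M_{AB} + M_{BB}) + \frac{\ep}{4}\log\det\frac{(I + \tfrac{1}{2}M_{AB})^2}{(I + \tfrac{1}{2}M_{AA})(I + \tfrac{1}{2}M_{BB})}.
\end{equation*}
The plan is first to establish a single-argument Lipschitz estimate for this expression on HS-bounded subsets of $\Sym^{+}(\H)\cap\Tr(\H)$, then to derive the two inequalities via telescoping, exploiting that $\Srm^{\ep}_2$ vanishes on the diagonal.

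The key technical ingredients are: (i) the sandwich estimate $\|A^{1/2}BA^{1/2} - A'^{1/2}BA'^{1/2}\|_{\HS} \lesssim \|B\|_{\HS}\|A - A'\|_{\HS}$, obtained by splitting the difference as $(A^{1/2} - A'^{1/2}) B A^{1/2} + A'^{1/2} B (A^{1/2} - A'^{1/2})$ and invoking a Powers-St{\o}rmer type bound on $\|A^{1/2} - A'^{1/2}\|$ together with the cyclicity of trace; (ii) an operator-monotone transfer from this to $\|M_{AB} - M_{A'B'}\|_{\HS}$, with the factor $1/\ep$ coming from the coefficient $16/\ep^2$ inside the square root and the $I$-shift keeping the spectra bounded below; and (iii) the Fredholm-determinant estimate $|\log\det(I + X) - \log\det(I + Y)| \leq \|X - Y\|_{\tr}$ for positive trace-class $X, Y$. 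Assembling these, and using $\log(1 + x) - x = O(x^2)$ to align the leading-order contributions of the trace and log-det pieces, should yield the single-argument HS-Lipschitz bound
\begin{equation*}
|\Srm^{\ep}_2(\Ncal(0,A'), \Ncal(0,B)) - \Srm^{\ep}_2(\Ncal(0,A), \Ncal(0,B))| \leq \frac{3}{\ep}\bigl[\|A'\|_{\HS} + \|A\|_{\HS} + 2\|B\|_{\HS}\bigr]\|A' - A\|_{\HS}.
\end{equation*}

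The second inequality then follows by telescoping
\begin{equation*}
\Srm^{\ep}_2(\Ncal(0,A_n), \Ncal(0,B_n)) - \Srm^{\ep}_2(\Ncal(0,A), \Ncal(0,B)) = \bigl[\Srm^{\ep}_2(A_n, B_n) - \Srm^{\ep}_2(A, B_n)\bigr] + \bigl[\Srm^{\ep}_2(A, B_n) - \Srm^{\ep}_2(A, B)\bigr],
\end{equation*}
(with the argument to $\Ncal(0,\cdot)$ suppressed) and applying the single-argument Lipschitz bound in each bracket; careful bookkeeping of which HS norms appear and of whether $B_n$ or $B$ is the fixed slot gives the asymmetric combinations of norms in the stated bound. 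For the first inequality, I would apply the single-argument estimate with second argument equal to the limit $A$ to get an intermediate bound of the form $(3/\ep)[\|A_n\|_{\HS} + 3\|A\|_{\HS}]\|A_n - A\|_{\HS}$, and then tighten the coefficient to $\|A_n\|_{\HS} + \|A\|_{\HS}$ using the fact that both the trace and log-det contributions cancel on the diagonal $A' = B$, leaving only a higher-order remainder.

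The main obstacle is tracking the constants through the nested square roots and Fredholm determinants tightly enough to extract the factor $3/\ep$ and the precise HS-norm combinations. In particular, $X \mapsto X^{1/2}$ is operator-monotone but not operator-Lipschitz on $\Sym^{+}(\H)$ without the $I$-shift, so the argument must be structured so that square-root estimates are applied only after adding the identity. Secondly, the asymmetric combinations $\|A_n\|_{\HS} + \|A\|_{\HS} + 2\|B\|_{\HS}$ versus $2\|A_n\|_{\HS} + \|A\|_{\HS} + \|B\|_{\HS}$ in the two bracket terms are sensitive to the telescoping order; verifying that this order is also consistent with the sign conventions in the log-det piece is the most delicate bookkeeping step.
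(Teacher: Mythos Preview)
Your overall architecture (closed form, single-argument Lipschitz bound, telescoping) is sound, but step (i) — the sandwich estimate — fails, and this is the crux of the matter. The inequality
\[
\|A^{1/2}BA^{1/2} - A'^{1/2}BA'^{1/2}\|_{\HS} \lesssim \|B\|_{\HS}\|A-A'\|_{\HS}
\]
is simply false. Take $A=\mathrm{diag}(1,\epsilon^2)$, $A'=\mathrm{diag}(1,0)$, $B=\begin{pmatrix}1&1\\1&1\end{pmatrix}$; the left side is of order $\epsilon$ while the right side is of order $\epsilon^2$. Your proposed splitting produces the factor $\|A^{1/2}-A'^{1/2}\|$, and the best Powers--St{\o}rmer/Ando-type bound here is $\|A^{1/2}-A'^{1/2}\|\leq \|A-A'\|^{1/2}$ in operator norm (or $\|A^{1/2}-A'^{1/2}\|_{\HS}^2\leq \|A-A'\|_{\tr}$), neither of which yields $\|A-A'\|_{\HS}$. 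You correctly flag that the $I$-shift rescues the \emph{outer} square root in $M_{AB}$, but there is no shift on the \emph{inner} factors $A^{1/2}$, so this obstruction is real.

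The missing idea is that one never needs to differentiate the outer square-root factors at all. Since $G$ is a spectral function and $A^{1/2}BA^{1/2}$ and $B^{1/2}AB^{1/2}$ share nonzero eigenvalues, one has $G(A^{1/2}BA^{1/2})=G(B^{1/2}AB^{1/2})$. This lets you always place the \emph{varying} operator in the middle slot, where the dependence is linear. Combined with Proposition~\ref{proposition:continuity-trace-norm-G} (i.e.\ $|G(X)-G(Y)|\leq \tfrac{3c^2}{4}\|X-Y\|_{\tr}$ in trace norm) and the elementary bound $\|C^{1/2}(X-Y)C^{1/2}\|_{\tr}\leq \|C\|_{\HS}\|X-Y\|_{\HS}$ for $C\in\Sym^{+}$ and self-adjoint $X-Y$ (decompose $X-Y$ into positive and negative parts), every cross-term difference becomes
\[
|G(A_n^{1/2}BA_n^{1/2})-G(A^{1/2}BA^{1/2})|
= |G(B^{1/2}A_nB^{1/2})-G(B^{1/2}AB^{1/2})|
\leq \tfrac{3c^2}{4}\|B\|_{\HS}\|A_n-A\|_{\HS},
\]
with no square-root differentiation required. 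The first inequality then follows by the symmetric splitting
\[
\Srm^{\ep}_2(A_n,A)=\tfrac1c\bigl[(G(A_n^2)-G(A_n^{1/2}AA_n^{1/2}))+(G(A^2)-G(A^{1/2}A_nA^{1/2}))\bigr],
\]
giving directly the coefficient $\|A_n\|_{\HS}+\|A\|_{\HS}$, with no need for a separate ``tightening'' step; the second inequality follows by the analogous direct decomposition of the $G$-terms. This is precisely the mechanism behind Proposition~\ref{proposition:continuity-trace-norm-G} and the RKHS cross-covariance representation (Proposition~\ref{propopsition:Sinkhorn-cross-covariance}) in the paper.
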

In this work, we apply Theorems \ref{theorem:OT-regularized-Gaussian} and \ref{theorem:Sinkhorn-convergence}
to estimate Sinkhorn divergence between centered Gaussian processes and, more generally, covariance operators of random processes.

{\bf Related work}. 
The $2$-Wasserstein distance was applied to Gaussian processes in \cite{masarotto2019procrustes,Mallasto:NIPS2017Wasserstein},
however the treatment in \cite{Mallasto:NIPS2017Wasserstein} is generally only valid in finite dimensions.
Sample complexities were obtained in \cite{Minh:2021EntropicConvergenceGaussianMeasures} for Sinkhorn divergence between Gaussian measures on Euclidean and Hilbert spaces. In \cite{mallasto2021estimating}, the author obtained results similar to our Theorem
\ref{theorem:Sinkhorn-finite-sample-GaussianProcess},
however the main theoretical analysis carried out  in
\cite{mallasto2021estimating} is {flawed} (see discussion in Section \ref{section:Sinkhorn-estimate-known-covariance-function}).

\section{Kernels, covariance operators, and Gaussian processes}
\label{section:kernels-covariance}

Throughout the paper, we make the following assumptions
\begin{enumerate}
	\item {\bf Assumption 1} $T$ is a $\sigma$-compact metric space, that is
	$T = \cup_{i=1}^{\infty}T_i$, where $T_1 \subset T_2 \subset \cdots$, with each $T_i$ being compact.
	
	\item {\bf Assumption 2} $\nu$ is a non-degenerate Borel probability measure on $T$, that is $\nu(B) > 0$
	for each open set $B \subset T$.
	\item {\bf Assumption 3} $K, K^1, K^2: T \times T \mapto \R$ are continuous, symmetric, positive definite kernels
	and $\exists \kappa > 0, \kappa_1 > 0, \kappa_2 > 0$ such that
	\begin{align}
		\label{equation:Assumption-3}
	\int_{T}K(x,x)d\nu(x) \leq \kappa^2, \;\; \int_{T}K^i(x,x)d\nu(x) \leq \kappa_i^2.
	\end{align}
\item {\bf Assumption 4} $\xi\sim \GP(0, K)$, $\xi^i \sim \GP(0, K^i)$, $i=1,2$, are {\it centered} Gaussian processes
with covariance functions $K, K^i$, respectively.
\end{enumerate}
For $K$ satisfying Assumption 3, positivity implies $K(x,t)^2 \leq K(x,x)K(t,t)$ $\forall x,t \in T$, thus
\begin{align}
\int_{T}K(x,t)^2d\nu(t) < \infty \; \forall x \in T, \;\;\; \int_{T \times T}K(x,t)^2d\nu(x)d\nu(t) < \infty.
\end{align}
The first inequality means $K_x \in \Lcal^2(T,\nu)$ $\forall x \in T$, where $K_x:T \mapto \R$ is defined by $K_x(t) = K(x,t)$.
Let $\H_K$ denote the corresponding reproducing kernel Hilbert space (RKHS),
then $\H_K \subset \Lcal^2(T,\nu)$ \cite{
	Sun2005MercerNoncompact}. Define the following linear operator
\begin{align}
	&R_K = R_{K,\nu}: \Lcal^2(T, \nu) \mapto \H_K,
	\\
	&R_Kf = \int_{T}K_tf(t)d\nu(t), \;\;\; (R_Kf)(x) = \int_{T}K(x,t)f(t)d\nu(t).
	\label{equation:RK}
\end{align}
Since $||R_Kf||_{\H_K}\leq \int_{T}||K_t||_{\H_K}|f(t)|d\nu(t)\leq \sqrt{\int_{T}K(t,t)d\nu(t)}||f||_{\Lcal^2(T,\nu)}$,
$R_K$ is bounded, with
\begin{align}
||R_K:\Lcal^2(T,\nu) \mapto \H_K|| \leq \sqrt{\int_{T}K(t,t)d\nu(t)} \leq \kappa.
\end{align} 
Its adjoint is 
$R_K^{*}: \H_K \mapto \Lcal^2(T, \nu) = J:\H_K \inclusion \Lcal^2(T, \nu)$, the inclusion operator
from $\H_K$ into $\Lcal^2(T,\nu)$ \cite{Rosasco:IntegralOperatorsJMLR2010}. 
$R_K$ then
induces 
 the following self-adjoint, positive, compact operator
%
%
\begin{align}
\label{equation:CK}
C_K &= C_{K,\nu} = R_K^{*}R_K: \Lcal^2(T,\nu) \mapto \Lcal^2(T, \nu),
\\
(C_Kf)(x) & = \int_{T}K(x,t)f(t)d\nu(t), \;\; \forall f \in \Lcal^2(T,\nu),
\\
||C_K||_{\HS(\Lcal^2(T,\nu))}^2 & = \int_{T \times T}K(x,t)^2d\nu(x)d\nu(t) \leq \kappa^4.
\end{align}
The operator $C_K$
has been studied extensively,
e.g. \cite{CuckerSmale,Sun2005MercerNoncompact,Rosasco:IntegralOperatorsJMLR2010}.
Let $\{\lambda_k\}_{k \in \Nbb}$ be its eigenvalues
 with normalized eigenfunctions $\{\phi_k\}_{k\in \Nbb}$
forming an orthonormal basis in $\Lcal^2(T,\nu)$. A fundamental result for positive definite kernels is Mercer's Theorem, 
which states that
\begin{align}
K(x,y) = \sum_{k=1}^{\infty}\lambda_k \phi_k(x)\phi_k(y)\;\;\;\forall (x,y) \in T \times T,
\end{align}
(see version in \cite{Sun2005MercerNoncompact}), 
where the series converges absolutely for each pair $(x,y) \in T \times T$ and uniformly on any compact subset of $T$.
By Mercer's Theorem, $K$ is completely determined by $C_K$ and vice versa. The RKHS $\H_K$ is explicitly described by  
\begin{align}
\H_K = \left\{f \in \Lcal^2(T,\nu), f = \sum_{k=1}^{\infty}a_k \phi_k \;:\; ||f||^2_{\H_K} = \sum_{k=1, \lambda_k > 0}^{\infty}\frac{a_k^2}{\lambda_k} < \infty \right\} \subset \Lcal^2(T,\nu).
\end{align}
Furthermore, $C_K \in \Sym^{+}(\H) \cap \Tr(\H)$, $\H = \Lcal^2(T,\nu)$,
with
\begin{align}
\trace(C_K) &
= \sum_{k=1}^{\infty}\lambda_k = \int_{T}K(t,t)d\nu(t) \leq \kappa^2.
\end{align} 

{\bf Gaussian processes}.
Consider
the correspondence between
Gaussian measures, covariance operators $C_K$ as defined in Eq.\eqref{equation:CK}, and Gaussian processes with paths in $\Lcal^2(T,\nu)$
\cite{Rajput1972gaussianprocesses}.
Let $(\Omega, \Fcal, P)$ be a probability space, 
$\xi = (\xi(t))_{t \in T}
 = (\xi(\omega,t))_{t \in T}$ be 
a real
Gaussian process on $(\Omega, \Fcal,P)$, with mean $m$ and covariance function $K$, denoted by $\xi\sim \GP(m,K)$, where 
\begin{align}
m(t) = \bE{\xi(t)},\;\;K(s,t) = \bE[(\xi(s) - m(s))(\xi(t) - m(t))], \;\;s,t \in T.
\end{align}
The sample paths
$\xi(\omega,\cdot) \in \H = \Lcal^2(T, \nu)$ almost $P$-surely, i.e.
$\int_{T}\xi^2(\omega,t)d\nu(t) < \infty$ almost $P$-surely,
if and only if (\cite{Rajput1972gaussianprocesses}, Theorem 2 and Corollary 1)
\begin{align}
	\label{equation:condition-Gaussian-process-paths}
	\int_{T}m^2(t)d\nu(t) < \infty, \;\;\; \int_{T}K(t,t)d\nu(t) < \infty.
\end{align}
In this case, $\xi$ induces the following {\it Gaussian measure} $P_{\xi}$ on $(\H, \Bsc(\H))$:
$	P_{\xi}(B) = P\{\omega \in \Omega: \xi(\omega, \cdot) \in B\}, \; B \in \Bsc(\H)$,
with mean $m \in \H$ and covariance operator
$C_K: \H \mapto \H$, defined by Eq.\eqref{equation:CK}.
Conversely, let $\mu$ be a Gaussian measure on $(\H
, \Bsc(\H))$, then
there is a
Gaussian process $\xi = (\xi(t))_{t \in T}$
with sample paths in $\H$,
with induced probability measure $P_{\xi} = \mu$.

Since Gaussian processes are fully determined by their means and covariance functions,
the latter
being fully determined by their covariance operators, we can define distance/divergence functions between two Gaussian processes as follows, see also e.g. \cite{Panaretos:jasa2010,Fremdt:2013testing,Pigoli:2014,
	masarotto2019procrustes}.
\begin{definition}
	[\textbf{Divergence between Gaussian processes}]
	\label{definition:divergence-Gaussian-Process}
Assume Assumptions 1-4.
	Let $\H = \Lcal^2(T,\nu)$. 
	Let $\xi^i \sim \GP(m_i,K_i)$, $i=1,2$,  be
	two Gaussian processes with mean $m_i \in \H$ and covariance function $K^i$.
	Let $D$ be a divergence function on $\Gauss(\H)\times \Gauss(\H)$. The corresponding divergence
	$D_{\GP}$ 
	between $\xi^1$ and $\xi^2$ is defined to be
	\begin{align}
		D_{\GP}(\xi^1|| \xi^2) = D(\Ncal(m_1,C_{K^1}) ||\Ncal(m_2, C_{K^2})).
	\end{align}
\end{definition}
It is clear then that $D_{\GP}(\xi^1 ||\xi^2) \geq 0$ and
		$D_{\GP}(\xi^1 ||\xi^2) = 0 \equivalent m_1 = m_2, K^1 = K^2$.
		Subsequently, we assume $m_1  = m_2 = 0$ and compute
		$\Srm^{\ep}_2[\Ncal(0, C_{K^1}), \Ncal(0, C_{K^2})]$.



{\bf RKHS covariance operators}. 
To empirically estimate
$\Srm^{\ep}_2[\Ncal(0, C_{K^1}), \Ncal(0, C_{K^2})]$, we employ {\it RKHS covariance operators and cross-covariance operators}. The operator $R_K$ defined in Eq.\eqref{equation:RK} induces  the following self-adjoint, positive, compact {\it RKHS covariance operator}
\begin{align}
\label{equation:LK}
L_K &= R_KR_K^{*}: \H_K \mapto \H_K, \;\;L_K = \int_{T}(K_t \otimes K_t) d\nu(t),\;
\\
L_Kf(x) &= \int_{T}K_t(x)\la f, K_t\ra_{\H_K}d\nu(t) = \int_{T}K(x,t)f(t)d\nu(t),\;\; f\in \H_K.
\end{align}
$L_K$ has the same nonzero eigenvalues as $C_K$ and thus $L_K \in \Sym^{+}(\H_K) \cap \Tr(\H_K)$, with 
\begin{align}
 \trace(L_K) = \trace(C_K) \leq \kappa^2,\;\;||L_K||_{\HS(\H_K)} = ||C_K||_{\HS(\Lcal^2(T,\nu))} \leq \kappa^2.
\end{align}
We note also that for $f \in \H_K$, $C_Kf = L_Kf$. {\it However, as we see below, despite their many common properties, 
$C_K$ and $L_K$ are generally {\bf not} interchangeable}.

{\bf Empirical RKHS covariance operator}. Let $\Xbf = (x_i)_{i=1}^m$ be 
independently sampled from $T$ according to $\nu$. This defines the following empirical version of $L_K$
\begin{align}
L_{K,\Xbf} &= \frac{1}{m}\sum_{i=1}^m (K_{x_i} \otimes K_{x_i})
: \H_{K} \mapto \H_{K},
\\
L_{K,\Xbf}f  &= \frac{1}{m}\sum_{i=1}^m K_{x_i}\la f, K_{x_i}\ra_{\H_K} = \frac{1}{m}\sum_{i=1}^m f(x_i)K_{x_i}, \; f \in \H_K.
\end{align}
Let $\H_{K,\Xbf} = \myspan\{K_{x_i}\}_{i=1}^m \subset \H_K$,
 then
$L_{K,\Xbf}: \H_K \mapto \H_{K,\Xbf}$. In particular, $L_{K,\Xbf}:\H_{K,\Xbf} \mapto \H_{K,\Xbf}$ and
$\forall j, 1 \leq j \leq m$,
$L_{K,\Xbf}K_{x_j} = \frac{1}{m}\sum_{i=1}^mK(x_j, x_i)K_{x_i}$.
Let $K[\Xbf]$ denote the $m \times m$ {\it Gram matrix}, with $(K[\Xbf])_{ij} = K(x_i,x_j)$, then
the matrix representation of $L_{K,\Xbf}:\H_{K,\Xbf} \mapto \H_{K,\Xbf}$
in $\myspan\{K_{x_i}\}_{i=1}^m$ is $\frac{1}{m}K[\Xbf]$. In particular,
the nonzero eigenvalues of $L_{K,\Xbf}$ are precisely those of $\frac{1}{m}K[\Xbf]$, corresponding to 
eigenvectors that must lie in $\H_{K,\Xbf}$.
Thus, the nonzero eigenvalues of $C_K:\Lcal^2(T, \nu) \mapto \Lcal^2(T,\nu)$, $\trace(C_K)$, $||C_K||_{\HS}$,
which are the same as those of $L_K:\H_K \mapto \H_K$, can be empirically estimated from those of the $m \times m$ matrix $\frac{1}{m}K[\Xbf]$ (see \cite{Rosasco:IntegralOperatorsJMLR2010}).

{\bf RKHS cross-covariance operators}.
Let $K^1,K^2$ be two kernels satisfying Assumptions 1-4, and $\H_{K^1}, \H_{K^2}$ the corresponding
RKHS. Let $R_{K^i}:\Lcal^2(T,\nu) \mapto \H_{K^i}$, $i=1,2$ be as defined in Eq.\eqref{equation:RK}.
They give rise to the following {\it RKHS cross-covariance operators}
%
%
\begin{align}
	\label{equation:R12-operator}
	R_{12}&= R_{K^1}R_{K^2}^{*}: \H_{K^2} \mapto \H_{K^1},\;\;\;
	R_{21} = R_{K^2}R_{K^1}^{*}: \H_{K^1}\mapto \H_{K^2} = R_{12}^{*}.
\end{align}
Both $L_{K}$ and $R_{12},R_{21}$ are encompassed in 
the following, which is straightforward to verify.
\begin{lemma}
	\label{lemma:R12-theoretical}
	The operators $R_{ij} = R_{K^i}R_{K^j}^{*}:\H_{K^j} \mapto \H_{K^i}$, $i,j=1,2$,
	 are given by
	\begin{align}
		R_{ij} & = \int_{T}(K^i_t \otimes K^j_t)d\nu(t),\;\;
		R_{ij}f = \int_{T}K^i_t \la f, K^j_t\ra_{\H_{K^j}}d\nu(t), \;\; i,j=1,2, 
		\\
		R_{ij}f(x) &= \int_{T}K^i_t(x)f(t)d\nu(t) = \int_{T}K^i(x,t)f(t)d\nu(t),\;\; f\in \H_{K^j},
	\end{align}
Then $R_{ii} = L_{K^i}$, $R_{12}^{*} = R_{21}$, and the operator $R_{12}^{*}R_{12}:\H_{K^2} \mapto \H_{K^2}$ is given by
\begin{align}
	R_{12}^{*}R_{12}f 
	& = \int_{T}K^2_t\int_{T}K^1(t,u)f(u)d\nu(u)d\nu(t), \;\; f \in \H_{K^2},
	\\
	(R_{12}^{*}R_{12}f)(x) &= \int_{T \times T}K^2(x,t)K^1(t,u)f(u)d\nu(u)d\nu(t), \;\;\forall x \in T.
	\nonumber
\end{align}
\end{lemma}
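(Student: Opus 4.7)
The plan is to unfold the definitions and repeatedly apply the reproducing property --- the lemma is essentially a bookkeeping exercise.
First I would invoke the fact, already noted after Eq.~\eqref{equation:RK}, that $R_{K^j}^{*}: \H_{K^j} \mapto \Lcal^2(T,\nu)$ is just the inclusion. Hence for $f \in \H_{K^j}$, $R_{ij}f = R_{K^i}(R_{K^j}^{*}f)$ is obtained by viewing $f$ as an element of $\Lcal^2(T,\nu)$ and substituting into the definition of $R_{K^i}$ from Eq.~\eqref{equation:RK}, yielding
\begin{align*}
R_{ij}f = \int_{T} K^i_t\, f(t)\, d\nu(t)
\end{align*}
as a Bochner integral in $\H_{K^i}$. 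Applying the reproducing property $f(t) = \la f, K^j_t\ra_{\H_{K^j}}$ rewrites this as $R_{ij}f = \int_T K^i_t \la f, K^j_t\ra_{\H_{K^j}}d\nu(t)$, which is exactly the action of the operator-valued integral $\int_T (K^i_t \otimes K^j_t) d\nu(t)$. The pointwise formula follows by evaluating at $x \in T$ and using $K^i_t(x) = K^i(x,t)$.

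The identities $R_{ii} = L_{K^i}$ and $R_{12}^{*} = R_{21}$ are then immediate: the first is the definition of $L_{K^i}$ in Eq.~\eqref{equation:LK}, and the second follows from $(R_{K^1}R_{K^2}^{*})^{*} = R_{K^2}R_{K^1}^{*}$. For the composition $R_{12}^{*}R_{12} = R_{21}R_{12}$, I would simply compose the representations just derived: for $f \in \H_{K^2}$, set $g = R_{12}f \in \H_{K^1}$, so by the pointwise formula $g(t) = \int_T K^1(t,u) f(u) d\nu(u)$; feeding this into $R_{21}g = \int_T K^2_t\, g(t)\, d\nu(t)$ and interchanging the order of integration produces both the Bochner and pointwise forms claimed for $R_{12}^{*}R_{12}$.

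The only technical point --- and the main (mild) obstacle --- is justifying integrability and the Fubini interchange. This is guaranteed by Assumption 3: since $\|K^i_t\|_{\H_{K^i}} = \sqrt{K^i(t,t)}$ lies in $\Lcal^2(T,\nu)$ with $\|\cdot\|_{\Lcal^2(T,\nu)}\le\kappa_i$, Cauchy--Schwarz gives $\int_T \|K^i_t\|_{\H_{K^i}} |f(t)|\, d\nu(t) \leq \kappa_i \|f\|_{\Lcal^2(T,\nu)}$ for any $f \in \Lcal^2(T,\nu)$, and in particular for $f \in \H_{K^j} \subset \Lcal^2(T,\nu)$. Thus all Bochner integrals converge absolutely and the joint $\nu\otimes\nu$-integrability of $(t,u)\mapsto K^2(x,t)K^1(t,u)f(u)$ needed for Fubini in the $R_{12}^{*}R_{12}$ formula follows from the same Cauchy--Schwarz bound applied twice.
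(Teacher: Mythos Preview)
Your proposal is correct and follows essentially the same route as the paper: the paper simply declares the lemma ``straightforward to verify'' (and in a commented-out proof sketch does exactly what you do, composing the integral representation of $R_{21}$ with the pointwise formula for $R_{12}f$). If anything, your treatment is more thorough than the paper's, since you explicitly invoke the identification of $R_{K^j}^{*}$ with the inclusion map and spell out the Cauchy--Schwarz bounds that justify the Bochner integrals and the Fubini interchange.
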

We remark that with $f \in \Lcal^2(T,\nu)$, 
$C_{K^1}f(t) = \int_{T}K^1(t,u)f(u)d\nu(u)$ and
\begin{align}
	(C_{K^2}C_{K^1}f)(x) &= \int_{T}K^2(x,t)(C_{K^1}f)(t)d\nu(t) 
	=\int_{T\times T}K^2(x,t)K^1(t,u)f(u)d\nu(u)d\nu(t).
\end{align}
Thus for $f \in \H_{K^2}$, $C_{K^2}C_{K^1}f = R_{12}^{*}R_{12}f \in \H_{K^2}$, however
$C_{K^2}C_{K^1}: \Lcal^2(T,\nu) \mapto \Lcal^2(T,\nu)$ is generally {\it not} self-adjoint, whereas $R_{12}^{*}R_{12} 
\in \Sym^{+}(\H_{K^2})$.
\begin{lemma}
	[\textbf{Hilbert-Schmidt norm}]
	\label{lemma:R12-HS}
	Under Assumptions 1-3,
	$R_{ij} \in \HS(\H_{K^j}, \H_{K^i})$, with
	$||R_{ij}||_{\HS(\H_{K^j}, \H_{K^i})} \leq \kappa_i\kappa_j$, $i,j=1,2$.
\end{lemma}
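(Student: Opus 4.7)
The plan is to exploit the integral representation from the preceding lemma, $R_{ij} = \int_T (K^i_t \otimes K^j_t)\,d\nu(t)$, viewed as a Bochner integral in the Hilbert space $\HS(\H_{K^j}, \H_{K^i})$, and then combine the rank-one formula for the Hilbert--Schmidt norm with Cauchy--Schwarz.

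First, I would record the standard identity that for any $u \in \H_{K^i}$ and $v \in \H_{K^j}$, the rank-one operator $u \otimes v: \H_{K^j} \to \H_{K^i}$ defined by $(u \otimes v)f = u\,\la v,f\ra_{\H_{K^j}}$ satisfies $\|u \otimes v\|_{\HS(\H_{K^j},\H_{K^i})} = \|u\|_{\H_{K^i}}\|v\|_{\H_{K^j}}$. Applying this pointwise together with the reproducing property $\|K^i_t\|_{\H_{K^i}}^2 = K^i(t,t)$ gives
\begin{align}
\|K^i_t \otimes K^j_t\|_{\HS(\H_{K^j},\H_{K^i})} = \sqrt{K^i(t,t)\,K^j(t,t)}, \qquad t \in T.
\end{align}
Continuity of $K^i, K^j$ together with Assumption 1 ensures that $t \mapsto K^i_t \otimes K^j_t$ is strongly measurable into $\HS(\H_{K^j}, \H_{K^i})$, and the estimate just displayed combined with Assumption 3 and Cauchy--Schwarz (below) shows its norm is $\nu$-integrable, so the Bochner integral is well-defined in $\HS(\H_{K^j},\H_{K^i})$ and agrees with the operator $R_{ij}$ already identified.

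Second, I would apply the triangle inequality for the Bochner integral and then Cauchy--Schwarz in $\Lcal^2(T,\nu)$:
\begin{align}
\|R_{ij}\|_{\HS(\H_{K^j},\H_{K^i})} &\leq \int_T \sqrt{K^i(t,t)\,K^j(t,t)}\,d\nu(t) \\
&\leq \left(\int_T K^i(t,t)\,d\nu(t)\right)^{1/2}\!\left(\int_T K^j(t,t)\,d\nu(t)\right)^{1/2} \leq \kappa_i \kappa_j,
\end{align}
where the last inequality uses Assumption 3. This proves both that $R_{ij} \in \HS(\H_{K^j},\H_{K^i})$ and the desired bound. No step is a genuine obstacle: the only mild technical point is the justification of the Bochner integral representation, which is already essentially handled in the setup of the previous lemma and is controlled by the same integrability estimate used in the main bound.
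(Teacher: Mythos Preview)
Your argument is correct. The route differs from the paper's in one structural respect: the paper computes $\|R_{12}\|_{\HS}^2$ directly from the definition $\sum_k\|R_{12}e_k\|^2$, pulls the norm inside the vector-valued integral termwise, applies Cauchy--Schwarz in $\Lcal^2(T,\nu)$ to each summand, and then uses Monotone Convergence and Parseval to collapse $\sum_k|\la e_k,K^2_x\ra|^2 = K^2(x,x)$. You instead work one level higher: you apply the triangle inequality for the Bochner integral directly in the Banach space $\HS(\H_{K^j},\H_{K^i})$, combined with the rank-one identity $\|u\otimes v\|_{\HS}=\|u\|\,\|v\|$, and only then invoke Cauchy--Schwarz. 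Your approach is shorter and avoids the orthonormal-basis bookkeeping and the Monotone Convergence step, at the modest cost of having to justify that the operator-valued map $t\mapsto K^i_t\otimes K^j_t$ is Bochner integrable in $\HS$ (which, as you note, follows from continuity/separability and the same integrability estimate). The paper's approach is more elementary in that it never leaves the level of vector-valued integrals in $\H_{K^1}$. Both land on the identical bound.
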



\begin{lemma}
	[\textbf{Empirical RKHS covariance and cross-covariance operators}]
	\label{lemma:R12-empirical}
	Let $\Xbf = \{x_1, \ldots, x_m\}$ in $T^m$. Define the empirical integral operators
	$R_{ij,\Xbf}:\H_{K^j} \mapto \H_{K^i}$ $i,j=1,2$, by
	\begin{align}
		R_{ij,\Xbf} & = \frac{1}{m}\sum_{k=1}^m(K^i_{x_k} \otimes K^j_{x_k}): \H_{K^j} \mapto \H_{K^i},
		\\
		R_{ij,\Xbf}f &= \frac{1}{m}\sum_{k=1}^mK^i_{x_k}\la f, K^j_{x_k}\ra_{\H_{K^j}} = \frac{1}{m}\sum_{k=1}^mf(x_k)K^i_{x_k}, \; f\in \H_{K^j},
%
	\end{align}
	Then $R_{ii,\Xbf} = L_{K^i,\Xbf}$, $R_{12,\Xbf}^{*} = R_{21,\Xbf}$, and
	the operator $R_{12,\Xbf}^{*}R_{12,\Xbf}: \H_{K^2} \mapto \H_{K^2}$ is given by
	\begin{align}
		R_{12,\Xbf}^{*}R_{12,\Xbf}f = \frac{1}{m^2}\sum_{i,j=1}^mf(x_i)K^1(x_i, x_j)K^2_{x_j} \in \H_{K^2,\Xbf}.
	\end{align}
Thus $R_{12,\Xbf}^{*}R_{12,\Xbf}: \H_{K^2} \mapto \H_{K^2,\Xbf}$
	and on the subspace $\H_{K^2, \Xbf}$, in $\myspan\{K^2_{x_j}\}_{j=1}^m$, 
	$R_{12,\Xbf}^{*}R_{12,\Xbf}: \H_{K^2,\Xbf} \mapto \H_{K^2,\Xbf}$ has matrix representation 
	$\frac{1}{m^2}K^1[\Xbf]K^2[\Xbf]$.
\end{lemma}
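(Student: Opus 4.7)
The lemma's four claims are all direct structural facts about the finite-rank operators $R_{ij,\Xbf}$, so the plan is essentially a routine verification built on the reproducing property $\la f, K^i_x\ra_{\H_{K^i}} = f(x)$ for $f \in \H_{K^i}$. I would carry the four claims out in the order they are stated.

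First, the identity $R_{ii,\Xbf} = L_{K^i,\Xbf}$ is immediate: comparing the definition of $R_{ij,\Xbf}$ with $j=i$ to the previously defined $L_{K^i,\Xbf} = \frac{1}{m}\sum_k K^i_{x_k}\otimes K^i_{x_k}$ gives equality on the nose. Second, to prove $R_{12,\Xbf}^{*} = R_{21,\Xbf}$, I would compute, for $f \in \H_{K^2}$ and $g \in \H_{K^1}$,
\begin{align*}
\la R_{12,\Xbf} f,\, g\ra_{\H_{K^1}} &= \tfrac{1}{m}\sum_{k=1}^m f(x_k)\,\la K^1_{x_k}, g\ra_{\H_{K^1}} = \tfrac{1}{m}\sum_{k=1}^m f(x_k)g(x_k),\\
\la f,\, R_{21,\Xbf} g\ra_{\H_{K^2}} &= \tfrac{1}{m}\sum_{k=1}^m g(x_k)\,\la f, K^2_{x_k}\ra_{\H_{K^2}} = \tfrac{1}{m}\sum_{k=1}^m f(x_k)g(x_k),
\end{align*}
by the reproducing property in $\H_{K^1}$ and $\H_{K^2}$ respectively; these agree, so the two operators are adjoints.

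Third, I would compute $R_{12,\Xbf}^{*}R_{12,\Xbf} f$ by substitution. Writing $R_{12,\Xbf} f = \tfrac{1}{m}\sum_i f(x_i) K^1_{x_i}$ and then applying $R_{21,\Xbf}$ to this element of $\H_{K^1}$ gives
\begin{equation*}
R_{12,\Xbf}^{*}R_{12,\Xbf} f = \tfrac{1}{m}\sum_{j=1}^m \bigl(R_{12,\Xbf}f\bigr)(x_j)\,K^2_{x_j} = \tfrac{1}{m^2}\sum_{i,j=1}^m f(x_i)\,K^1(x_i,x_j)\,K^2_{x_j},
\end{equation*}
where I use $K^1_{x_i}(x_j) = K^1(x_j,x_i) = K^1(x_i,x_j)$. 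The right-hand side is a linear combination of the $K^2_{x_j}$, hence lies in $\H_{K^2,\Xbf}$, so the mapping property $R_{12,\Xbf}^{*}R_{12,\Xbf}:\H_{K^2}\to \H_{K^2,\Xbf}$ is established.

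For the fourth claim, I would apply the formula above to the spanning family $f = K^2_{x_k}$, using $K^2_{x_k}(x_i) = K^2(x_i,x_k)$, to obtain
\begin{equation*}
R_{12,\Xbf}^{*}R_{12,\Xbf}\,K^2_{x_k} = \sum_{j=1}^m A_{jk}\,K^2_{x_j},\qquad A_{jk} = \tfrac{1}{m^2}\sum_{i=1}^m K^1(x_j,x_i)\,K^2(x_i,x_k),
\end{equation*}
so the matrix of coefficients is $A = \frac{1}{m^2}K^1[\Xbf]K^2[\Xbf]$, as claimed. The only mildly subtle point, and the one I would flag as the sole real obstacle, is that $\{K^2_{x_j}\}_{j=1}^m$ need not be linearly independent, so this matrix is defined only as a representation of the restricted operator on $\H_{K^2,\Xbf}$; I would note that any choice of coordinates in terms of this spanning set yields the same collection of nonzero eigenvalues, which is the only fact that will be used in later sections (paralleling the discussion of $\tfrac{1}{m}K[\Xbf]$ for $L_{K,\Xbf}$ given above).
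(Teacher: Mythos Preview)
Your proposal is correct and follows essentially the same route as the paper's proof: both verify the adjoint relation via the symmetric pairing $\tfrac{1}{m}\sum_k f(x_k)g(x_k)$, compute $R_{12,\Xbf}^{*}R_{12,\Xbf}f$ by direct substitution using the reproducing property, and then specialize to $f=K^2_{x_k}$ to read off the matrix $\tfrac{1}{m^2}K^1[\Xbf]K^2[\Xbf]$. Your added remark about the potential linear dependence of $\{K^2_{x_j}\}$ and why only the nonzero eigenvalues matter is a sensible clarification that the paper leaves implicit.
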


\section{Estimation of Sinkhorn divergence from finite covariance matrices}
\label{section:Sinkhorn-estimate-known-covariance-function}

{\bf Main goal}. {\it Assume Assumptions 1-4. Our main goal in this work is to estimate
	$W_2[\Ncal(0, C_{K^1}), \Ncal(0, C_{K^2})]$ and
	$\Srm^{\ep}_2[\Ncal(0, C_{K^1}), \Ncal(0, C_{K^2})]$ given finite samples
	$\{\{\xi^1_i(x_j)\}_{i=1}^{N_1}, \{\xi^2_i(x_j)\}_{i=1}^{N_2}\}_{j=1}^m$
	from the two processes $\xi^1,\xi^2$ on 
	the set of points $\Xbf=(x_j)_{j=1}^m$ in $T$.
These correspond to $N_i$ realizations of process $\xi^i$, $i=1,2$, sampled at the $m$ points in $T$ given by $\Xbf$.}

Let $\Xbf = (x_i)_{i=1}^m$ be independently sampled from $(T,\nu)$. 
The Gaussian process assumption $\xi^i \sim \GP(0,K^i)$ means that $(\xi^i(., x_j))_{j=1}^m$ are $m$-dimensional Gaussian random variables,
with $(\xi^i(., x_j))_{j=1}^m \sim \Ncal(0, K^i[\Xbf])$, where
$(K^i[\Xbf])_{jk} = K^i(x_j, x_k)$, $1 \leq j,k \leq m$.
We first assume that the covariance matrices $K^i[\Xbf]$ are {\it known}. 
%
%
In this section, we show that 
\begin{align*}
\Srm^{\ep}_2\left[\Ncal\left(0, \frac{1}{m}K^1[\Xbf]\right), \Ncal\left(0, \frac{1}{m}K^2[\Xbf]\right)\right]
\;\text{\it consistently estimates}\;
\Srm^{\ep}_2[\Ncal(0, C_{K^1}), \Ncal(0,C_{K^2})].
\end{align*}

Let $\H$ be any separable Hilbert space. Let $c \in \R, c \neq 0$ be fixed. Consider the following function $G:\Sym^{+}(\H) \cap \Tr(\H)\mapto \R$
defined by
\begin{align}
\label{equation:function-G}
G(A) &= \trace[M(A)] - \log\det\left(I + \frac{1}{2}M(A)\right),\;\;
\text{where } M(A) = -I + (I+ c^2A)^{1/2}.
\end{align}
With this definition, with $c = \frac{4}{\ep}$, $\Srm^{\ep}_2[\Ncal(0, C_{K^1}), \Ncal(0, C_{K^2})]$ can be expressed as 
\begin{align}
	\label{equation:Sinkhorn-expression-1}
	\Srm^{\ep}_2[\Ncal(0, C_{K^1}), \Ncal(0, C_{K^2})] = \frac{1}{c}\left[G(C_{K^1}^2) + G(C_{K^2}^2) - 2G(C_{K^1}^{1/2}C_{K^2}C_{K^1}^{1/2})\right].
\end{align}
We now represent this via 
the RKHS covariance and cross-covariance operators
in Section 
\ref{section:kernels-covariance}.

{\bf RKHS covariance operator terms}.
Since $C_{K^i}\in\Tr(\Lcal^2(T,\nu))$ and $L_{K^i} \in \Tr(\H_{K^i})$, $i=1,2$, have the same nonzero eigenvalues,
we have $G(C_{K^i}^2) = G(L_{K^i}^2)$,
which can be approximated by their
empirical versions $G(L_{K^i,\Xbf}^2)$, $i=1,2$, which are the same as $G(\frac{1}{m^2}(K^i[\Xbf])^2)$.


{\bf RKHS cross-covariance operator term}.
Consider the term $G(C_{K^1}^{1/2}C_{K^2}C_{K^1}^{1/2})$ in Eq.
\eqref{equation:Sinkhorn-expression-1}.
Recall that 
$R_{K^i}: \Lcal^2(T, \nu) \mapto \H_{K^i}$,
$R_{K^i}^{*}:\H_{K^i} \mapto \Lcal^2(T,\nu)$,
with $C_{K^i} = R_{K^i}^{*}R_{K^i}:\Lcal^2(T,\nu)\mapto \Lcal^2(T,\nu)$. 
The nonzero eigenvalues of $C_{K^1}^{1/2}C_{K^2}C_{K^1}^{1/2}$ are the same as those of
\begin{align}
	C_{K^1}C_{K^2} = (R_{K^1}^{*}R_{K^1})(R_{K^2}^{*}R_{K^2}): \Lcal^2(T, \nu) \mapto \Lcal^2(T, \nu),
\end{align}
which, in turns, are the same as the nonzero eigenvalues of the operator 
\begin{align}
	&R_{12}^{*}R_{12} = R_{K^2}(R_{K^1}^{*}R_{K^1})R_{K^2}^{*}	: \H_{K^2} \mapto \H_{K^2},
\end{align}
or equivalently, of $R_{21}^{*}R_{21} = R_{K^1}(R_{K^2}^{*}R_{K^2})R_{K^1}^{*}: \H_{K^1} \mapto \H_{K^1}$.
Thus $G(C_{K^1}^{1/2}C_{K_2}C_{K_1}^{1/2}) = G(R_{12}^{*}R_{12})$, with the empirical version being 
$G(R_{12,\Xbf}^{*}R_{12,\Xbf})$.
By Lemma \ref{lemma:R12-empirical}, the nonzero eigenvalues of
$R_{12,\Xbf}^{*}R_{12,\Xbf}$
are 
those of 
$\frac{1}{m^2}K^1[\Xbf]K^2[\Xbf]$, or equivalently, of 
$\frac{1}{m^2}(K^{1}[\Xbf])^{1/2}\times$ $K^2[\Xbf](K^{1}[\Xbf])^{1/2}$. Thus $G(R_{12,\Xbf}^{*}R_{12,\Xbf}) = G(\frac{1}{m^2}(K^{1}[\Xbf])^{1/2}K^2[\Xbf]K^{1}([\Xbf])^{1/2})$. We thus have

\begin{proposition}
	[\textbf{RKHS covariance and cross-covariance operator representation for Sinkhorn divergence}]
	\label{propopsition:Sinkhorn-cross-covariance}
	Under Assumptions 1-4, let $\Xbf = (x_i)_{i=1}^m \in T^m$. Then
\begin{align}
&\Srm^{\ep}_2[\Ncal(0, C_{K^1}), \Ncal(0, C_{K^2})] = \frac{1}{c}\left[G({L_{K^1}^2}) + G(L_{K^2}^2) - 2 G(R_{12}^{*}R_{12})\right].
\label{equation:Sinkhorn-cross-cov-theoretical}
\\
&\Srm^{\ep}_2\left[\Ncal\left(0, \frac{1}{m}K^1[\Xbf]\right), \Ncal\left(0, \frac{1}{m}K^2[\Xbf]\right)\right]
\nonumber
\\
&\quad 
= \frac{1}{c}\left[G({L_{K^1,\Xbf}^2}) + G(L_{K^2,\Xbf}^2) - 2 G(R_{12,\Xbf}^{*}R_{12,\Xbf})\right].
	\label{equation:Sinkhorn-cross-cov-empirical}
\end{align}
\end{proposition}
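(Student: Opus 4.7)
The plan is to reduce both identities to the observation that the functional $G$ of Eq.~\eqref{equation:function-G} is \emph{spectral}: for $A\in\Sym^{+}(\H)\cap\Tr(\H)$, the value $G(A)$ depends only on the nonzero eigenvalues of $A$, counted with multiplicity. Granting spectrality, each required substitution becomes an assertion that two positive trace-class operators share the same multiset of nonzero eigenvalues, which is supplied by the standard identities that $R^{*}R$ and $RR^{*}$, as well as $XY$ and $YX$, share their nonzero eigenvalues.

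First, with $c=4/\ep$ one has $M^{\ep}_{ij}=M(C_i^{1/2}C_jC_i^{1/2})$ and hence $M^{\ep}_{ii}=M(C_i^{2})$, so grouping the trace and $\log\det$ contributions in Theorem~\ref{theorem:OT-regularized-Gaussian} for the centered case $m_0=m_1=0$ yields the compact form \eqref{equation:Sinkhorn-expression-1}. Spectrality of $G$ then follows from the functional calculus: $M(A)$ is a positive trace-class operator whose nonzero eigenvalues are $\{-1+\sqrt{1+c^{2}\lambda_k}\}$ for the nonzero eigenvalues $\{\lambda_k\}$ of $A$, while the kernel of $A$ is contained in the kernel of $M(A)$. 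Consequently $\trace M(A)$ and the Fredholm determinant $\det(I+\tfrac{1}{2}M(A))=\prod_k(1+\tfrac{1}{2}(-1+\sqrt{1+c^{2}\lambda_k}))$ are determined entirely by $\{\lambda_k\}$, giving $G(A)=G(B)$ whenever $A$ and $B$ (possibly acting on different Hilbert spaces) are positive trace class with identical nonzero spectra.

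Next I identify the three terms in \eqref{equation:Sinkhorn-cross-cov-theoretical}. Since $C_{K^i}=R_{K^i}^{*}R_{K^i}$ and $L_{K^i}=R_{K^i}R_{K^i}^{*}$ share nonzero spectra, so do their squares, giving $G(C_{K^i}^{2})=G(L_{K^i}^{2})$. For the cross term, $C_{K^1}^{1/2}C_{K^2}C_{K^1}^{1/2}$ has the same nonzero eigenvalues as $C_{K^1}C_{K^2}=(R_{K^1}^{*}R_{K^1}R_{K^2}^{*})(R_{K^2})$, which in turn has the same nonzero eigenvalues as $R_{K^2}(R_{K^1}^{*}R_{K^1}R_{K^2}^{*})=R_{K^2}R_{K^1}^{*}R_{K^1}R_{K^2}^{*}=R_{12}^{*}R_{12}\in\Sym^{+}(\H_{K^2})\cap\Tr(\H_{K^2})$ by the cyclic identity. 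Spectrality therefore yields $G(C_{K^1}^{1/2}C_{K^2}C_{K^1}^{1/2})=G(R_{12}^{*}R_{12})$, and substituting the three identifications into \eqref{equation:Sinkhorn-expression-1} delivers \eqref{equation:Sinkhorn-cross-cov-theoretical}.

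For the empirical identity \eqref{equation:Sinkhorn-cross-cov-empirical}, I apply Theorem~\ref{theorem:OT-regularized-Gaussian} to the finite-dimensional centered Gaussians $\Ncal(0,\tfrac{1}{m}K^i[\Xbf])$ on $\R^m$ to obtain the same compact expression with the Euclidean covariance $\tfrac{1}{m}K^i[\Xbf]$ in place of $C_{K^i}$. The discussion preceding Lemma~\ref{lemma:R12-empirical} and the lemma itself identify the nonzero eigenvalues of $L_{K^i,\Xbf}$ with those of $\tfrac{1}{m}K^i[\Xbf]$, and those of $R_{12,\Xbf}^{*}R_{12,\Xbf}$ with those of $\tfrac{1}{m^{2}}K^1[\Xbf]K^2[\Xbf]$; the cyclic argument of the previous paragraph matches the latter with those of $(\tfrac{1}{m}K^1[\Xbf])^{1/2}(\tfrac{1}{m}K^2[\Xbf])(\tfrac{1}{m}K^1[\Xbf])^{1/2}$, and a final appeal to spectrality gives \eqref{equation:Sinkhorn-cross-cov-empirical}. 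The only bookkeeping subtlety is that the intermediate $C_{K^1}C_{K^2}$ (and analogously $\tfrac{1}{m^{2}}K^1[\Xbf]K^2[\Xbf]$) is not self-adjoint, so $G$ is not directly evaluated there; it appears only as a bridge between the two self-adjoint positive endpoints whose nonzero spectra are being matched.
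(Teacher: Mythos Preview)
Your proposal is correct and follows essentially the same route as the paper: the argument in the paper is given in the running text immediately preceding Proposition~\ref{propopsition:Sinkhorn-cross-covariance}, and it likewise reduces everything to the fact that $G$ depends only on the nonzero eigenvalues of its argument, then matches spectra via $C_{K^i}=R_{K^i}^{*}R_{K^i}$ versus $L_{K^i}=R_{K^i}R_{K^i}^{*}$ and the cyclic identity $C_{K^1}C_{K^2}\leftrightarrow R_{12}^{*}R_{12}$, with Lemma~\ref{lemma:R12-empirical} handling the empirical version. Your write-up is slightly more explicit in justifying the spectrality of $G$ (via the functional calculus and the Fredholm determinant product), which the paper uses but does not spell out.
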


The representations in Proposition \ref{propopsition:Sinkhorn-cross-covariance} suggest
that, given a random sample $\Xbf = (x_i)_{i=1}^m$, $\Srm^{\ep}_2\left[\Ncal\left(0, \frac{1}{m}K^1[\Xbf]\right), \Ncal\left(0, \frac{1}{m}K^2[\Xbf]\right)\right] \approach
\Srm^{\ep}_2[\Ncal(0, C_{K^1}), \Ncal(0, C_{K^2})]$ as $m \approach \infty$.
We now analyze the rate of this convergence.
The function $G$ as defined in Eq.\eqref{equation:function-G} satisfies the following
\begin{proposition}
	\label{proposition:continuity-trace-norm-G}
	Let $\H,\H_1, \H_2$ be separable Hilbert spaces. Then
	\begin{align}
		|G(A) - G(B)| &\leq \frac{3c^2}{4}||A-B||_{\tr} \;\;\;\forall A,B \in \Sym^{+}(\H)\cap \Tr(\H).
\\
		|G(A^2) - G(B^2)| &\leq \frac{3c^2}{4}[||A||_{\HS} + ||B||_{\HS}]||A-B||_{\HS}, \;\;\;\forall A,B \in \Sym(\H) \cap \HS(\H).
		\label{equation:G-A2}
	\end{align}
	Let $A,B \in \HS(\H_1,\H_2)$, then
	$A^{*}A, B^{*}B \in \Sym^{+}(\H_1) \cap \Tr(\H_1)$ and
	\begin{align}
		\label{equation:G-AstarA}
		|G(A^{*}A) - G(B^{*}B)| \leq \frac{3c^2}{4}[||A||_{\HS(\H_1, \H_2)} + ||B||_{\HS(\H_1, \H_2)}]||A-B||_{\HS(\H_1,\H_2)}. 
	\end{align}
\end{proposition}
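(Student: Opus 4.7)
The plan is to bound $G$ directly by separately estimating its two constituent pieces, then to convert those bounds to the $A^2$ and $A^{*}A$ settings via elementary algebraic identities for trace-class products. Introduce the shorthand $Y_A = (I+c^2 A)^{1/2}$, so that $M(A) = Y_A - I$ and $I + \tfrac{1}{2}M(A) = \tfrac{1}{2}(I+Y_A)$. Because $A,B \in \Sym^{+}(\H) \cap \Tr(\H)$, both $Y_A$ and $(I+Y_A)/2$ are bounded below by $I$; this uniform spectral lower bound is the key hypothesis that lets the square-root and the logarithm behave like ordinary Lipschitz functions at the operator level.

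For the trace piece, the core estimate is $||Y_A - Y_B||_{\tr} \leq \tfrac{c^2}{2}||A - B||_{\tr}$. I would obtain this by integrating the Fr\'echet derivative of the operator square root along the linear path $Z(t) = I + c^2[(1-t)A + tB]$, $t \in [0,1]$, which satisfies $Z(t) \geq I$. The Daleckii--Krein integral representation
\begin{align*}
D(Z^{1/2})(H) = \int_0^\infty e^{-s Z^{1/2}}\, H\, e^{-s Z^{1/2}}\, ds,
\end{align*}
together with $||e^{-s Z^{1/2}}|| \leq e^{-s}$ and $||X H X||_{\tr} \leq ||X||^2\,||H||_{\tr}$, yields $||D(Z^{1/2})(H)||_{\tr} \leq \tfrac{1}{2}||H||_{\tr}$. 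Integrating in $t$ with $H = c^2(B - A)$ gives the claim, whence $|\trace M(A) - \trace M(B)| \leq ||Y_A - Y_B||_{\tr} \leq \tfrac{c^2}{2}||A-B||_{\tr}$.

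For the log-det piece, the trace-class property of $M(A)$ and $M(B)$ lets one rewrite $\log\det(I+\tfrac{1}{2}M(A)) = \trace\log[(I+Y_A)/2]$. On the spectrum of $(I+Y_A)/2$, which lies in $[1,\infty)$, the scalar function $\log$ has derivative at most $1$. Using the analogous formula $D\log(W)(H) = \int_0^\infty (sI+W)^{-1} H (sI+W)^{-1}\, ds$ with $W \geq I$, so that $||(sI+W)^{-1}|| \leq (s+1)^{-1}$, one obtains $||\log W_A - \log W_B||_{\tr} \leq ||W_A - W_B||_{\tr}$ for $W_A, W_B := (I+Y_A)/2,(I+Y_B)/2$. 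Since $W_A - W_B = \tfrac{1}{2}(Y_A - Y_B)$, the previous step bounds the log-det contribution by $\tfrac{c^2}{4}||A-B||_{\tr}$. Adding the two pieces gives $|G(A) - G(B)| \leq \tfrac{3c^2}{4}||A-B||_{\tr}$.

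The inequalities \eqref{equation:G-A2} and \eqref{equation:G-AstarA} then follow by applying this first bound to the pairs $(A^2, B^2)$ and $(A^{*}A, B^{*}B)$, combined with the factorizations $A^2 - B^2 = A(A-B) + (A-B)B$ and $A^{*}A - B^{*}B = A^{*}(A-B) + (A-B)^{*}B$, together with the standard trace--Hilbert--Schmidt inequality $||XY||_{\tr} \leq ||X||_{\HS}||Y||_{\HS}$. The only real technical subtlety, and what I expect to be the main obstacle, is the rigorous justification of the Fr\'echet-derivative/path-integration argument in the infinite-dimensional trace-class setting; the uniform spectral lower bounds $Z(t) \geq I$ and $W(t) \geq I$ are precisely what ensure the derivative integrals converge in $||\cdot||_{\tr}$ and that $\log W_A, \log W_B$ are themselves trace-class perturbations of $0$.
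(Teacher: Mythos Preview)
Your argument is correct and yields exactly the same constants as the paper, but the route to part~(i) is genuinely different. The paper does not prove the two core operator inequalities from scratch: it simply invokes Corollary~\ref{corollary:trace-square-root-trace-norm} (a consequence of $||(I+A)^r - (I+B)^r||_{\tr} \le r||A-B||_{\tr}$ for $0\le r\le 1$, cited from \cite{Minh:2021EntropicConvergenceGaussianMeasures}) for the $\tfrac{c^2}{2}$ trace contribution, and Lemma~\ref{lemma:logdet-square-root-trace-norm} (also cited from \cite{Minh:2021EntropicConvergenceGaussianMeasures}) for the $\tfrac{c^2}{4}$ log-det contribution. You instead reprove both estimates via integral representations of the Fr\'echet derivatives of $Z\mapsto Z^{1/2}$ (through the Sylvester equation $X H_X + H_X X = H$ solved by $H_X = \int_0^\infty e^{-sX}He^{-sX}\,ds$) and of $W\mapsto\log W$, exploiting the uniform lower bound $Z(t),W(t)\ge I$ along the interpolation path to control the operator norms inside the integrals. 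Your approach is self-contained and makes transparent \emph{why} the constants $\tfrac12$ and $\tfrac14$ appear; the paper's is shorter because it defers the analytic work to the earlier reference. For parts~(ii) and~(iii) the two proofs coincide: both apply part~(i) to $A^2-B^2$ and $A^{*}A-B^{*}B$ and then factor these differences to pass from trace norm to a product of Hilbert--Schmidt norms.
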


By Proposition \ref{proposition:continuity-trace-norm-G},
we thus need to estimate $||L_{K^i,\Xbf}-L_{K^i}||_{\HS(\H^i)}$, $i=1,2$ and $||R_{12,\Xbf}-R_{12}||_{\HS(\H_{K^2}, \H_{K^1})}$.
We apply the following law of large numbers for Hilbert space-valued random variables, which
is a consequence of a general result by Pinelis (\cite{Pinelis1994optimum}, Theorem 3.4).
\begin{proposition}
	[\cite{SmaleZhou2007}]
	\label{proposition:Pinelis}
	Let $(Z,\Acal, \rho)$ be a probability space and
	$\xi:(Z,\rho) \mapto \H$ be a random variable.
	Assume that $\exists M > 0$ such that $||\xi|| \leq M < \infty$ almost surely.
	Let $\sigma^2(\xi)= \bE||\xi||^2$. Let $(z_i)_{i=1}^m$ be independently sampled 
	from $(Z, \rho)$.
	$\forall 0 < \delta < 1$, with probability at least $1-\delta$,
	\begin{align}
		\left\|\frac{1}{m}\sum_{i=1}^m\xi(z_i) - \bE\xi \right\| \leq \frac{2M\log\frac{2}{\delta}}{m} + \sqrt{\frac{2\sigma^2(\xi)\log\frac{2}{\delta}}{m}}.
	\end{align}
\end{proposition}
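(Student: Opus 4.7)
The plan is to derive this concentration inequality directly from Pinelis' Bernstein-type inequality for martingales taking values in a $2$-smooth separable Banach space (\cite{Pinelis1994optimum}, Theorem 3.4), specialized to i.i.d.\ sums in a separable Hilbert space $\H$, where the $2$-smoothness constant is $1$. Recall that Pinelis' inequality yields, for independent zero-mean $\H$-valued random variables $\eta_1,\dots,\eta_m$ with $\|\eta_i\|\leq B$ almost surely and $\sum_{i=1}^{m}\bE\|\eta_i\|^2\leq V$,
\begin{align*}
\Pb\left(\left\|\sum_{i=1}^{m}\eta_i\right\|\geq s\right)\leq 2\exp\left(-\frac{s^2}{2(V+Bs/3)}\right),\qquad s>0.
\end{align*}

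First I would center the sample: set $\eta_i=\xi(z_i)-\bE\xi$, so that the $\eta_i$ are i.i.d., zero-mean, and $\H$-valued. The almost-sure bound $\|\xi\|\leq M$ gives $\|\eta_i\|\leq 2M$ and, more importantly,
\begin{align*}
\sum_{i=1}^{m}\bE\|\eta_i\|^2=m\bigl(\bE\|\xi\|^2-\|\bE\xi\|^2\bigr)\leq m\sigma^2(\xi).
\end{align*}

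Next I would apply Pinelis' inequality to $\sum_{i=1}^{m}\eta_i$ with the choice $s=mt$, $V=m\sigma^2(\xi)$, and the bound $B=M$ that is recorded in the tightened i.i.d.\ form used in \cite{SmaleZhou2007} (where one works on each side of the norm with the sharper range constant). This gives
\begin{align*}
\Pb\left(\left\|\frac{1}{m}\sum_{i=1}^{m}\xi(z_i)-\bE\xi\right\|\geq t\right)\leq 2\exp\left(-\frac{mt^2}{2\bigl(\sigma^2(\xi)+Mt/3\bigr)}\right).
\end{align*}

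Finally I would invert: setting the right-hand side equal to $\delta$ yields the quadratic $mt^2=2\log(2/\delta)\bigl(\sigma^2(\xi)+Mt/3\bigr)$ in $t$. Solving and applying $\sqrt{a+b}\leq\sqrt{a}+\sqrt{b}$ to decouple the variance and range contributions produces the stated bound $\frac{2M\log(2/\delta)}{m}+\sqrt{\frac{2\sigma^2(\xi)\log(2/\delta)}{m}}$. The main—indeed essentially only—obstacle is careful bookkeeping of constants: several versions of Pinelis' inequality circulate with different exponents and with $B$ or $2B$ in the denominator, and one must pick the form that delivers the factor $2M/m$ (rather than $4M/(3m)$ or similar) once the centering step and the subadditivity of $\sqrt{\cdot}$ have been applied.
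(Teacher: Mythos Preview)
The paper does not actually prove this proposition: it is quoted verbatim from \cite{SmaleZhou2007}, with the remark that it ``is a consequence of a general result by Pinelis (\cite{Pinelis1994optimum}, Theorem 3.4).'' Your sketch is exactly that derivation---apply Pinelis' Bernstein-type martingale inequality in a Hilbert space to the centered i.i.d.\ summands and invert the tail bound---so you are reproducing the proof the paper only cites. Your caveat about constants is well placed: the step where you replace the naive range bound $B=2M$ by $B=M$ is the one nontrivial point, and it does require the sharper form recorded in \cite{SmaleZhou2007} rather than the crude triangle inequality on $\|\xi(z_i)-\bE\xi\|$.
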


\subsection{Estimation with bounded kernels}
\label{section:bounded-kernels}
We first consider the following setting

{\bf Assumption 5}. $K, K^1, K^2$ are {\it bounded}, i.e.
$\exists \kappa, \kappa_1, \kappa_2 > 0$ such that
\begin{align}
\sup_{x \in T}K(x,x) \leq \kappa^2, \;\; \sup_{x \in T}K^i(x,x) \leq \kappa_i^2, i=1,2.
\end{align}
This is satisfied for exponential kernels $\exp(-a||x-y||^p)$, $a>0, 0 < p \leq 2$, on $T = \R^d$ and for all continuous kernels if $T$ is compact.
Applying Proposition \ref{proposition:Pinelis}, we obtain the following.

\begin{proposition}
	[\textbf{Convergence of RKHS empirical covariance and cross-covariance operators}]
	\label{proposition:concentration-TK2K1-empirical}
	Under Assumptions 1-5,
	$||R_{ij,\Xbf}||_{\HS(\H_{K^j}, \H_{K^i})} \leq \kappa_i \kappa_j$, $i,j=1,2$, $\forall \Xbf \in T^m$.
	Let $\Xbf = (x_i)_{i=1}^m$ be independently sampled from $(T,\nu)$.
	$\forall 0 < \delta < 1$, with probability at least $1-\delta$,
	\begin{align}
		||R_{ij,\Xbf} - R_{ij}||_{\HS(\H_{K^j}, \H_{K^i})} \leq \kappa_i\kappa_j\left[ \frac{2\log\frac{2}{\delta}}{m} + \sqrt{\frac{2\log\frac{2}{\delta}}{m}}\right],
		\\
		||R_{ij,\Xbf}^{*}R_{ij,\Xbf} - R_{ij}^{*}R_{ij}||_{\tr(\H_{K^j})} \leq 2\kappa_i^2\kappa_j^2\left[ \frac{2\log\frac{2}{\delta}}{m} + \sqrt{\frac{2\log\frac{2}{\delta}}{m}}\right].
	\end{align}	
In particular, 
$||L_{K^i,\Xbf}||_{\HS(\H_{K^i})} \leq \kappa_i^2$, 
and $\forall 0 <\delta <1$, 
with probability at least $1-\delta$,
\begin{align}
	\label{equation:LKX-LK-bounded}
	\left\|L_{K^i,\Xbf} - L_{K^i}\right\|_{\HS(\H_{K^i})} \leq \kappa_i^2\left(\frac{2\log\frac{2}{\delta}}{m} + \sqrt{\frac{2\log\frac{2}{\delta}}{m}}\right). 
\end{align}
	\end{proposition}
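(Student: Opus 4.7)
The plan is to cast the empirical operator $R_{ij,\Xbf}$ as a sample mean of i.i.d.\ random variables taking values in the Hilbert space $\HS(\H_{K^j},\H_{K^i})$ and then invoke the Pinelis-type bound in Proposition \ref{proposition:Pinelis}. Concretely, I would define $\eta:(T,\nu)\mapto \HS(\H_{K^j},\H_{K^i})$ by $\eta(x)=K^i_x\otimes K^j_x$, so that by Lemma \ref{lemma:R12-theoretical} and Lemma \ref{lemma:R12-empirical},
\begin{align*}
\bE\,\eta=\int_T (K^i_t\otimes K^j_t)\,d\nu(t)=R_{ij},\qquad \frac{1}{m}\sum_{k=1}^m \eta(x_k)=R_{ij,\Xbf}.
\end{align*}
The key preliminary observation is that the rank-one operator $K^i_x\otimes K^j_x$ has Hilbert-Schmidt norm equal to $\|K^i_x\|_{\H_{K^i}}\|K^j_x\|_{\H_{K^j}}=\sqrt{K^i(x,x)K^j(x,x)}$, which under Assumption~5 is bounded by $\kappa_i\kappa_j$ uniformly in $x\in T$. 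This gives $M=\kappa_i\kappa_j$ in Proposition \ref{proposition:Pinelis} and also $\sigma^2(\eta)=\bE\|\eta\|_{\HS}^2\leq \kappa_i^2\kappa_j^2$. The uniform bound $\|R_{ij,\Xbf}\|_{\HS(\H_{K^j},\H_{K^i})}\leq \kappa_i\kappa_j$ for all $\Xbf\in T^m$ then follows immediately from the triangle inequality applied to the definition of $R_{ij,\Xbf}$.

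Plugging $M=\kappa_i\kappa_j$ and $\sigma^2\leq \kappa_i^2\kappa_j^2$ into Proposition \ref{proposition:Pinelis} yields, with probability at least $1-\delta$,
\begin{align*}
\|R_{ij,\Xbf}-R_{ij}\|_{\HS(\H_{K^j},\H_{K^i})}\leq \frac{2\kappa_i\kappa_j\log\frac{2}{\delta}}{m}+\sqrt{\frac{2\kappa_i^2\kappa_j^2\log\frac{2}{\delta}}{m}},
\end{align*}
which is exactly the stated bound after factoring out $\kappa_i\kappa_j$. The special case $i=j$ gives the bound \eqref{equation:LKX-LK-bounded} for $L_{K^i,\Xbf}-L_{K^i}$, using $R_{ii}=L_{K^i}$ and $R_{ii,\Xbf}=L_{K^i,\Xbf}$ from Lemmas \ref{lemma:R12-theoretical} and \ref{lemma:R12-empirical}.

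For the trace-norm estimate on $R_{ij,\Xbf}^{*}R_{ij,\Xbf}-R_{ij}^{*}R_{ij}$, I would use the standard telescoping decomposition
\begin{align*}
R_{ij,\Xbf}^{*}R_{ij,\Xbf}-R_{ij}^{*}R_{ij}=R_{ij,\Xbf}^{*}(R_{ij,\Xbf}-R_{ij})+(R_{ij,\Xbf}-R_{ij})^{*}R_{ij},
\end{align*}
combined with the inequality $\|AB\|_{\tr}\leq \|A\|_{\HS}\|B\|_{\HS}$. Together with the uniform HS bounds $\|R_{ij,\Xbf}\|_{\HS},\|R_{ij}\|_{\HS}\leq \kappa_i\kappa_j$ (Lemma \ref{lemma:R12-HS}), this gives $\|R_{ij,\Xbf}^{*}R_{ij,\Xbf}-R_{ij}^{*}R_{ij}\|_{\tr}\leq 2\kappa_i\kappa_j\|R_{ij,\Xbf}-R_{ij}\|_{\HS}$, and substituting the high-probability bound from the previous step produces the factor of $2\kappa_i^2\kappa_j^2$ in the stated inequality.

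There is no genuine obstacle here: the whole proof is an application of Pinelis' inequality to the correct Hilbert-space-valued random variable. The only item requiring care is identifying the right ambient Hilbert space (namely $\HS(\H_{K^j},\H_{K^i})$) and computing the HS norm of the rank-one tensor $K^i_x\otimes K^j_x$ so as to land precisely on the constant $\kappa_i\kappa_j$; everything else is triangle inequality and the HS--trace multiplication inequality.
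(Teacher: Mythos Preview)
Your proposal is correct and follows essentially the same approach as the paper: define the $\HS(\H_{K^j},\H_{K^i})$-valued random variable $x\mapsto K^i_x\otimes K^j_x$, bound its norm by $\kappa_i\kappa_j$ via Assumption~5 and Lemma~\ref{lemma:HS-norm-rank-one-product}, apply Proposition~\ref{proposition:Pinelis}, and deduce the trace-norm estimate from the telescoping identity together with $\|A^{*}B\|_{\tr}\leq\|A\|_{\HS}\|B\|_{\HS}$ (which is the content of Corollary~\ref{corollary:trace-norm-AB-different-spaces} in the paper).
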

Proposition \ref{proposition:concentration-TK2K1-empirical} generalizes
Proposition 4 in \cite{Minh:2021EntropicConvergenceGaussianMeasures},
which states the bound in Eq.\eqref{equation:LKX-LK-bounded}.
Combining Propositions \ref{propopsition:Sinkhorn-cross-covariance}, \ref{proposition:continuity-trace-norm-G},
and \ref{proposition:concentration-TK2K1-empirical},
 we are led to our first main result.
\begin{theorem}
	[\textbf{Estimation of Sinkhorn divergence between Gaussian processes from finite covariance matrices - bounded kernels}]
	\label{theorem:Sinkhorn-finite-sample-GaussianProcess}
	Assume Assumptions 1-5. Let $\Xbf = (x_i)_{i=1}^m$ be independently sampled from $(T,\nu)$.
	For any $0< \delta < 1$, with probability at least $1-\delta$,
	\begin{align}
		&\left|\Srm^{\ep}_{2}\left[\Ncal\left(0, \frac{1}{m}K^1[\Xbf]\right), \Ncal\left(0, \frac{1}{m}K^2[\Xbf]\right) - \Srm^{\ep}_{2}\left[\Ncal(0,C_{K^1}), \Ncal(0,C_{K^2} )\right]\right]\right| 
		\nonumber
		\\
		&\quad \leq \frac{6}{\ep}(\kappa_1^2 + \kappa_2^2)^2\left[ \frac{2\log\frac{6}{\delta}}{m} + \sqrt{\frac{2\log\frac{6}{\delta}}{m}}\right].
	\end{align}
\end{theorem}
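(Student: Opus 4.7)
The plan is to decompose the bound through the RKHS representation in Proposition~\ref{propopsition:Sinkhorn-cross-covariance}, apply the trace/Hilbert--Schmidt continuity of the function $G$ from Proposition~\ref{proposition:continuity-trace-norm-G}, and then invoke the concentration bounds of Proposition~\ref{proposition:concentration-TK2K1-empirical} with a union bound. There is no new idea required beyond carefully tracking the constants so that they collapse to $(\kappa_1^2+\kappa_2^2)^2$.

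First I would rewrite the quantity of interest using Proposition~\ref{propopsition:Sinkhorn-cross-covariance} with $c = 4/\ep$, so that
\begin{align*}
&\Srm^{\ep}_{2}\!\left[\Ncal\!\left(0,\tfrac{1}{m}K^1[\Xbf]\right),\Ncal\!\left(0,\tfrac{1}{m}K^2[\Xbf]\right)\right]
-\Srm^{\ep}_{2}\!\left[\Ncal(0,C_{K^1}),\Ncal(0,C_{K^2})\right] \\
&\quad = \tfrac{1}{c}\Bigl\{\bigl[G(L_{K^1,\Xbf}^2)-G(L_{K^1}^2)\bigr] + \bigl[G(L_{K^2,\Xbf}^2)-G(L_{K^2}^2)\bigr] - 2\bigl[G(R_{12,\Xbf}^{*}R_{12,\Xbf})-G(R_{12}^{*}R_{12})\bigr]\Bigr\}.
\end{align*}
The triangle inequality then reduces the problem to bounding three separate differences of $G$-values.

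Next I would control each difference using Proposition~\ref{proposition:continuity-trace-norm-G}. For the two diagonal terms, equation \eqref{equation:G-A2} applied to the self-adjoint Hilbert--Schmidt operators $L_{K^i,\Xbf}$ and $L_{K^i}$ gives
\[
|G(L_{K^i,\Xbf}^2)-G(L_{K^i}^2)| \le \tfrac{3c^2}{4}\bigl(\|L_{K^i,\Xbf}\|_{\HS}+\|L_{K^i}\|_{\HS}\bigr)\,\|L_{K^i,\Xbf}-L_{K^i}\|_{\HS},
\]
while for the cross term equation \eqref{equation:G-AstarA} applied with $A=R_{12,\Xbf}$, $B=R_{12}$ yields
\[
|G(R_{12,\Xbf}^{*}R_{12,\Xbf})-G(R_{12}^{*}R_{12})| \le \tfrac{3c^2}{4}\bigl(\|R_{12,\Xbf}\|_{\HS}+\|R_{12}\|_{\HS}\bigr)\,\|R_{12,\Xbf}-R_{12}\|_{\HS}.
\]
The deterministic norm bounds $\|L_{K^i,\Xbf}\|_{\HS},\|L_{K^i}\|_{\HS}\le \kappa_i^2$ and $\|R_{12,\Xbf}\|_{\HS},\|R_{12}\|_{\HS}\le \kappa_1\kappa_2$ from Proposition~\ref{proposition:concentration-TK2K1-empirical} and Lemma~\ref{lemma:R12-HS} absorb the prefactors into $2\kappa_i^2$ and $2\kappa_1\kappa_2$ respectively.

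Finally I would invoke Proposition~\ref{proposition:concentration-TK2K1-empirical} three times, at confidence level $1-\delta/3$ each, to bound
$\|L_{K^1,\Xbf}-L_{K^1}\|_{\HS}$, $\|L_{K^2,\Xbf}-L_{K^2}\|_{\HS}$, and $\|R_{12,\Xbf}-R_{12}\|_{\HS}$, and take a union bound so that all three estimates hold simultaneously with probability at least $1-\delta$. Writing $\alpha_{m,\delta} = \tfrac{2\log(6/\delta)}{m} + \sqrt{\tfrac{2\log(6/\delta)}{m}}$, the three contributions assemble as
\[
\tfrac{1}{c}\cdot \tfrac{3c^2}{4}\Bigl[(2\kappa_1^2)\kappa_1^2 + (2\kappa_2^2)\kappa_2^2 + 2\cdot (2\kappa_1\kappa_2)\kappa_1\kappa_2\Bigr]\alpha_{m,\delta}
= \tfrac{3c}{2}\bigl(\kappa_1^2+\kappa_2^2\bigr)^{2}\alpha_{m,\delta},
\]
and substituting $c=4/\ep$ produces exactly the claimed constant $6/\ep\cdot(\kappa_1^2+\kappa_2^2)^2$. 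The only step that requires any real care is the bookkeeping in this final aggregation, together with ensuring that $\log\tfrac{2}{\delta/3} = \log\tfrac{6}{\delta}$ is propagated through all three concentration events; everything else is a direct invocation of already-stated results.
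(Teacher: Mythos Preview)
Your proposal is correct and follows essentially the same route as the paper's own proof: decompose via Proposition~\ref{propopsition:Sinkhorn-cross-covariance}, bound each $G$-difference using Proposition~\ref{proposition:continuity-trace-norm-G} together with the deterministic norm bounds, apply Proposition~\ref{proposition:concentration-TK2K1-empirical} to each of the three events at level $\delta/3$, and combine by a union bound. The paper writes the union bound via an inclusion--exclusion identity, but the argument and constants are identical to yours.
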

{\bf Discussion}. 
Similar results to Theorem \ref{theorem:Sinkhorn-finite-sample-GaussianProcess} were reported in 
(\cite{mallasto2021estimating}, Theorem 8). However, the main theoretical analysis in \cite{mallasto2021estimating} is {flawed}, in particular Proposition 2 there. We note that  
in the last term of Eq.\eqref{equation:Sinkhorn-expression-1}, $C_{K^i}\in \Tr(\Lcal^2(T,\nu))$ {\it cannot} be replaced by $L_{K^i} \in \Tr(\H_{K^i})$.
This is because in general $\H_{K^1}$ and $\H_{K^2}$ are {\it different}.
For example, let $T\subset \R^n$ be compact with nonempty interior, $K^1$ be the Gaussian kernel
$K(x,y) = \exp(-\sigma ||x-y||^2), \sigma > 0$, and $K^2$ be any polynomial kernel. 
Then $\H_{K^1}$ does {\it not} contain any polynomial \cite{MinhGaussian2010}, that is $\H_{K^1}\cap \H_{K^2} = \{0\}$.
Thus, while $C_{K^1}C_{K^2}$ and $(C_{K^1})^{1/2}C_{K^2}(C_{K^1})^{1/2}$ are well-defined on $\Lcal^2(T,\nu)$,  the products $L_{K^1}^{1/2}L_{K^2}L_{K^1}^{1/2}$ and $L_{K^1}L_{K^2}$ are generally {\it not} defined.

Furthermore, for any compact operators $A_i \in \Lcal(\H)$, $i=1,2$, the operators $A_i^{*}A$ and
$A_iA_i^{*}$ have the same nonzero eigenvalues, but this is generally {\it not} true
for the products $A_1^{*}A_1A_2^{*}A_2$ and $A_1A_1^{*}A_2A_2^{*}$, which generally have different nonzero eigenvalues. This can be readily verified numerically when $\H=\R^n$.

Thus (\cite{mallasto2021estimating}, Proposition 2) and the most crucial step in its proof, which substitutes $C_{K^i}$ by $L_{K^i}$, is {\it not} valid. 
We note also that Kato's Theorem (\cite{mallasto2021estimating}, Theorem 1) does {not} apply to 
operators of the form $AB$, $A,B\in \Sym^{+}(\H)$ compact, since $AB$ is generally {not} self-adjoint.

\subsection{Estimation with general kernels}
\label{section:general-kernels}
Consider the following more general setting, where the kernels are {\it not} necessarily bounded,
e.g. polynomial kernels on $T = \R^d$, $d \in \Nbb$. In this case, the sample bounds are less tight 
compared to those in Section \ref{section:bounded-kernels}.

{\bf Assumption 6}
$\exists \kappa, \kappa_1, \kappa_2 > 0$ such that
\begin{align}
	\int_{T}[K(x,x)]^2d\nu(x) \leq \kappa^4, \;\; \int_{T}[K^i(x,x)]^2d\nu(x) \leq \kappa_i^4, \; i=1,2.
\end{align}
This is related to the fourth moments of the processes (Assumption 6(*) in Section \ref{section:general-stochastic-process}) and implies \eqref{equation:Assumption-3} in Assumption 3.
The following is the corresponding version of Proposition \ref{proposition:concentration-TK2K1-empirical}.
\begin{proposition}
	\label{proposition:concentration-TK2K1-empirical-unbounded}
	Under Assumptions 1-4 and 6,
	let $\Xbf = (x_i)_{i=1}^m$ be independently sampled from $(T,\nu)$.
	For any $0 <\delta <1$, with probability at least $1-\delta$,	
	\begin{align}
		||R_{ij,\Xbf}||_{\HS(\H_{K^j}, \H_{K^i})} \leq \frac{2\kappa_i\kappa_j}{\delta},\;\;\; ||R_{ij,\Xbf}- R_{ij}||_{\HS(\H_{K^j}, \H_{K^i})}\leq \frac{2\kappa_i\kappa_j}{\sqrt{m}\delta}.
	\end{align}
In particular, for $K^i = K^j = K$,  $\forall 0 <\delta <1$, with probability at least $1-\delta$,
\begin{align}
	||L_{K,\Xbf}||_{\HS(\H_K)} \leq \frac{2\kappa^2}{\delta}, \;\;\; ||L_{K,\Xbf} - L_K||_{\HS(H_K)} \leq \frac{2\kappa^2}{\sqrt{m}\delta}.
\end{align}
\end{proposition}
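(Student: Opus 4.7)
\textbf{Proof plan for Proposition \ref{proposition:concentration-TK2K1-empirical-unbounded}.}
The plan is to replace the almost-sure boundedness hypothesis of Proposition \ref{proposition:Pinelis} (Pinelis), which is unavailable here, with a pair of Markov-inequality bounds on the first and second moments of the relevant Hilbert–Schmidt-valued empirical averages. Introduce the random variable $\eta : (T,\nu) \to \HS(\H_{K^j},\H_{K^i})$ defined by $\eta(x) = K^i_x \otimes K^j_x$. A direct computation gives $\bE[\eta] = \int_T (K^i_x \otimes K^j_x)\, d\nu(x) = R_{ij}$ and, using $\|u \otimes v\|_{\HS}^2 = \|u\|^2\|v\|^2$,
\begin{align*}
\bE\|\eta\|_{\HS(\H_{K^j},\H_{K^i})}^2 = \int_T K^i(x,x)K^j(x,x)\, d\nu(x).
\end{align*}
By Cauchy–Schwarz and Assumption 6 this is bounded by $\kappa_i^2\kappa_j^2$.

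Next, I exploit the independence of $x_1,\dots,x_m$ to compute the two second moments of interest. Writing $R_{ij,\Xbf} = \frac{1}{m}\sum_{k=1}^m \eta(x_k)$ and expanding the Hilbert–Schmidt inner product across the double sum, the cross terms contribute $\|R_{ij}\|_{\HS}^2$ and the diagonal terms contribute $\bE\|\eta\|_{\HS}^2$, yielding
\begin{align*}
\bE\|R_{ij,\Xbf}\|_{\HS}^2 &= \tfrac{1}{m}\bE\|\eta\|_{\HS}^2 + \tfrac{m-1}{m}\|R_{ij}\|_{\HS}^2 \leq \kappa_i^2\kappa_j^2,\\
\bE\|R_{ij,\Xbf} - R_{ij}\|_{\HS}^2 &= \tfrac{1}{m}\bigl(\bE\|\eta\|_{\HS}^2 - \|R_{ij}\|_{\HS}^2\bigr) \leq \tfrac{\kappa_i^2\kappa_j^2}{m}.
\end{align*}
Jensen's inequality then gives $\bE\|R_{ij,\Xbf}\|_{\HS} \leq \kappa_i\kappa_j$ and $\bE\|R_{ij,\Xbf}-R_{ij}\|_{\HS} \leq \kappa_i\kappa_j/\sqrt{m}$.

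To pass from these expectation bounds to high-probability bounds, I apply Markov's inequality separately to each quantity at confidence level $\delta/2$, producing
\begin{align*}
\Pb\left(\|R_{ij,\Xbf}\|_{\HS} > \tfrac{2\kappa_i\kappa_j}{\delta}\right) \leq \tfrac{\delta}{2}, \qquad \Pb\left(\|R_{ij,\Xbf} - R_{ij}\|_{\HS} > \tfrac{2\kappa_i\kappa_j}{\sqrt{m}\,\delta}\right) \leq \tfrac{\delta}{2},
\end{align*}
and combine them by a union bound to obtain the claimed joint estimate with probability at least $1-\delta$. The specialization to $K^i = K^j = K$ is immediate since $L_{K,\Xbf} = R_{ii,\Xbf}$ in that case. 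The only genuinely delicate point is the variance identity in the second moment computations, which hinges on the iid assumption and the orthogonality of centered sums in Hilbert space; everything else reduces to Cauchy–Schwarz and Markov. Since the kernels are not assumed bounded, there is no hope of sharpening the $1/\delta$ dependence to $\log(1/\delta)$ without stronger moment or tail hypotheses (this is precisely the trade-off contrasted with Proposition \ref{proposition:concentration-TK2K1-empirical}).
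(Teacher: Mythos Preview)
Your proposal is correct and follows essentially the same route as the paper's proof: both bound $\bE\|\eta\|_{\HS}^2 \leq \kappa_i^2\kappa_j^2$ via Cauchy--Schwarz and Assumption~6, use the i.i.d.\ structure to obtain $\bE\|R_{ij,\Xbf}-R_{ij}\|_{\HS}^2 \leq \kappa_i^2\kappa_j^2/m$, apply Markov's inequality to each quantity at level $\delta/2$, and combine by a union bound. The only cosmetic differences are that the paper bounds $\bE\|R_{ij,\Xbf}\|_{\HS}$ directly via the triangle inequality rather than through your second-moment expansion plus Jensen, and refers to the tail bound as ``Chebyshev'' though it is in fact Markov as you write.
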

Combining Propositions \ref{proposition:continuity-trace-norm-G} and \ref{proposition:concentration-TK2K1-empirical-unbounded},
we obtain the corresponding version of Theorem \ref{theorem:Sinkhorn-finite-sample-GaussianProcess}.
\begin{theorem}
	[\textbf{Estimation of Sinkhorn divergence between Gaussian processes from finite covariance matrices - general kernels}]
	\label{theorem:Sinkhorn-finite-sample-GaussianProcess-unbounded}
	Under Assumptions 1-4 and 6, let $\Xbf = (x_i)_{i=1}^m$ be independently sampled from $(T,\nu)$.
	For any $0 <\delta <1$, with probability at least $1-\delta$,
	\begin{align}
		&\left|\Srm^{\ep}_2\left[\Ncal\left(0, \frac{1}{m}K^1[\Xbf]\right), \Ncal\left(0, \frac{1}{m}K^2[\Xbf]\right)\right]- \Srm^{\ep}_2[\Ncal(0,C_{K^1}), \Ncal(0, C_{K^2})] \right|
		\nonumber
		\\
		& \quad \leq \frac{18}{\ep}(\kappa_1^2+\kappa_2^2)^2\left(1+\frac{6}{\delta}\right)\frac{1}{\sqrt{m}\delta}.
	\end{align}
\end{theorem}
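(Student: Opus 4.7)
The plan is to follow the template of the bounded-kernel proof of Theorem~\ref{theorem:Sinkhorn-finite-sample-GaussianProcess} and replace the exponential concentration input (Proposition~\ref{proposition:concentration-TK2K1-empirical}) by its polynomial counterpart (Proposition~\ref{proposition:concentration-TK2K1-empirical-unbounded}). Concretely, I start from the RKHS covariance/cross-covariance representation in Proposition~\ref{propopsition:Sinkhorn-cross-covariance}, which, with $c=4/\ep$, gives
\begin{align*}
&\Srm^{\ep}_2\!\left[\Ncal\!\left(0,\tfrac{1}{m}K^1[\Xbf]\right),\Ncal\!\left(0,\tfrac{1}{m}K^2[\Xbf]\right)\right]-\Srm^{\ep}_2[\Ncal(0,C_{K^1}),\Ncal(0,C_{K^2})]\\
&\quad=\tfrac{1}{c}\bigl[(G(L_{K^1,\Xbf}^2)-G(L_{K^1}^2))+(G(L_{K^2,\Xbf}^2)-G(L_{K^2}^2))-2(G(R_{12,\Xbf}^{*}R_{12,\Xbf})-G(R_{12}^{*}R_{12}))\bigr].
\end{align*}
The triangle inequality then reduces the task to bounding the three $G$-differences on the right.

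Next I apply Proposition~\ref{proposition:continuity-trace-norm-G}: the two self-adjoint terms are handled by the $G(A^2)$ bound \eqref{equation:G-A2} applied to $A=L_{K^i,\Xbf}$, $B=L_{K^i}\in\Sym(\H_{K^i})\cap\HS(\H_{K^i})$, and the cross-covariance term is handled by the $G(A^{*}A)$ bound \eqref{equation:G-AstarA} applied to $A=R_{12,\Xbf}$, $B=R_{12}\in\HS(\H_{K^2},\H_{K^1})$. Each resulting bound is of the form $\tfrac{3c^2}{4}(\|\cdot\|_{\HS}+\|\cdot\|_{\HS})\|\cdot-\cdot\|_{\HS}$, so that with the prefactor $1/c$ from the Sinkhorn representation the three $G$-differences contribute $\tfrac{3}{\ep}(\cdot+\cdot)\|\cdot-\cdot\|_{\HS}$ each (the cross term picking up an additional factor of $2$ to give $\tfrac{6}{\ep}$).

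Finally, I invoke Proposition~\ref{proposition:concentration-TK2K1-empirical-unbounded} three times, once each for $L_{K^1,\Xbf}$, $L_{K^2,\Xbf}$ and $R_{12,\Xbf}$, with confidence parameter $\delta/3$ per invocation; a union bound then yields overall probability at least $1-\delta$. On the good event,
\[
\|R_{ij,\Xbf}\|_{\HS}\le \tfrac{6\kappa_i\kappa_j}{\delta},\qquad \|R_{ij,\Xbf}-R_{ij}\|_{\HS}\le \tfrac{6\kappa_i\kappa_j}{\sqrt{m}\,\delta},
\]
and combining with $\|R_{ij}\|_{\HS}\le \kappa_i\kappa_j$ (Lemma~\ref{lemma:R12-HS}), the $i$-th self-adjoint term is bounded by $\tfrac{3}{\ep}\bigl(\tfrac{6\kappa_i^2}{\delta}+\kappa_i^2\bigr)\tfrac{6\kappa_i^2}{\sqrt{m}\,\delta}=\tfrac{18}{\ep}\kappa_i^{4}\bigl(1+\tfrac{6}{\delta}\bigr)\tfrac{1}{\sqrt{m}\,\delta}$, and the cross term by $\tfrac{36}{\ep}\kappa_1^{2}\kappa_2^{2}\bigl(1+\tfrac{6}{\delta}\bigr)\tfrac{1}{\sqrt{m}\,\delta}$. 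Summing and using $\kappa_1^{4}+2\kappa_1^{2}\kappa_2^{2}+\kappa_2^{4}=(\kappa_1^{2}+\kappa_2^{2})^{2}$ gives exactly the claimed bound.

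The argument is essentially bookkeeping once the three ingredients (the RKHS representation, the $G$-continuity estimates, and the concentration proposition) are in place; the genuine obstacle, which is already absorbed into Proposition~\ref{proposition:concentration-TK2K1-empirical-unbounded}, is that without an almost-sure bound on $\|K^i_x\otimes K^j_x\|_{\HS}$ one can no longer use Pinelis's Bernstein-type inequality and must fall back on a second-moment (Chebyshev-type) estimate, which is precisely what produces the weaker $1/(\sqrt m\,\delta)$ rate and the $1+6/\delta$ factor in place of the logarithmic dependence on $\delta$ obtained in Theorem~\ref{theorem:Sinkhorn-finite-sample-GaussianProcess}.
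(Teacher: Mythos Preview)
Your proof is correct and follows essentially the same approach as the paper: both start from the RKHS representation of Proposition~\ref{propopsition:Sinkhorn-cross-covariance}, apply the $G$-continuity bounds of Proposition~\ref{proposition:continuity-trace-norm-G}, invoke Proposition~\ref{proposition:concentration-TK2K1-empirical-unbounded} three times with confidence $\delta/3$ each, and sum via $\kappa_1^4+2\kappa_1^2\kappa_2^2+\kappa_2^4=(\kappa_1^2+\kappa_2^2)^2$. Your write-up is in fact slightly more explicit than the paper's in tracking the constants through the intermediate steps.
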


\section{Estimation of Sinkhorn divergence from finite samples}
\label{section:Sinkhorn-estimate-unknown-covariance-functions}

We now return to our main goal stated in Section \ref{section:Sinkhorn-estimate-known-covariance-function},
where we are only given finite samples of the Gaussian processes $\xi^1,\xi^2$.
It is then necessary to estimate the covariance matrices $K^1[\Xbf], K^2[\Xbf]$
and the Sinkhorn divergence between them.
{\it For simplicity and without loss of generality, in the theoretical analysis we let $N_1 = N_2 = N$}.
We recall that the Gaussian process $\xi = (\xi(\omega, t))$ is defined on the probability space
$(\Omega, \Fcal, P)$. Let $\Wbf = (\omega_1, \ldots, \omega_N)$ be independently sampled from $(\Omega,P)$, which corresponds to $N$ sample paths
$\xi_i(x) = \xi(\omega_i,x), 1 \leq i \leq N, x \in T$.
Let $\Xbf =(x_i)_{i=1}^m \in T^m$ be fixed. 
Consider the following $m \times N$ data matrix
\begin{align}
\Zbf = \begin{pmatrix}
	\xi(\omega_1, x_1), \ldots, \xi(\omega_N, x_1),
	\\
	\cdots  
	\\
	\xi(\omega_1, x_m), \ldots, \xi(\omega_N, x_m)
\end{pmatrix}
= [\zbf(\omega_1), \ldots \zbf(\omega_N)] \in \R^{m \times N}.
\end{align}
Here
$\zbf(\omega) = (\z_i(\omega))_{i=1}^m = (\xi(\omega, x_i))_{i=1}^m$.
Since $(K[\Xbf])_{ij} = \bE[\xi(\omega,x_i)\xi(\omega, x_j)]$, $1\leq i,j\leq m$,
\begin{align}
	\label{equation:K-exact-W}
K[\Xbf] = \bE[\zbf(\omega)\zbf(\omega)^T] = \int_{\Omega}\zbf(\omega)\zbf(\omega)^TdP(\omega).
\end{align}
The empirical version of $K[\Xbf]$, using the random sample $\Wbf = (\omega_i)_{i=1}^N$, is then
\begin{align}
	\label{equation:K-hat-W}
\hat{K}_{\Wbf}[\Xbf] = \frac{1}{N}\sum_{i=1}^N \zbf(\omega_i)\zbf(\omega_i)^T = \frac{1}{N}\Zbf\Zbf^T.
\end{align}

The convergence of $\hat{K}_{\Wbf}[\Xbf]$ to $K[\Xbf]$ is given by the following.
\begin{proposition}
	\label{proposition:concentration-empirical-covariance}
Assume Assumptions 1-5.
Let $\xi \sim \GP(0,K)$ 
on $(\Omega, \Fcal,P)$
Let $\Xbf = (x_i)_{i=1}^m \in T^m$ be fixed. Then
$||K[\Xbf]||_F \leq m\kappa^2$.
Let $\Wbf = (\omega_1, \ldots, \omega_N)$ be independently sampled from $(\Omega,P)$. 
For any $0 < \delta < 1$, with probability at least $1-\delta$, 
\begin{align}
	&||\hat{K}_{\Wbf}[\Xbf] - K[\Xbf]||_F \leq \frac{2\sqrt{3}m\kappa^2}{\sqrt{N}\delta},
	\;\;
	||\hat{K}_{\Wbf}[\Xbf]||_F \leq \frac{2m\kappa^2}{\delta}.
\end{align}
\end{proposition}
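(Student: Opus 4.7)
The plan is to recognise $\hat K_{\Wbf}[\Xbf]$ as an empirical mean of $N$ iid matrix-valued random variables $\eta(\omega_i) := \zbf(\omega_i)\zbf(\omega_i)^T$ taking values in the Hilbert space of real $m \times m$ matrices equipped with the Frobenius norm $\|\cdot\|_F$, whose common expectation equals $K[\Xbf]$ by \eqref{equation:K-exact-W}, and then to deduce concentration from a second moment estimate combined with Markov's inequality. Since $\zbf(\omega)$ is Gaussian, $\|\eta(\omega)\|_F = \|\zbf(\omega)\|^2$ is unbounded almost surely, so Proposition~\ref{proposition:Pinelis} is unavailable and only polynomial $(1/\delta)$ tails are possible through this route. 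The deterministic bound $\|K[\Xbf]\|_F \leq m\kappa^2$ is immediate from positivity of $K$: Cauchy--Schwarz gives $|K(x_i,x_j)|^2 \leq K(x_i,x_i)K(x_j,x_j) \leq \kappa^4$ under Assumption~5, whence $\|K[\Xbf]\|_F^2 = \sum_{i,j=1}^m K(x_i,x_j)^2 \leq m^2\kappa^4$.

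For the first concentration bound I would first compute the Gaussian fourth moment $\bE\|\zbf\|^4$. Spectrally decomposing $K[\Xbf] = \sum_i \lambda_i e_i e_i^T$ and writing $\zbf(\omega) = \sum_i \sqrt{\lambda_i}\, g_i(\omega)\, e_i$ with $g_i \sim \Ncal(0,1)$ iid, an Isserlis/Wick calculation yields
\begin{equation*}
\bE\|\zbf\|^4 \;=\; 2\trace(K[\Xbf]^2) + (\trace K[\Xbf])^2 \;\leq\; 3(\trace K[\Xbf])^2 \;\leq\; 3m^2\kappa^4,
\end{equation*}
since $\trace K[\Xbf] = \sum_{i=1}^m K(x_i,x_i) \leq m\kappa^2$ and $\trace(K[\Xbf]^2) \leq (\trace K[\Xbf])^2$ for positive semi-definite $K[\Xbf]$. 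Setting $Y_i := \eta(\omega_i) - K[\Xbf]$, which are iid and centered, orthogonality of distinct $Y_i$ in $L^2$ of the matrix Hilbert space gives $\bE\|\hat K_{\Wbf}[\Xbf] - K[\Xbf]\|_F^2 = \tfrac{1}{N}\bE\|Y_1\|_F^2$, and then $\|A - B\|_F^2 \leq 2\|A\|_F^2 + 2\|B\|_F^2$ with Jensen yields $\bE\|Y_1\|_F^2 \leq 4\,\bE\|\eta(\omega)\|_F^2 = 4\,\bE\|\zbf\|^4 \leq 12 m^2\kappa^4$. Another application of Jensen gives $\bE\|\hat K_{\Wbf}[\Xbf] - K[\Xbf]\|_F \leq 2\sqrt{3}\, m\kappa^2/\sqrt{N}$, so Markov's inequality applied to the nonnegative random variable $\|\hat K_{\Wbf}[\Xbf] - K[\Xbf]\|_F$ delivers the first claimed bound with probability at least $1-\delta$.

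For the bound on $\|\hat K_{\Wbf}[\Xbf]\|_F$ I would exploit that $\hat K_{\Wbf}[\Xbf] = \tfrac{1}{N}\Zbf\Zbf^T$ is positive semi-definite, so $\|\hat K_{\Wbf}[\Xbf]\|_F \leq \trace \hat K_{\Wbf}[\Xbf] = \tfrac{1}{N}\sum_{i=1}^N \|\zbf(\omega_i)\|^2$, whose expectation is $\trace K[\Xbf] \leq m\kappa^2$. A further Markov argument then produces $\|\hat K_{\Wbf}[\Xbf]\|_F \leq m\kappa^2/\delta$ with probability at least $1-\delta$, which implies the claimed looser bound $2m\kappa^2/\delta$. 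The main obstacle is precisely the unboundedness of $\eta(\omega)$: it rules out the exponential concentration one would have via Proposition~\ref{proposition:Pinelis} in the bounded case, leaving only $1/\delta$ tails; everything else reduces to the Gaussian fourth-moment calculation, which is the most delicate step, and the stated constants simply absorb the $\sqrt{3}$ and the factors of $2$ produced by the Jensen and $L^2$--$L^1$ passages.
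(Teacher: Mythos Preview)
Your approach is essentially the same as the paper's: write $\hat K_{\Wbf}[\Xbf]$ as an average of iid rank-one matrices $\zbf(\omega_j)\zbf(\omega_j)^T$, bound the second moment via the Gaussian fourth moment $\bE\|\zbf\|^4\le 3m^2\kappa^4$, and apply Markov. Two small differences are worth noting. First, for $\bE\|\zbf\|^4$ the paper uses Cauchy--Schwarz $(\sum_i \xi(\omega,x_i)^2)^2\le m\sum_i \xi(\omega,x_i)^4$ together with $\bE[\xi(\cdot,x_i)^4]=3K(x_i,x_i)^2$, whereas you use the exact Isserlis identity $\bE\|\zbf\|^4=2\|K[\Xbf]\|_F^2+(\trace K[\Xbf])^2$; both give the same bound. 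Second, your step $\bE\|Y_1\|_F^2\le 4\,\bE\|\eta\|_F^2$ is wasteful: since $\bE\eta=K[\Xbf]$ one has directly $\bE\|\eta-K[\Xbf]\|_F^2=\bE\|\eta\|_F^2-\|K[\Xbf]\|_F^2\le \bE\|\eta\|_F^2\le 3m^2\kappa^4$, which is what the paper uses. Finally, the statement asserts both inequalities \emph{simultaneously} with probability $\ge 1-\delta$; the paper proves each at level $1-\delta$ with the sharper constants and then takes the intersection (replacing $\delta$ by $\delta/2$), which is where the factor $2$ in the stated bounds comes from. Your argument proves them separately, so to match the statement exactly you would still need that union-bound step; as it stands, your cruder variance bound happens to absorb the factor $2$ in the first inequality but not in any principled way.
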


Let now $\xi^i \sim \GP(0, K^i)$, $i=1,2$, 
on the probability spaces $(\Omega_i, \Fcal_i, P_i)$, respectively.
Let $\Wbf^i = (\omega^i_j)_{j=1}^N$, be independently sampled from $(\Omega_i, P_i)$, corresponding to the sample paths
$\{\xi^i_j(t) = \xi^i(\omega_j,t)\}_{j=1}^N$, $t \in T$, from $\xi^i$, $i=1,2$.
Combining Proposition \ref{proposition:concentration-empirical-covariance} and
Theorem \ref{theorem:Sinkhorn-convergence}, we obtain the following 
empirical estimate of 
$\Srm^{\ep}_2\left[\Ncal\left(0, \frac{1}{m}K^1[\Xbf]\right), \Ncal\left(0, \frac{1}{m}K^2[\Xbf]\right)\right]$
from two finite samples  of 
$\xi^1\sim \GP(0, K^1)$ and $\xi^2\sim \GP(0,K^2)$ given 
by $\Wbf^1, \Wbf^2$.
\begin{theorem}
	\label{theorem:Sinkhorn-estimate-unknown-1}
	Assume Assumptions 1-5.
Let $\Xbf = (x_i)_{i=1}^m \in T^m$, $m \in \Nbb$ be fixed.
Let $\Wbf^1 = (\omega_j^1)_{j=1}^N$, $\Wbf^2 = (\omega_j^2)_{j=1}^N$ be independently sampled from
$(\Omega_1, P_1)$ and $(\Omega_2, P_2)$, respectively.
For any $0 < \delta<1$, with probability at least $1-\delta$,
{\small
\begin{align}
&\left|\Srm^{\ep}_2\left[\Ncal\left(0, \frac{1}{m}\hat{K}^1_{\Wbf^1}[\Xbf]\right), \Ncal\left(0, \frac{1}{m}\hat{K}^2_{\Wbf^2}[\Xbf]\right)\right]
- \Srm^{\ep}_2\left[\Ncal\left(0, \frac{1}{m}K^1[\Xbf]\right), \Ncal\left(0, \frac{1}{m}K^2[\Xbf]\right)\right]
\right|
\nonumber
\\
& \leq
\frac{12\sqrt{3}}{\ep \delta}\left[\left(1+\frac{4}{\delta}\right)\kappa_1^4 + \left(3 + \frac{8}{\delta}\right)\kappa_1^2\kappa_2^2 + \kappa_2^4\right]\frac{1}{\sqrt{N}}. 
\end{align}
}
Here the probability is with respect to the product space $(\Omega_1,P_1)^N \times (\Omega_2,P_2)^N$.
\end{theorem}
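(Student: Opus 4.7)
The plan is to combine the Hilbert-Schmidt stability bound for Sinkhorn divergence (Theorem \ref{theorem:Sinkhorn-convergence}), applied in the finite-dimensional Hilbert space $\H = \R^m$, with the concentration estimates of Proposition \ref{proposition:concentration-empirical-covariance} applied independently to each of the two processes. Specifically, set $A = \frac{1}{m}K^1[\Xbf]$, $A_n = \frac{1}{m}\hat{K}^1_{\Wbf^1}[\Xbf]$, $B = \frac{1}{m}K^2[\Xbf]$, and $B_n = \frac{1}{m}\hat{K}^2_{\Wbf^2}[\Xbf]$. These are symmetric positive semidefinite matrices, so Theorem \ref{theorem:Sinkhorn-convergence} applies with the Frobenius norm playing the role of $\|\cdot\|_{\HS}$.

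First I would record the deterministic bounds $\|A\|_F \leq \frac{1}{m}\|K^1[\Xbf]\|_F \leq \kappa_1^2$ and $\|B\|_F \leq \kappa_2^2$, which follow from Proposition \ref{proposition:concentration-empirical-covariance}. Next, since $\Wbf^1$ and $\Wbf^2$ are independently sampled from $(\Omega_1,P_1)$ and $(\Omega_2,P_2)$, I invoke Proposition \ref{proposition:concentration-empirical-covariance} separately with $\delta/2$ in place of $\delta$: with probability at least $1-\delta/2$ on $(\Omega_1,P_1)^N$,
\begin{equation*}
\|A_n - A\|_F \leq \frac{4\sqrt{3}\,\kappa_1^2}{\sqrt{N}\,\delta}, \qquad \|A_n\|_F \leq \frac{4\kappa_1^2}{\delta},
\end{equation*}
where the factor $1/m$ in both $A_n$ and $A$ cancels the factor $m$ in the bounds of Proposition \ref{proposition:concentration-empirical-covariance}, and analogously for $B_n, B$ with $\kappa_2$. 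A union bound on the product space $(\Omega_1,P_1)^N \times (\Omega_2,P_2)^N$ makes all four estimates valid simultaneously with probability at least $1-\delta$.

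Plugging these into the two-term Hilbert-Schmidt estimate of Theorem \ref{theorem:Sinkhorn-convergence} gives, for the $\|A_n-A\|_F$ term,
\begin{equation*}
\frac{3}{\ep}\bigl[\|A_n\|_F + \|A\|_F + 2\|B\|_F\bigr]\|A_n - A\|_F \leq \frac{12\sqrt{3}}{\ep\sqrt{N}\,\delta}\Bigl[\bigl(1 + \tfrac{4}{\delta}\bigr)\kappa_1^4 + 2\kappa_1^2\kappa_2^2\Bigr],
\end{equation*}
and for the $\|B_n-B\|_F$ term the symmetric quantity $\frac{12\sqrt{3}}{\ep\sqrt{N}\,\delta}\bigl[\kappa_2^4 + (1+\tfrac{8}{\delta})\kappa_1^2\kappa_2^2\bigr]$. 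Adding the two contributions and regrouping by powers of $\kappa_1,\kappa_2$ recovers exactly the stated bound.

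The only genuine difficulty is bookkeeping. One must verify that the $1/m$ normalization on all four matrices cancels the linear-in-$m$ growth from Proposition \ref{proposition:concentration-empirical-covariance}, so that the final rate is dimension-independent in $m$. One must also be careful to state the event on the product space $(\Omega_1,P_1)^N \times (\Omega_2,P_2)^N$, which is where the union bound over the two marginal events lives. Beyond this, no new analytic input is required: the theorem is a direct composition of Theorem \ref{theorem:Sinkhorn-convergence} with Proposition \ref{proposition:concentration-empirical-covariance} applied to each process.
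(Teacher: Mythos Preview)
Your proposal is correct and follows essentially the same route as the paper: apply Theorem~\ref{theorem:Sinkhorn-convergence} with $A_n,A,B_n,B$ the four normalized covariance matrices, invoke Proposition~\ref{proposition:concentration-empirical-covariance} at level $\delta/2$ for each process, and take a union bound on $(\Omega_1,P_1)^N\times(\Omega_2,P_2)^N$. The arithmetic of combining the two contributions matches the paper's computation exactly.
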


Combing Theorem \ref{theorem:Sinkhorn-estimate-unknown-1} with Theorem \ref{theorem:Sinkhorn-finite-sample-GaussianProcess}, we are finally led to the following empirical estimate
of the theoretical Sinkhorn divergence $\Srm^{\ep}_2[\Ncal(0,C_{K^1}), \Ncal(0,C_{K^2})]$
from two finite samples $\Zbf^1,\Zbf^2$ of $\xi^1 \sim \GP(0,K^1)$ and $\xi^2 \sim \GP(0,K^2)$.

\begin{theorem}
	[\textbf{Estimation of Sinkhorn divergence between Gaussian processes from finite samples - bounded kernels}]
	\label{theorem:Sinkhorn-estimate-unknown-2}
	Assume Assumptions 1-5.
	Let $\Xbf = (x_i)_{i=1}^m$
	be independently sampled from $(T, \nu)$.
	Let $\Wbf^1 = (\omega_j^1)_{j=1}^N$, $\Wbf^2 = (\omega_j^2)_{j=1}^N$ be independently sampled from
	$(\Omega_1, P_1)$ and $(\Omega_2, P_2)$, respectively.
For any $0 < \delta<1$, with probability at least $1-\delta$,
{\small
\begin{align}
&\left|\Srm^{\ep}_2\left[\Ncal\left(0, \frac{1}{m}\hat{K}^1_{\Wbf^1}[\Xbf]\right), \Ncal\left(0, \frac{1}{m}\hat{K}^2_{\Wbf^2}[\Xbf]\right)\right] - \Srm^{\ep}_2[\Ncal(0,C_{K^1}), \Ncal(0,C_{K^2})]\right|
\nonumber
\\
&\leq
\frac{6}{\ep}(\kappa_1^2 + \kappa_2^2)^2\left[ \frac{2\log\frac{12}{\delta}}{m} + \sqrt{\frac{2\log\frac{12}{\delta}}{m}}\right]
+\frac{24\sqrt{3}}{\ep \delta}\left[\left(1+\frac{8}{\delta}\right)\kappa_1^4 + \left(3 + \frac{16}{\delta}\right)\kappa_1^2\kappa_2^2 + \kappa_2^4\right]\frac{1}{\sqrt{N}}.
\end{align}
}
Here the probability is with respect to the space $(T,\nu)^m \times (\Omega_1,P_1)^N \times (\Omega_2,P_2)^N$.
\end{theorem}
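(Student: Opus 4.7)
The plan is to decompose the error via the triangle inequality by inserting the intermediate quantity $B := \Srm^{\ep}_2\!\left[\Ncal\!\left(0, \tfrac{1}{m}K^1[\Xbf]\right), \Ncal\!\left(0, \tfrac{1}{m}K^2[\Xbf]\right)\right]$, so that the total error splits as $|A - C| \leq |A - B| + |B - C|$, where $A$ denotes the fully empirical Sinkhorn divergence and $C$ the theoretical one. Here $|A-B|$ is controlled by the sampling error in the process realizations (governed by $N$), while $|B-C|$ is controlled by the sampling error in the spatial locations (governed by $m$). The two errors live on disjoint factors of the product probability space, so a union bound will recombine them cleanly.

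For the spatial term $|B - C|$, I would invoke Theorem \ref{theorem:Sinkhorn-finite-sample-GaussianProcess} with confidence $\delta/2$, yielding with probability at least $1 - \delta/2$ over $(T,\nu)^m$ the bound
\begin{equation*}
|B - C| \leq \frac{6}{\ep}(\kappa_1^2 + \kappa_2^2)^2\left[\frac{2\log(12/\delta)}{m} + \sqrt{\frac{2\log(12/\delta)}{m}}\right],
\end{equation*}
using the identity $\log(6/(\delta/2)) = \log(12/\delta)$. For the realization term $|A - B|$, I would invoke Theorem \ref{theorem:Sinkhorn-estimate-unknown-1}, again with confidence $\delta/2$. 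The key observation enabling the final Fubini step is that the bound in Theorem \ref{theorem:Sinkhorn-estimate-unknown-1} is uniform in $\Xbf$: it depends on the data only through the uniform kernel bounds $\sup_{x\in T}K^i(x,x)\leq \kappa_i^2$ supplied by Assumption 5, and so holds for every fixed $\Xbf \in T^m$ with probability at least $1 - \delta/2$ over $(\Omega_1,P_1)^N \times (\Omega_2, P_2)^N$.

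Combining these on the joint space $(T,\nu)^m \times (\Omega_1, P_1)^N \times (\Omega_2, P_2)^N$, the two favorable events (one measurable in the $T$ coordinates, one measurable in the $\Omega_i$ coordinates and uniform in $\Xbf$) intersect with probability at least $1-\delta$ by the union bound. On this intersection both bounds hold simultaneously, and a direct substitution $\delta \mapsto \delta/2$ in the constants of Theorem \ref{theorem:Sinkhorn-estimate-unknown-1} produces $\frac{12\sqrt{3}}{\ep\, \delta/2} = \frac{24\sqrt{3}}{\ep\delta}$ and converts the factors $1+\tfrac{4}{\delta/2}=1+\tfrac{8}{\delta}$ and $3+\tfrac{8}{\delta/2}=3+\tfrac{16}{\delta}$ appearing inside the brackets, matching the stated expression exactly. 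No serious obstacle arises: the substantive analytic work was done in the earlier theorems, and the only subtlety is the uniformity in $\Xbf$ of the intermediate bound, which is immediate under Assumption 5.
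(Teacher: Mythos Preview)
Your proposal is correct and follows essentially the same approach as the paper: split via the triangle inequality through the intermediate quantity $\Srm^{\ep}_2[\Ncal(0,\tfrac{1}{m}K^1[\Xbf]),\Ncal(0,\tfrac{1}{m}K^2[\Xbf])]$, apply Theorems \ref{theorem:Sinkhorn-finite-sample-GaussianProcess} and \ref{theorem:Sinkhorn-estimate-unknown-1} each at level $\delta/2$, and combine by a union bound on the product space. Your remark that the bound from Theorem \ref{theorem:Sinkhorn-estimate-unknown-1} is uniform in $\Xbf$ (so the Fubini-type combination goes through) is exactly the point needed, and in fact makes this subtlety more explicit than the paper's own proof does.
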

We note that if $\kappa_1,\kappa_2$ are absolute constants, e.g. for exponential kernels,
then the convergence rate in Theorem \ref{theorem:Sinkhorn-estimate-unknown-2} is {\it completely dimension-independent}.
Theorem \ref{theorem:Sinkhorn-estimate-unknown-2} provides the
theoretical justification for the Sinkhorn divergence estimation in Algorithm \ref{algorithm:estimate-Sinkhorn} (we set $N_1 = N_2 = N$ in the theoretical analysis only for simplicity).
\begin{algorithm}
	\caption{Estimate Wasserstein distance and Sinkhorn divergence between centered Gaussian processes
		from finite samples}
	\label{algorithm:estimate-Sinkhorn}
	\textbf{Input}: Finite samples $\{\xi^i_k(x_j)\}$,
	from $N_i$ realizations $\xi^i_k$, $1 \leq k \leq N_i$, of processes $\xi^i$, $i=1,2$, sampled at $m$ points $x_j$, $1 \leq j \leq m$
	\\
	\textbf{Procedure}:
	\begin{algorithmic}
		\STATE{Form $m \times N_i$ data matrices $Z_i$ , with $(Z_i)_{jk} = \xi^i_k(x_j)$, $i=1,2$, $1\leq j \leq m, 1\leq k \leq N_i$}
		\STATE{Compute $m \times m$ empirical covariance matrices $\hat{K}^i = \frac{1}{N}Z_iZ_i^{T}$, $i=1,2$ 
		}
		\STATE{Compute $W = W_2\left[\Ncal\left(0, \frac{1}{m}\hat{K}^1\right), \Ncal\left(0, \frac{1}{m}\hat{K}^2\right)\right]$ according to Eq.\eqref{equation:Gaussian-Wass-finite}}
		\STATE{Compute $S = \Srm^{\ep}_2\left[\Ncal\left(0, \frac{1}{m}\hat{K}^1\right), \Ncal\left(0, \frac{1}{m}\hat{K}^2\right)\right]$ according to Eq.\eqref{equation:gauss-sinkhorn-infinite}}
		\RETURN{$W$ and $S$}
	\end{algorithmic}
\end{algorithm}

\section{Estimation of Wasserstein distance between Gaussian processes}
\label{section:exact-Wasserstein}
In contrast to the Sinkhorn divergence of centered Gaussian processes, which is continuous in the $||\;||_{\HS}$ norm, the $2$-Wasserstein divergence is continuous in the $||\;||_{\tr}$ norm (\cite{masarotto2019procrustes,Bogachev:Gaussian}). 
We note that Theorem 8 in \cite{Mallasto:NIPS2017Wasserstein}, which claims that $W_2$ is continuous in the operator norm $||\;||$, is {\it not} correct in infinite-dimensional setting (see \cite{masarotto2019procrustes}, Proposition 4 and discussion).
More specifically (\cite{Minh:2021EntropicConvergenceGaussianMeasures}),
\begin{align}
W^2_2[\Ncal(0, C_1), \Ncal(0,C_2)] \leq ||C_1 - C_2||_{\tr}.
\end{align}
It is not clear if concentration results in e.g. \cite{Pinelis1994optimum}, which require $2$-smooth Banach space norms, can be extended to the $||\;||_{\tr}$ norm.
We now present estimates of
the
$2$-Wasserstein distance 
when
$\min\{\dim(\H_{K^i}), i=1,2 \}< \infty$, in which case $||\;||_{\tr(\H_{K^i})}$ and $||\;||_{\HS(\H_{K^i})}$ are equivalent for at least one $i=1,2$.
They are {\it not} valid in the case
$\dim(\H_{K^1}) = \dim(\H_{K^2}) = \infty$.

\begin{theorem}
[\textbf{Estimation of $2$-Wasserstein distance from finite covariance matrices}]
\label{theorem:Wasserstein-convergence-Gaussian-processes}
Under Assumptions $1-5$, let $\Xbf = (x_i)_{i=1}^m$ be independently sampled from $(T,\nu)$. Then
{\small
	\begin{align}
	W^2_2[\Ncal(0, C_{K^1}), \Ncal(0,C_{K^2})] &= \trace(L_{K^1}) + \trace(L_{K^2}) - 2 \trace[(R_{12}^{*}R_{12})^{1/2}].
	\label{equation:Wass-G-reprensetation}
\\
	W^2_2\left[\Ncal\left(0, \frac{1}{m}K^1[\Xbf]\right), \Ncal\left(0, \frac{1}{m}K^2[\Xbf]\right)\right]
	&= \trace(L_{K^1,\Xbf}) + \trace(L_{K^2, \Xbf}) - 2\trace[(R_{12,\Xbf}^{*}R_{12,\Xbf})^{1/2}].
	\label{equation:Wass-G-representation-empirical}
\end{align}
}
Assume further that $\dim(\H_{K^2}) < \infty$.
$\forall 0 < \delta < 1$, with probability at least $1-\delta$,
	\begin{align}
		&\left|W^2_2\left[\Ncal\left(0, \frac{1}{m}K^1[\Xbf]\right), \Ncal\left(0, \frac{1}{m}K^2[\Xbf]\right)\right]-	W^2_2[\Ncal(0, C_{K^1}), \Ncal(0,C_{K^2})]\right|
		\nonumber
		\\
		&\leq (\kappa_1^2 + \kappa_2^2)\left[ \frac{2\log\frac{6}{\delta}}{m} + \sqrt{\frac{2\log\frac{6}{\delta}}{m}}\right]
		+ 2\sqrt{2}\kappa_1\kappa_2\sqrt{\dim(\H_{K^2})}\sqrt{\frac{2\log\frac{6}{\delta}}{m} + \sqrt{\frac{2\log\frac{6}{\delta}}{m}}}. 
	\end{align}
\end{theorem}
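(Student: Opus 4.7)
The plan is to establish the two representations \eqref{equation:Wass-G-reprensetation}–\eqref{equation:Wass-G-representation-empirical} first, then derive the finite-sample bound by decomposing the difference into three scalar pieces and controlling each one separately via a union bound.

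\textbf{Representations.} I would start from the Gaussian formula \eqref{equation:Gaussian-Wass-finite} applied to $\Ncal(0, C_{K^1}), \Ncal(0, C_{K^2})$ on $\H = \Lcal^2(T,\nu)$. Since $C_{K^i} = R_{K^i}^{*}R_{K^i}$ and $L_{K^i} = R_{K^i}R_{K^i}^{*}$ share the same nonzero spectrum, their traces coincide: $\trace(C_{K^i}) = \trace(L_{K^i})$. For the Bures term, repeated application of the $AB \sim BA$ cyclic identity (nonzero eigenvalues are preserved) gives that $C_{K^1}^{1/2}C_{K^2}C_{K^1}^{1/2}$, $C_{K^1}C_{K^2}$, and $R_{12}^{*}R_{12} = R_{K^2}R_{K^1}^{*}R_{K^1}R_{K^2}^{*}$ all share the same nonzero spectrum, so the traces of their positive square roots agree. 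This yields \eqref{equation:Wass-G-reprensetation}. Identity \eqref{equation:Wass-G-representation-empirical} follows by the exact same spectral argument applied to the empirical operators $L_{K^i,\Xbf}, R_{12,\Xbf}$ and the kernel matrices $\tfrac{1}{m}K^i[\Xbf]$, using the matrix representations in Lemma \ref{lemma:R12-empirical}.

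\textbf{Sample bound.} Subtracting \eqref{equation:Wass-G-representation-empirical} from \eqref{equation:Wass-G-reprensetation} gives a sum of three scalar differences:
\begin{align*}
\Delta &= \bigl[\trace(L_{K^1,\Xbf}) - \trace(L_{K^1})\bigr] + \bigl[\trace(L_{K^2,\Xbf}) - \trace(L_{K^2})\bigr] \\
&\quad - 2\bigl[\trace(R_{12,\Xbf}^{*}R_{12,\Xbf})^{1/2} - \trace(R_{12}^{*}R_{12})^{1/2}\bigr].
\end{align*}
I would apply a union bound with confidence $\delta/3$ to each term, which is the source of the $\log(6/\delta)$ factor. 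The first two trace differences reduce to scalar sample means: $\trace(L_{K^i,\Xbf}) - \trace(L_{K^i}) = \tfrac{1}{m}\sum_j K^i(x_j,x_j) - \int_T K^i(t,t)\,d\nu(t)$, with integrand bounded by $\kappa_i^2$ under Assumption 5. Proposition \ref{proposition:Pinelis} applied in $\R$ delivers $|\cdot| \leq \kappa_i^2\bigl[\tfrac{2\log(6/\delta)}{m} + \sqrt{\tfrac{2\log(6/\delta)}{m}}\bigr]$, combining to the $(\kappa_1^2 + \kappa_2^2)$ factor.

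\textbf{Cross term (the main obstacle).} The difficulty is that $A\mapsto A^{1/2}$ is only Hölder-$\tfrac12$ continuous in the operator norms at hand, and we have control on $\|R_{12,\Xbf}^{*}R_{12,\Xbf} - R_{12}^{*}R_{12}\|_{\tr}$ from Proposition \ref{proposition:concentration-TK2K1-empirical} but need a trace-class bound on the difference of the \emph{square roots}. The plan is to combine two classical facts: (i) the Powers–Størmer inequality $\|A^{1/2}-B^{1/2}\|_{\HS}^2 \leq \|A-B\|_{\tr}$ for positive self-adjoint operators, and (ii) the norm equivalence $\trace|X| \leq \sqrt{\rank(X)}\,\|X\|_{\HS}$, which is where $\dim(\H_{K^2}) < \infty$ enters decisively: both $R_{12}^{*}R_{12}$ and $R_{12,\Xbf}^{*}R_{12,\Xbf}$ act on $\H_{K^2}$, so their square-root difference has rank at most $d_2 := \dim(\H_{K^2})$. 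Chaining these gives
\begin{align*}
\bigl|\trace(R_{12,\Xbf}^{*}R_{12,\Xbf})^{1/2} - \trace(R_{12}^{*}R_{12})^{1/2}\bigr|
&\leq \sqrt{d_2}\,\bigl\|(R_{12,\Xbf}^{*}R_{12,\Xbf})^{1/2} - (R_{12}^{*}R_{12})^{1/2}\bigr\|_{\HS} \\
&\leq \sqrt{d_2}\,\bigl\|R_{12,\Xbf}^{*}R_{12,\Xbf} - R_{12}^{*}R_{12}\bigr\|_{\tr}^{1/2}.
\end{align*}
Plugging in the trace-norm estimate $\|R_{12,\Xbf}^{*}R_{12,\Xbf} - R_{12}^{*}R_{12}\|_{\tr} \leq 2\kappa_1^2\kappa_2^2\bigl[\tfrac{2\log(6/\delta)}{m} + \sqrt{\tfrac{2\log(6/\delta)}{m}}\bigr]$ from Proposition \ref{proposition:concentration-TK2K1-empirical} (with confidence $\delta/3$) and accounting for the factor $-2$, this produces the advertised $2\sqrt{2}\kappa_1\kappa_2\sqrt{d_2}\sqrt{\cdot}$ term. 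Summing the three contributions and invoking the union bound completes the proof. The dimension dependence and the Hölder-$\tfrac12$ rate are both intrinsic to this route and reflect the well-known rigidity of the exact Wasserstein distance, explaining why the argument does not extend to the case $\dim(\H_{K^1}) = \dim(\H_{K^2}) = \infty$.
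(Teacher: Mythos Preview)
Your proposal is correct and follows essentially the same route as the paper: the same spectral identities for the representations, the same three-term decomposition with a $\delta/3$ union bound, Lemma~\ref{lemma:concentration-trace} (Pinelis on the scalar $K^i(x,x)$) for the trace terms, and the Powers--St{\o}rmer inequality (Lemma~\ref{lemma:trace-HS-square-root}) combined with the $\|\cdot\|_{\tr}\le \sqrt{\dim(\H_{K^2})}\,\|\cdot\|_{\HS}$ equivalence and Proposition~\ref{proposition:concentration-TK2K1-empirical} for the cross term. The constants and the final bound match exactly.
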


\begin{theorem}
	\label{theorem:Wass-estimate-unknown-1}
	Assume Assumptions 1-5.
	Let $\Xbf = (x_i)_{i=1}^m \in T^m$, $m \in \Nbb$ be fixed.
	Let $\Wbf^1 = (\omega_j^1)_{j=1}^N$, $\Wbf^2 = (\omega_j^2)_{j=1}^N$ be independently sampled from
	$(\Omega_1, P_1)$ and $(\Omega_2, P_2)$, respectively.
	For any $0 < \delta<1$, with probability at least $1-\delta$,
	{\small
	\begin{align}
		&\left|W_2\left[\Ncal\left(0, \frac{1}{m}\hat{K}^1_{\Wbf^1}[\Xbf]\right), \Ncal\left(0, \frac{1}{m}\hat{K}^2_{\Wbf^2}[\Xbf]\right)\right]
		- W_2\left[\Ncal\left(0, \frac{1}{m}K^1[\Xbf]\right), \Ncal\left(0, \frac{1}{m}K^2[\Xbf]\right)\right]\right|
		\nonumber
		\\
		& \leq 3(\kappa_1 + \kappa_2)\sqrt{\frac{4}{N\delta^2} + \frac{\sqrt{m}}{\sqrt{N}\delta}\left(3 + \frac{4}{\sqrt{N}\delta}\right)}.
	\end{align}
}
\end{theorem}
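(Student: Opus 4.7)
The plan is to combine the triangle inequality for $W_2$ on $(\Pcal_2(\R^m), W_2)$ with the concentration estimates of Proposition \ref{proposition:concentration-empirical-covariance}, applied separately to each of the two empirical covariance matrices and packaged via a union bound. Writing $\mu_i = \Ncal(0, \frac{1}{m}\hat K^i_{\Wbf^i}[\Xbf])$ and $\nu_i = \Ncal(0, \frac{1}{m}K^i[\Xbf])$ for $i=1,2$, the triangle inequality at once gives
\begin{equation*}
|W_2(\mu_1, \mu_2) - W_2(\nu_1, \nu_2)| \leq W_2(\mu_1, \nu_1) + W_2(\mu_2, \nu_2).
\end{equation*}

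For each perturbation $W_2(\mu_i, \nu_i)$ I would invoke the Bures bound $W_2^2(\Ncal(0,C_1), \Ncal(0,C_2)) \leq \|C_1 - C_2\|_{\tr}$ recalled at the top of Section \ref{section:exact-Wasserstein}, and then convert trace norm to Frobenius norm by the matrix inequality $\|X\|_{\tr} \leq \sqrt{m}\|X\|_F$ valid for any $m \times m$ matrix. This is legitimate here because the underlying matrices have fixed finite size $m$, which is what distinguishes the present theorem from Theorem \ref{theorem:Wasserstein-convergence-Gaussian-processes}, where one must instead impose $\dim(\H_{K^2}) < \infty$. The net result is the scalar bound $W_2^2(\mu_i, \nu_i) \leq \tfrac{1}{\sqrt m}\|\hat K^i_{\Wbf^i}[\Xbf] - K^i[\Xbf]\|_F$. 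Proposition \ref{proposition:concentration-empirical-covariance}, applied to $\xi^i$ with confidence $\delta/2$ for each $i=1,2$, then yields a high-probability control of both $\|\hat K^i_{\Wbf^i}[\Xbf] - K^i[\Xbf]\|_F$ and $\|\hat K^i_{\Wbf^i}[\Xbf]\|_F$ on an event of total probability at least $1-\delta$.

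To obtain the specific form $\sqrt{\tfrac{4}{N\delta^2} + \tfrac{\sqrt m}{\sqrt N \delta}\bigl(3 + \tfrac{4}{\sqrt N \delta}\bigr)}$ rather than the naive sum of square roots, I would use the elementary inequality $|a-b| \leq \sqrt{|a^2 - b^2|}$ for $a, b \geq 0$, applied with $a = W_2(\mu_1, \mu_2)$ and $b = W_2(\nu_1, \nu_2)$, and then bound
\begin{equation*}
W_2^2(\mu_1,\mu_2) - W_2^2(\nu_1,\nu_2) \leq 2W_2(\nu_1,\nu_2)\bigl[W_2(\mu_1,\nu_1)+W_2(\mu_2,\nu_2)\bigr] + \bigl[W_2(\mu_1,\nu_1)+W_2(\mu_2,\nu_2)\bigr]^2,
\end{equation*}
which follows from expanding the square in the triangle inequality. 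The linear cross-term is controlled by the a~priori estimate $W_2(\nu_1,\nu_2) \leq \sqrt{\tr(K^1[\Xbf]/m) + \tr(K^2[\Xbf]/m)} \leq \kappa_1 + \kappa_2$, the inner inequality being Assumption~5. The quadratic term contributes the $\tfrac{\sqrt m}{\sqrt N \delta}\cdot \tfrac{1}{\sqrt N \delta}$ piece, while the linear cross term contributes the $\tfrac{\sqrt m}{\sqrt N \delta}$ piece, and the residual $\tfrac{1}{N\delta^2}$ term arises from squaring the sum of individual perturbations. A symmetric argument shows the same upper bound for $W_2^2(\nu_1,\nu_2) - W_2^2(\mu_1,\mu_2)$.

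The only real obstacle is algebraic bookkeeping: carefully tracking constants through the triangle inequality, Bures bound, the matrix conversion $\|\cdot\|_{\tr} \leq \sqrt{m}\|\cdot\|_F$, and the two applications of Proposition \ref{proposition:concentration-empirical-covariance}, so that the $\tfrac{1}{N\delta^2}$ and $\tfrac{\sqrt m}{\sqrt N\delta}$ pieces assemble cleanly under a single square root with the stated prefactor $3(\kappa_1+\kappa_2)$. Conceptually no new tool is required beyond the ingredients listed above.
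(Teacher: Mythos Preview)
Your basic toolkit (triangle inequality for $W_2$, Bures bound $W_2^2 \leq \|\cdot\|_{\tr}$, the conversion $\|\cdot\|_{\tr}\leq\sqrt{m}\,\|\cdot\|_F$, and Proposition~\ref{proposition:concentration-empirical-covariance}) is sound and would yield a valid high-probability bound of the correct leading order $(\kappa_1+\kappa_2)\,m^{1/4}N^{-1/4}\delta^{-1/2}$. However, the paper does \emph{not} proceed this way at all. Its proof is a one-line application of Proposition~\ref{proposition:sample-complexity-Wasserstein-GaussianRd} (a prepackaged sample-complexity bound for $W_2$ between Gaussians on $\R^d$, imported from \cite{Minh:2021EntropicConvergenceGaussianMeasures}): take $d=m$, $\mu_i=0$, $C_i=\frac{1}{m}K^i[\Xbf]$, observe $\trace(C_i)\leq\kappa_i^2$, hence $\eta_i=(2\|C_i\|_{\HS}^2+(\trace C_i)^2)^{1/4}\leq 3^{1/4}\kappa_i<\tfrac{3}{2}\kappa_i$, so $2(\eta_1+\eta_2)\leq 3(\kappa_1+\kappa_2)$. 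The specific expression $\frac{4}{N\delta^2}+\frac{\sqrt m}{\sqrt N\delta}(3+\frac{4}{\sqrt N\delta})$ is inherited verbatim from that proposition; it is not assembled from the ingredients you list.

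Your attempt to recover this exact form via the $|a-b|\leq\sqrt{|a^2-b^2|}$ expansion does not work as described. With the Bures bound you only get $W_2(\mu_i,\nu_i)\leq C\kappa_i\,m^{1/4}N^{-1/4}\delta^{-1/2}$, so the linear cross term $2W_2(\nu_1,\nu_2)s$ is of order $m^{1/4}N^{-1/4}\delta^{-1/2}$, not $\sqrt{m}/(\sqrt{N}\delta)$, and the quadratic term $s^2$ is of order $\sqrt{m}/(\sqrt{N}\delta)$, not $\sqrt{m}/(N\delta^2)$. In fact, passing to $|a^2-b^2|$ here makes the bound worse than the plain triangle inequality (since $a+b$ is bounded away from zero while $s\to 0$). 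So your route yields a correct but differently shaped estimate; to match the theorem's stated form you would need to invoke Proposition~\ref{proposition:sample-complexity-Wasserstein-GaussianRd} as the paper does.
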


\begin{theorem}
	[\textbf{Estimation of $2$-Wasserstein distance from finite samples}]
	\label{theorem:Wass-estimate-unknown-2}
		Assume Assumptions 1-5.
	Let $\Xbf = (x_i)_{i=1}^m$
	be independently sampled from $(T, \nu)$.
	Let $\Wbf^1 = (\omega_j^1)_{j=1}^N$, $\Wbf^2 = (\omega_j^2)_{j=1}^N$ be independently sampled from
	$(\Omega_1, P_1)$ and $(\Omega_2, P_2)$, respectively.
	For any $0 < \delta<1$, with probability at least $1-\delta$,
{\small
\begin{align}
&\left|W_2\left[\Ncal\left(0, \frac{1}{m}\hat{K}^1_{\Wbf^1}[\Xbf]\right), \Ncal\left(0, \frac{1}{m}\hat{K}^2_{\Wbf^2}[\Xbf]\right)\right] - W_2(\Ncal(0,C_{K^1}), \Ncal(0,C_{K^2}))\right|
\nonumber
\\
& \leq \left((\kappa_1^2 + \kappa_2^2)\left[ \frac{2\log\frac{12}{\delta}}{m} + \sqrt{\frac{2\log\frac{12}{\delta}}{m}}\right]
+ 2\sqrt{2}\kappa_1\kappa_2\sqrt{\dim(\H_{K^2})}\sqrt{\frac{2\log\frac{12}{\delta}}{m} + \sqrt{\frac{2\log\frac{12}{\delta}}{m}}}\right)^{1/2}
\\
&\quad +
3(\kappa_1 + \kappa_2)\sqrt{\frac{16}{N\delta^2} + \frac{2\sqrt{m}}{\sqrt{N}\delta}\left(3 + \frac{8}{\sqrt{N}\delta}\right)}.
\nonumber
\end{align}
}
\end{theorem}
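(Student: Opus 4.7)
The plan is to combine the two previous theorems via the triangle inequality. Writing $W_2^{\text{emp}} = W_2\left[\Ncal\left(0, \frac{1}{m}\hat{K}^1_{\Wbf^1}[\Xbf]\right), \Ncal\left(0, \frac{1}{m}\hat{K}^2_{\Wbf^2}[\Xbf]\right)\right]$, $W_2^{\text{mat}} = W_2\left[\Ncal\left(0, \frac{1}{m}K^1[\Xbf]\right), \Ncal\left(0, \frac{1}{m}K^2[\Xbf]\right)\right]$, and $W_2^{\text{true}} = W_2[\Ncal(0, C_{K^1}), \Ncal(0, C_{K^2})]$, the triangle inequality yields
\begin{equation*}
|W_2^{\text{emp}} - W_2^{\text{true}}| \leq |W_2^{\text{emp}} - W_2^{\text{mat}}| + |W_2^{\text{mat}} - W_2^{\text{true}}|.
\end{equation*}
The first term on the right is controlled directly by Theorem \ref{theorem:Wass-estimate-unknown-1} applied with confidence level $1-\delta/2$, producing the second summand in the claimed bound (with $\delta$ replaced by $\delta/2$).

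The second term requires slightly more care because Theorem \ref{theorem:Wasserstein-convergence-Gaussian-processes} is stated for the squared Wasserstein distance $W_2^2$, whereas here we need a bound on $W_2$. To convert, I would use the elementary inequality $|a-b|^2 \leq |a-b|(a+b) = |a^2 - b^2|$ valid for $a, b \geq 0$, which gives $|a - b| \leq \sqrt{|a^2 - b^2|}$. Applying this with $a = W_2^{\text{mat}}$ and $b = W_2^{\text{true}}$, and invoking Theorem \ref{theorem:Wasserstein-convergence-Gaussian-processes} with confidence level $1-\delta/2$, produces exactly the first summand $(\cdots)^{1/2}$ in the claimed bound.

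Finally, a union bound combines these two events: each holds with probability at least $1 - \delta/2$ on its respective product sample space, so their intersection holds with probability at least $1 - \delta$ on the full product space $(T,\nu)^m \times (\Omega_1,P_1)^N \times (\Omega_2,P_2)^N$, since the $\Xbf$-randomness is independent of the $\Wbf^1, \Wbf^2$-randomness. There is no substantive obstacle here beyond bookkeeping: the two intermediate theorems already carry the analytic weight (perturbation of $W_2^2$ via RKHS covariance and cross-covariance operators in the finite-dimensional case, and perturbation of empirical sample covariance matrices via Proposition \ref{proposition:concentration-empirical-covariance} combined with Theorem \ref{theorem:Sinkhorn-convergence}-type arguments for $W_2$). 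The only point that needs to be stated explicitly is the square-root conversion from the $W_2^2$-bound to a $W_2$-bound, which is why the first summand in the stated inequality appears under a $1/2$-power.
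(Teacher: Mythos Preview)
Your proposal is correct and matches the paper's own proof essentially line for line: the paper also combines Theorems \ref{theorem:Wasserstein-convergence-Gaussian-processes} and \ref{theorem:Wass-estimate-unknown-1} via the union-bound scheme of Theorem \ref{theorem:Sinkhorn-estimate-unknown-2}, and explicitly invokes the same elementary inequality $|a-b|^2 \leq |a^2-b^2|$ for $a,b \geq 0$ to pass from the $W_2^2$-bound to a $W_2$-bound.
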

In contrast to Theorems \ref{theorem:Sinkhorn-finite-sample-GaussianProcess} and \ref{theorem:Sinkhorn-estimate-unknown-2}, the convergence rates in Theorems \ref{theorem:Wasserstein-convergence-Gaussian-processes} and \ref{theorem:Wass-estimate-unknown-2}
both depend on $\dim(\H_{K^2})$. As an example, if $T = [0,1]^d$ and $K^2(x,y) = \la x,y\ra^D$, $D \in \Nbb$,
then \cite{CuckerSmale} $\dim(\H_{K^2}) = \begin{pmatrix} D +d-1\\d-1\end{pmatrix}$. 
Theorem \ref{theorem:Wass-estimate-unknown-2} provides the theoretical justification for the estimation of the $2$-Wasserstein distance in Algorithm \ref{algorithm:estimate-Sinkhorn} when  $\dim(\H_{K^2}) < \infty$.

\section{Estimation of Sinkhorn divergence via sample covariance operators}
\label{section:Sinkhorn-estimate-sample-covariance}
For comparison,
we now estimate
the Sinkhorn divergence via sample covariance operators, which is a standard approach 
in functional data analysis (see e.g. \cite{Panaretos:jasa2010,Horvath:2012functional}).
For $\xi \sim \GP(0,K)$ on the probability space $(\Omega, \Fcal, P)$, define the rank-one operator $\xi(\omega,.) \otimes \xi(\omega,.) \in \Lcal(\Lcal^2(T,\nu))$
by 	$[\xi(\omega,.) \otimes \xi(\omega,.)]f(x) = \xi(\omega,x)\int_{T}\xi(\omega,t)f(t)d\nu(t)$, 
$\omega \in \Omega$, $f \in \Lcal^2(T,\nu)$. Then
\begin{align}
	||\xi(\omega,.)\otimes \xi(\omega,.)||_{\HS(\Lcal^2(T,\nu))} &= \int_{T}\xi(\omega,t)^2d\nu(t) < \infty \;\;\text{$P$-almost surely}.
\end{align}
Thus $[\xi(\omega,.) \otimes \xi(\omega,.)] \in \HS(\Lcal^2(T,\nu))$ $P$-almost surely.
By Fubini Theorem (Lemma \ref{lemma:Fubini-switch}), 
\begin{align}
	C_K &= \bE[\xi \otimes \xi],\;\;
	C_Kf(x) = \bE\int_{T}\xi(\omega,x)\xi(\omega, t)f(t)d\nu(t)= \int_{T}K(x,t)f(t)d\nu(t).
\end{align}
Let $\Wbf = (\omega_j)_{j=1}^N$ be independently sampled from $(\Omega,P)$, corresponding to the samples
$\{\xi_j(t) = \xi(\omega_j,t)\}_{j=1}^N$ from $\xi$. It defines the pair of {\it sample covariance function/operator}
\begin{align}
	K_{\Wbf}(x,y) &= \frac{1}{N}\sum_{i=1}^N\xi(\omega_i,x)\xi(\omega_i,y),	\;\;C_{K,\Wbf} = \frac{1}{N}\sum_{i=1}^N\xi(\omega_i, .) \otimes \xi(\omega_i, .),
	\\
	C_{K,\Wbf}f(x) &= \frac{1}{N}\sum_{i=1}^N\int_{T}\xi(\omega_i, x)\xi(\omega_i,t)f(t)d\nu(t) = \int_{T}K_{\Wbf}(x,t)f(t)d\nu(t).
\end{align}
For each fixed $\Wbf$,
$K_{\Wbf}$ is 
symmetric, positive definite. It is continuous if the sample paths $\xi(\omega, .)$ are continuous $P$-almost surely, but not 
necessarily uniformly bounded over $\Wbf$ even if $K$ is bounded.
We always have, however, that $\H_{K_{\Wbf}} = \myspan\{\xi(\omega_j,.)\}_{j=1}^N\subset \Lcal^2(T,\nu)$ is a vector space of dimension at most $N$, $C_{K,\Wbf}$ is a finite-rank operator, together with the following
\begin{lemma}
	\label{lemma:sample-covariance-expectation}
	Under Assumptions 1-4 and 6, taking expectation with respect to $\Wbf$ gives
\begin{align}
\bE\int_{T}K_{\Wbf}(x,x)^2d\nu(x) & \leq 3\int_{T}K(x,x)^2d\nu(x)
\leq 3\kappa^4.
\end{align}
In particular $\bP\{\Wbf \in (\Omega,P)^N: \int_{T}K_{\Wbf}(x,x)^2d\nu(x) \leq \frac{3\kappa^4}{\delta}\} \geq 1-\delta$ for any $0 <\delta < 1$.
\end{lemma}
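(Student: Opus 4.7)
The plan is to compute $\bE\,K_{\Wbf}(x,x)^2$ pointwise, swap the expectation with the integral using Fubini, and then deduce the probability bound by Markov's inequality.

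First I would expand the empirical covariance at the diagonal. By definition,
\begin{align*}
K_{\Wbf}(x,x) = \frac{1}{N}\sum_{i=1}^N \xi(\omega_i,x)^2,\quad
K_{\Wbf}(x,x)^2 = \frac{1}{N^2}\sum_{i,j=1}^N \xi(\omega_i,x)^2\xi(\omega_j,x)^2.
\end{align*}
Taking expectation over $\Wbf=(\omega_1,\dots,\omega_N)$ sampled i.i.d.\ from $(\Omega,P)$, and using that $\xi(\omega,x)\sim \Ncal(0,K(x,x))$, I separate the diagonal from the off-diagonal terms. For $i=j$, the Gaussian fourth moment gives $\bE[\xi(\omega_i,x)^4]=3K(x,x)^2$; for $i\neq j$, independence of $\omega_i$ and $\omega_j$ yields $\bE[\xi(\omega_i,x)^2\xi(\omega_j,x)^2]=K(x,x)^2$. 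Hence
\begin{align*}
\bE\,K_{\Wbf}(x,x)^2 = \frac{3N+N(N-1)}{N^2}K(x,x)^2 = \left(1+\tfrac{2}{N}\right)K(x,x)^2 \leq 3 K(x,x)^2.
\end{align*}

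Next I would integrate in $x$ and swap the order of integration using Fubini's theorem (as stated in Lemma~\ref{lemma:Fubini-switch}); the nonnegativity of the integrand and Assumption 6 (which ensures the right-hand side is finite) justify the exchange. This yields
\begin{align*}
\bE\int_T K_{\Wbf}(x,x)^2 d\nu(x) = \int_T \bE\,K_{\Wbf}(x,x)^2 d\nu(x) \leq 3\int_T K(x,x)^2 d\nu(x) \leq 3\kappa^4,
\end{align*}
which is the first claimed inequality. Finally, Markov's inequality applied to the nonnegative random variable $\int_T K_{\Wbf}(x,x)^2 d\nu(x)$ immediately gives
\begin{align*}
\bP\Bigl\{\Wbf: \int_T K_{\Wbf}(x,x)^2 d\nu(x) > \tfrac{3\kappa^4}{\delta}\Bigr\} \leq \frac{\delta}{3\kappa^4}\cdot 3\kappa^4 = \delta,
\end{align*}
hence the complementary event holds with probability at least $1-\delta$.

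No step poses a real obstacle: the computation rests entirely on the Gaussian fourth-moment identity and independence across replicates, so the only thing one has to be careful about is justifying Fubini, which is routine given Assumption 6 and the nonnegativity of the integrand.
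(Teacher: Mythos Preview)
Your proof is correct and follows essentially the same route as the paper: both arguments expand $K_{\Wbf}(x,x)^2$, invoke the Gaussian fourth moment $\bE[\xi(\omega,x)^4]=3K(x,x)^2$, swap the order of integration via Tonelli, and finish with Markov's inequality. The only difference is that you compute the expectation exactly by splitting diagonal and off-diagonal terms to get $(1+2/N)K(x,x)^2$, whereas the paper applies the cruder inequality $\bigl(\sum_{i=1}^N a_i\bigr)^2 \leq N\sum_{i=1}^N a_i^2$ before taking expectation, arriving directly at $3K(x,x)^2$. Your route thus recovers a slightly sharper constant at no extra cost. One small remark: the reference to Lemma~\ref{lemma:Fubini-switch} is not quite apt, since that lemma treats a different integrand; the exchange here is justified simply by Tonelli's theorem for nonnegative functions, as you yourself note.
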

%
\begin{proposition}
	\label{proposition-concentration-sample-cov-operator}
	Assume Assumptions 1-4 and 6. Let $\Wbf = (\omega_j)_{j=1}^N$ be independently sampled from $(\Omega,P)$. 
	For any $0 < \delta < 1$, with probability at least $1-\delta$,
	\begin{align}
		||C_{K,\Wbf}||_{\HS(\Lcal^2(T,\nu))} \leq \frac{2\kappa^2}{\delta},\;\;\;||C_{K,\Wbf} - C_K||_{\HS(\Lcal^2(T,\nu))} \leq \frac{2\sqrt{3}\kappa^2}{\sqrt{N}\delta}.
	\end{align}
\end{proposition}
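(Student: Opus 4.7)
The idea is to express $C_{K,\Wbf}$ as an i.i.d. Hilbert-space-valued sample mean and apply a Bochner--Markov argument with a union bound to obtain both estimates simultaneously. Set $\eta: (\Omega, P) \to \HS(\Lcal^2(T,\nu))$ by $\eta(\omega) = \xi(\omega, \cdot) \otimes \xi(\omega, \cdot)$, so that $C_{K,\Wbf} = \frac{1}{N}\sum_{j=1}^N \eta(\omega_j)$ and, as already recorded in the excerpt, $\bE\,\eta = C_K$.

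The key ingredient is a fourth-moment bound on $\eta$. Since $\|u \otimes u\|_{\HS} = \|u\|^2$,
\begin{align*}
\|\eta(\omega)\|_{\HS}^2 = \left(\int_T \xi(\omega,t)^2 \, d\nu(t)\right)^2 = \int_{T \times T} \xi(\omega,s)^2 \xi(\omega,t)^2 \, d\nu(s) d\nu(t).
\end{align*}
Taking expectation, Fubini together with the Isserlis/Wick identity $\bE[\xi(s)^2\xi(t)^2] = K(s,s)K(t,t) + 2K(s,t)^2$ and the pointwise bound $K(s,t)^2 \leq K(s,s)K(t,t)$ yield
\begin{align*}
\bE \|\eta\|_{\HS}^2 \leq 3 \left(\int_T K(t,t) \, d\nu(t)\right)^2 \leq 3\kappa^4,
\end{align*}
while the first-moment counterpart is just $\bE \|\eta\|_{\HS} = \int_T K(t,t) \, d\nu(t) \leq \kappa^2$.

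For the first inequality, the triangle inequality (Jensen for the Bochner integral) gives $\bE \|C_{K,\Wbf}\|_{\HS} \leq \bE \|\eta\|_{\HS} \leq \kappa^2$, so Markov's inequality yields $\|C_{K,\Wbf}\|_{\HS} \leq 2\kappa^2/\delta$ with probability at least $1 - \delta/2$. For the second, the standard i.i.d.\ variance decomposition gives
\begin{align*}
\bE \|C_{K,\Wbf} - C_K\|_{\HS}^2 = \frac{1}{N}\left(\bE \|\eta\|_{\HS}^2 - \|C_K\|_{\HS}^2\right) \leq \frac{3\kappa^4}{N},
\end{align*}
whence Jensen yields $\bE \|C_{K,\Wbf} - C_K\|_{\HS} \leq \sqrt{3}\kappa^2/\sqrt{N}$ and Markov delivers $\|C_{K,\Wbf} - C_K\|_{\HS} \leq 2\sqrt{3}\kappa^2/(\sqrt{N}\delta)$ with probability at least $1 - \delta/2$. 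A union bound over the two events gives the joint statement.

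The only genuinely nontrivial step is the fourth-moment computation $\bE\|\eta\|_{\HS}^2 \leq 3\kappa^4$, but this is essentially the same Gaussian calculation that underlies Lemma \ref{lemma:sample-covariance-expectation}; everything else is routine Markov and i.i.d.\ algebra. Note that the sharper Pinelis-type bound used in the bounded-kernel setting (Proposition \ref{proposition:concentration-TK2K1-empirical}) is not available here, since $\|\eta\|_{\HS} = \int_T \xi^2 d\nu$ is not uniformly bounded over $\omega$; this is precisely why the confidence dependence degrades from $\log(1/\delta)$ to the weaker $1/\delta$.
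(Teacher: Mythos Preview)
Your proposal is correct and follows essentially the same Markov/union-bound strategy as the paper's proof. The only minor difference is in the fourth-moment step: the paper bounds $\bE\|\eta\|_{\HS}^2 = \bE\|\xi\|_{\Lcal^2}^4$ via Jensen's inequality $\left(\int_T \xi^2\right)^2 \leq \int_T \xi^4$ and then $\bE[\xi(t)^4] = 3K(t,t)^2$ (thus invoking Assumption~6), whereas you expand the double integral and apply the Isserlis identity directly---your route in fact only needs Assumption~3.
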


Combining Theorem \ref{theorem:Sinkhorn-convergence} and Proposition \ref{proposition-concentration-sample-cov-operator}, we obtain the following result.
\begin{theorem}
	[\textbf{Estimation of Sinkhorn divergence between Gaussian processes from sample covariance operators}]
	\label{theorem:Sinkhorn-convergence-sample-covariance-operator}
	Under Assumptions 1-4 and 6, let $\Wbf^i = (\omega^i_j)_{j=1}^N$, $i=1,2$, be independently sampled from 
	$(\Omega_i, P_i)$.
	$\forall 0 < \delta < 1$, with probability at least $1-\delta$,
	\begin{align}
		&\left|\Srm^{\ep}_2[\Ncal(0,C_{K^1,\Wbf^1}), \Ncal(0,C_{K^2,\Wbf^2})] - \Srm^{\ep}_2[\Ncal(0,C_{K^1}),\Ncal(0,C_{K^2})]\right|
		\nonumber
		\\
		&\leq \frac{12\sqrt{3}}{\ep\delta}\left(\left(1+\frac{4}{\delta}\right)\kappa_1^4 + \left(3+\frac{8}{\delta}\right)\kappa_1^2\kappa^2 + \kappa_2^4\right)\frac{1}{\sqrt{N}}.
	\end{align}
\end{theorem}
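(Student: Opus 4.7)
The plan is to combine Theorem \ref{theorem:Sinkhorn-convergence}, which controls the variation of $\Srm^{\ep}_2$ in terms of Hilbert-Schmidt norms of the underlying covariance operators, with the concentration estimate for sample covariance operators given by Proposition \ref{proposition-concentration-sample-cov-operator}. Since the two samples $\Wbf^1$ and $\Wbf^2$ are drawn from independent probability spaces, the associated concentration events are independent and can be handled by a union bound at level $\delta/2$.

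Concretely, I would apply Theorem \ref{theorem:Sinkhorn-convergence} on $\H = \Lcal^2(T,\nu)$ with the substitutions $A = C_{K^1}$, $B = C_{K^2}$, $A_n = C_{K^1,\Wbf^1}$, $B_n = C_{K^2,\Wbf^2}$, noting that all four operators lie in $\Sym^{+}(\H) \cap \Tr(\H)$ (the empirical ones are finite-rank, hence trace class). This yields a deterministic bound of the form
\[
\frac{3}{\ep}\bigl[\|A_n\|_{\HS}+\|A\|_{\HS}+2\|B\|_{\HS}\bigr]\|A_n - A\|_{\HS}
+ \frac{3}{\ep}\bigl[2\|A_n\|_{\HS}+\|A\|_{\HS}+\|B\|_{\HS}\bigr]\|B_n - B\|_{\HS}.
\]

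Under Assumption 6, the bound $\|C_{K^i}\|^2_{\HS} = \int_{T \times T} K^i(x,t)^2\,d\nu(x)d\nu(t) \leq \kappa_i^4$ gives $\|C_{K^i}\|_{\HS} \leq \kappa_i^2$. Applying Proposition \ref{proposition-concentration-sample-cov-operator} with confidence parameter $\delta/2$ to each process separately, and combining via the independence of $\Wbf^1$ and $\Wbf^2$ with a union bound, yields the simultaneous estimates
\[
\|C_{K^i,\Wbf^i}\|_{\HS} \leq \frac{4\kappa_i^2}{\delta}, \qquad
\|C_{K^i,\Wbf^i} - C_{K^i}\|_{\HS} \leq \frac{4\sqrt{3}\,\kappa_i^2}{\sqrt{N}\,\delta}, \qquad i=1,2,
\]
with probability at least $1-\delta$ on the product space $(\Omega_1,P_1)^N \times (\Omega_2,P_2)^N$.

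It then remains to substitute these six bounds into the deterministic estimate, factor out $\frac{12\sqrt{3}}{\ep\sqrt{N}\delta}$, and collect like terms in $\kappa_1^4$, $\kappa_1^2\kappa_2^2$, $\kappa_2^4$. Each bracket contributes a $\kappa_i^2$-multiple of either $(1,1,2)$ or $(2,1,1)$ combinations of the norms $(\|A_n\|,\|A\|,\|B\|)$ or $(\|B_n\|,\|A\|,\|B\|)$, and once the $1/\delta$ factors from $\|A_n\|,\|B_n\|$ are propagated, the coefficients consolidate exactly into $(1+4/\delta)\kappa_1^4 + (3+8/\delta)\kappa_1^2\kappa_2^2 + \kappa_2^4$, matching the claim. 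The main (and really only) obstacle is this bookkeeping step: tracking which $\kappa_i^2$ factor multiplies which norm bound so that the final expansion produces the precise coefficients stated; no new analytic input beyond the two cited results and the union bound is needed.
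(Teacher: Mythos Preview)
Your proposal is correct and follows essentially the same route as the paper: apply Theorem \ref{theorem:Sinkhorn-convergence} to obtain the deterministic Hilbert--Schmidt bound, invoke Proposition \ref{proposition-concentration-sample-cov-operator} at level $\delta/2$ for each process, intersect the two events, and substitute. One minor remark: the bound on $\|C_{K^2,\Wbf^2}\|_{\HS}$ is never actually needed (only $\|A_n\|_{\HS}$ appears in the brackets of Theorem \ref{theorem:Sinkhorn-convergence}), and the union bound does not require independence of $\Wbf^1,\Wbf^2$---subadditivity suffices.
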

On a set $\Xbf = (x_i)_{i=1}^m \in T^m$, the Gram matrix of $K_{\Wbf}$ is precisely $\hat{K}_{\Wbf}[\Xbf]$,
as in Eq.\eqref{equation:K-hat-W}.
Combining Lemma \ref{lemma:sample-covariance-expectation}, Theorem \ref{theorem:Sinkhorn-finite-sample-GaussianProcess-unbounded}, and Theorem \ref{theorem:Sinkhorn-convergence-sample-covariance-operator}, we obtain the following result
\begin{theorem}
		[\textbf{Estimation of Sinkhorn divergence between Gaussian processes from finite samples - general kernels}]
	\label{theorem:Sinkhorn-convergence-sample-covariance-operator-finite-sample}
	Under Assumptions 1-4 and 6,
	let $\Wbf^1 = (\omega_j^1)_{j=1}^N$, $\Wbf^2 = (\omega_j^2)_{j=1}^N$ be independently sampled from
	$(\Omega_1, P_1)$ and $(\Omega_2, P_2)$, respectively. Let $\Xbf = (x_i)_{i=1}^m$
	be independently sampled from $(T, \nu)$.
	For any $0 < \delta<1$, with probability at least $1-\delta$,
{\small
	\begin{align}
		&\left|\Srm^{\ep}_2\left[\Ncal\left(0, \frac{1}{m}\hat{K}^1_{\Wbf^1}[\Xbf]\right), \Ncal\left(0, \frac{1}{m}\hat{K}^2_{\Wbf^2}[\Xbf]\right)\right] - \Srm^{\ep}_2[\Ncal(0,C_{K^1}), \Ncal(0,C_{K^2})]\right|
		\nonumber
		\\
		&\leq \frac{48\sqrt{3}}{\ep\delta}\left(\left(1+\frac{16}{\delta}\right)\kappa_1^4 + \left(3+\frac{32}{\delta}\right)\kappa_1^2\kappa^2 + \kappa_2^4\right)\frac{1}{\sqrt{N}}
+\frac{864}{\ep\delta}(\kappa_1^2 + \kappa_2^2)^2\left(1+\frac{12}{\delta}\right)\frac{1}{\sqrt{m}}.
		\end{align}
	}
\end{theorem}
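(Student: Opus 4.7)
The plan is a triangle-inequality decomposition through the intermediate Sinkhorn divergence of the sample covariance operators. Let $\Delta$ denote the left-hand side of the claimed inequality, and split it as $\Delta \leq \Delta_A + \Delta_B$, where
\[
\Delta_B = \bigl|\Srm^{\ep}_2[\Ncal(0,C_{K^1,\Wbf^1}),\Ncal(0,C_{K^2,\Wbf^2})] - \Srm^{\ep}_2[\Ncal(0,C_{K^1}),\Ncal(0,C_{K^2})]\bigr|
\]
measures the discrepancy between the sample-covariance-operator Sinkhorn divergence and the true one, and $\Delta_A$ is the remaining Gram-matrix-versus-covariance-operator discrepancy associated with the random sample covariance kernels $K^1_{\Wbf^1}$, $K^2_{\Wbf^2}$.

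The term $\Delta_B$ is exactly what Theorem \ref{theorem:Sinkhorn-convergence-sample-covariance-operator} bounds; applying it with confidence parameter $\delta/4$ in place of $\delta$ produces the first $1/\sqrt{N}$ summand of the claimed inequality on an event of probability at least $1-\delta/4$ over $(\Wbf^1,\Wbf^2)$, and this step is essentially a relabeling of an already-proved estimate.

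For $\Delta_A$ the key observation is that the $m\times m$ matrix $\hat{K}^i_{\Wbf^i}[\Xbf]$ is precisely the Gram matrix at $\Xbf$ of the (random) sample covariance kernel $K^i_{\Wbf^i}$, while $C_{K^i,\Wbf^i}$ is the $\Lcal^2(T,\nu)$ covariance operator associated to $K^i_{\Wbf^i}$. Hence, conditionally on $(\Wbf^1,\Wbf^2)$ --- which are independent of $\Xbf$ by the product structure of the probability space --- the random kernels may be treated as deterministic, and $\Delta_A$ fits exactly the hypotheses of Theorem \ref{theorem:Sinkhorn-finite-sample-GaussianProcess-unbounded}, provided each $K^i_{\Wbf^i}$ satisfies Assumption 6 with an explicit constant. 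Lemma \ref{lemma:sample-covariance-expectation} supplies such a constant: applied at confidence level $\delta/4$ to each of $\Wbf^1,\Wbf^2$, it gives $\int_T [K^i_{\Wbf^i}(x,x)]^2 d\nu(x) \leq 12\kappa_i^4/\delta$ on an event of probability at least $1-\delta/4$, so on this event $K^i_{\Wbf^i}$ satisfies Assumption 6 with constant $\tilde{\kappa}_i^2 = \kappa_i^2\sqrt{12/\delta}$. Invoking Theorem \ref{theorem:Sinkhorn-finite-sample-GaussianProcess-unbounded} conditionally at confidence $\delta/4$ with these constants produces a bound of the form $\mathrm{const}\cdot (\kappa_1^2+\kappa_2^2)^2(1+c_1/\delta)/(\ep\,\delta\,\sqrt{m})$, which up to absolute constants is the second summand of the claimed inequality.

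A union bound over the four high-probability events (one for $\Delta_B$, two for the Lemma \ref{lemma:sample-covariance-expectation} applications, and one for the conditional invocation of Theorem \ref{theorem:Sinkhorn-finite-sample-GaussianProcess-unbounded}) yields the stated $1-\delta$ conclusion on the product space $(\Omega_1,P_1)^N \times (\Omega_2,P_2)^N \times (T,\nu)^m$. The main technical obstacle is the conditioning step for $\Delta_A$: one must verify that, on the Lemma \ref{lemma:sample-covariance-expectation} event, the random covariance function $K^i_{\Wbf^i}$ is genuinely admissible as a kernel satisfying Assumption 6 with the explicit constant $\tilde{\kappa}_i$ (noting that $K^i_{\Wbf^i}$ is automatically symmetric positive definite and in $\Lcal^2(T\times T,\nu\otimes\nu)$ by construction), and that the independence of $\Xbf$ from $(\Wbf^1,\Wbf^2)$ permits freezing $\tilde{\kappa}_i$ when Theorem \ref{theorem:Sinkhorn-finite-sample-GaussianProcess-unbounded} is invoked with random $\Xbf$; tighter bookkeeping of the $\delta$-splitting produces the specific constants in the statement.
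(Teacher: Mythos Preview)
Your proposal is correct and follows essentially the same route as the paper's proof: both split through $\Srm^{\ep}_2[\Ncal(0,C_{K^1,\Wbf^1}),\Ncal(0,C_{K^2,\Wbf^2})]$, handle $\Delta_B$ via Theorem \ref{theorem:Sinkhorn-convergence-sample-covariance-operator}, and handle $\Delta_A$ by using Lemma \ref{lemma:sample-covariance-expectation} to secure an Assumption 6 constant for the random kernels $K^i_{\Wbf^i}$ and then invoking Theorem \ref{theorem:Sinkhorn-finite-sample-GaussianProcess-unbounded} conditionally on $(\Wbf^1,\Wbf^2)$. The only difference is bookkeeping: the paper allocates $\delta/8$ to each Lemma \ref{lemma:sample-covariance-expectation} application (so $\tilde\kappa_i = (24/\delta)^{1/4}\kappa_i$), $\delta/4$ to Theorem \ref{theorem:Sinkhorn-convergence-sample-covariance-operator}, and $\delta/2$ to the conditional Theorem \ref{theorem:Sinkhorn-finite-sample-GaussianProcess-unbounded} step, whereas you use $\delta/4$ uniformly for the four events; either allocation yields the stated constants after arithmetic.
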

We note that a similar version of Theorem \ref{theorem:Sinkhorn-convergence-sample-covariance-operator-finite-sample}
can also be obtained from Theorem \ref{theorem:Sinkhorn-finite-sample-GaussianProcess-unbounded}.

\section{Divergences between covariance operators of stochastic processes}
\label{section:general-stochastic-process}
Assume
 that $\xi^1, \xi^2$ are centered stochastic processes, {\it not} necessarily Gaussian, with covariance functions $K^1,K^2$ and
paths in $\Lcal^2(T,\nu)$.  Then
$W_2[\Ncal(0, C_{K^1}), \Ncal(0, C_{K^2})]$,
$\Srm^{\ep}_2[\Ncal(0, C_{K^1}), \Ncal(0, C_{K^2})]$ are distance/divergence
between the two covariance operators $C_{K^1}, C_{K^2}$ associated with $\xi^1,\xi^2$.

{\bf Assumption 6(*)}
$\xi, \xi^i$, $i=1,2$, are centered stochastic processes
and $\exists \kappa, \kappa_i > 0$ with
\begin{align}
	\label{equation:assumption-general}
	\bE\int_{T}\xi(\omega, x)^4d\nu(x) \leq 3 \kappa^4,\;\;
\bE\int_{T}\xi^i(\omega, x)^4d\nu(x) \leq 3\kappa_i^4.
\end{align}
For $\xi \sim \Ncal(0, K)$, $\xi^i \sim \Ncal(0, K^i)$, Assumption 6(*) reduces to Assumption 6, as follows.
\begin{lemma}
	\label{lemma:fourth-moment-Gaussian-process}
	Under Assumption 6(*), 
	$\xi(\omega, .) \in \Lcal^2(T,\nu)$ $P$-almost surely. Furthermore, 
	$\int_{T}K(x,x)^2d\nu(x) \leq 3\kappa^4$ and $\bE||\xi||^4_{\Lcal^2(T,\nu)} \leq 3\kappa^4$.
	In particular, if $\xi \sim \Ncal(0, K)$, then
	\begin{align}
		\bE[\xi(.,x)^4] = 3K(x,x)^2 \;\forall x \in T, \;\;\bE\int_{T}\xi(\omega, x)^4d\nu(x) = 3\int_{T}K(x,x)^2d\nu(x).
	\end{align}
\end{lemma}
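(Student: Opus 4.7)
The plan is to derive each assertion by combining Tonelli's theorem (justified since the integrands are nonnegative), Cauchy--Schwarz in $\Lcal^2(T,\nu)$ (which is effectively Jensen because, by Assumption~2, $\nu$ is a probability measure, so $\nu(T)=1$), and, for the Gaussian-specific identities, the standard fourth-moment formula $\bE[Y^4]=3\sigma^4$ for $Y\sim\Ncal(0,\sigma^2)$. Nothing deep is required; the content is really that Assumption~6($*$) is the ``right'' fourth-moment hypothesis and that it immediately implies the second-moment conclusions one actually uses.

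First I would show that $\xi(\omega,\cdot)\in\Lcal^2(T,\nu)$ $P$-almost surely. By Tonelli,
\[
\bE\int_T \xi(\omega,x)^4\,d\nu(x)
= \int_T \bE[\xi(\omega,x)^4]\,d\nu(x) \le 3\kappa^4,
\]
so $\int_T \xi(\omega,x)^4\,d\nu(x)<\infty$ $P$-a.s. Then Cauchy--Schwarz together with $\nu(T)=1$ gives
\[
\int_T \xi(\omega,x)^2\,d\nu(x)
\le \Bigl(\int_T \xi(\omega,x)^4\,d\nu(x)\Bigr)^{1/2}<\infty
\quad P\text{-a.s.}
\]
Second, since $\xi$ is centered, $K(x,x)=\bE[\xi(\omega,x)^2]$, and Jensen on the probability space $(\Omega,P)$ gives $K(x,x)^2\le\bE[\xi(\omega,x)^4]$ pointwise in $x$; integrating against $\nu$ and applying Tonelli yields $\int_T K(x,x)^2\,d\nu(x)\le 3\kappa^4$. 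Third, Cauchy--Schwarz against $1\in\Lcal^2(T,\nu)$ gives $\|\xi(\omega,\cdot)\|_{\Lcal^2}^4=\bigl(\int_T\xi^2\,d\nu\bigr)^2\le \int_T\xi^4\,d\nu$, and taking expectation produces $\bE\|\xi\|_{\Lcal^2}^4\le 3\kappa^4$.

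For the Gaussian-specific part, for each fixed $x\in T$ the scalar $\xi(\omega,x)$ is $\Ncal(0,K(x,x))$, so the fourth-moment identity for a centered Gaussian yields $\bE[\xi(\omega,x)^4]=3K(x,x)^2$. Integrating both sides against $\nu$ and switching expectation and integral via Tonelli gives $\bE\int_T\xi(\omega,x)^4\,d\nu(x)=3\int_T K(x,x)^2\,d\nu(x)$. This also shows that in the Gaussian setting Assumption~6($*$) is \emph{equivalent} to Assumption~6 with the same constant $\kappa$, explaining why the constant $3$ appears on the right-hand side of \eqref{equation:assumption-general}.

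There is no genuine obstacle here; the only care needed is bookkeeping to keep straight which variable is being integrated against which measure, and to invoke Tonelli rather than Fubini (all integrands are nonnegative, so no integrability need be checked a priori). The single quantitative input is the Gaussian fourth-moment formula, which is what fixes the factor of $3$ and makes Assumption~6($*$) reduce cleanly to Assumption~6 in the Gaussian case.
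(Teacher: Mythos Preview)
Your proof is correct and follows essentially the same approach as the paper: Tonelli, Cauchy--Schwarz/Jensen on $(T,\nu)$ and on $(\Omega,P)$, and the Gaussian fourth-moment identity $\bE[Y^4]=3\sigma^4$. The only cosmetic difference is in the order of the steps for showing $\xi(\omega,\cdot)\in\Lcal^2(T,\nu)$ a.s.: the paper first bounds $\bE\|\xi\|_{\Lcal^2}^4$ and then deduces $\bE\|\xi\|_{\Lcal^2}^2<\infty$, whereas you argue via $\int_T\xi^4\,d\nu<\infty$ a.s.\ and then Cauchy--Schwarz pathwise; both are valid.
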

Hence, Proposition \ref{proposition:concentration-TK2K1-empirical-unbounded}, 
\ref{proposition-concentration-sample-cov-operator}, and Theorems
\ref{theorem:Sinkhorn-finite-sample-GaussianProcess-unbounded} \ref{theorem:Sinkhorn-convergence-sample-covariance-operator}, \ref{theorem:Sinkhorn-convergence-sample-covariance-operator-finite-sample} for $\Srm^{\ep}_2$ carry over virtually unchanged, except
for some absolute constant factors. We note that
condition 
$\bE||\xi||^4_{\Lcal^2(T,\nu)} \leq 3\kappa^4$ is sufficient for proving Proposition \ref{proposition-concentration-sample-cov-operator} and Theorem \ref{theorem:Sinkhorn-convergence-sample-covariance-operator}. Similar results also hold
for $W_2$.

\section{Numerical experiments}
\label{section:experiments}

We demonstrate $W_2$ and $\S^{\ep}_2$ 
on the following Gaussian processes
$\xi^i = \GP(0,K^i)$, $i=1,2$, on $T = [0,1]^d \subset \R^d$, where $d = 1,5,50$, with
\begin{align}
	\label{equation:GP-process-example}
K^1(x,y) = \exp(-a||x-y||),\;\;\; K^2(x,y) = \exp\left(-\frac{1}{\sigma^2}||x-y||^2\right),
\end{align}
In the experiments, we fix $a = 1,\sigma = 0.1$.
Figure \ref{figure:Gaussian-process-sample} shows samples of these processes for $d=1$.

(i) Let $\Xbf = (x_i)_{i=1}^m$ be randomly chosen from $T$, where $m=10,20,30,\ldots, 1000$. 
We plot in Figures \ref{figure:Gaussian-process-sample} and \ref{figure:different-divergences} the following divergences between $(1/m)K^1[\Xbf]$ and $(1/m)K^2[\Xbf]$: $||\;||^2_{\HS}$ (squared Hilbert-Schmidt), 
$W^2_2$ (squared Wasserstein), and $\Srm^{\ep}_2$ (Sinkhorn, $\ep =0.1$ and $\ep=0.5$). 

(ii) Consider two sets of $N$ sample paths from each process, each path sampled at $m=500$ points
$\Xbf =(x_i)_{i=1}^m$, which are randomly chosen and fixed in advance from $T$.
We then  compute the different divergences using Algorithm \ref{algorithm:estimate-Sinkhorn}, for $N =10, 20, \ldots, 1000$ (Figure \ref{figure:different-divergences-approx}).

In agreement with theory, 
the convergence of the Sinkhorn divergence and Hilbert-Schmidt distance, being dimension-independent, is consistent across different dimensions,
whereas the convergence of the Wasserstein distance is slower the larger the dimension $d$ is.

\begin{figure}
	\begin{subfigure}{0.3\textwidth}
		\includegraphics[width=\textwidth]{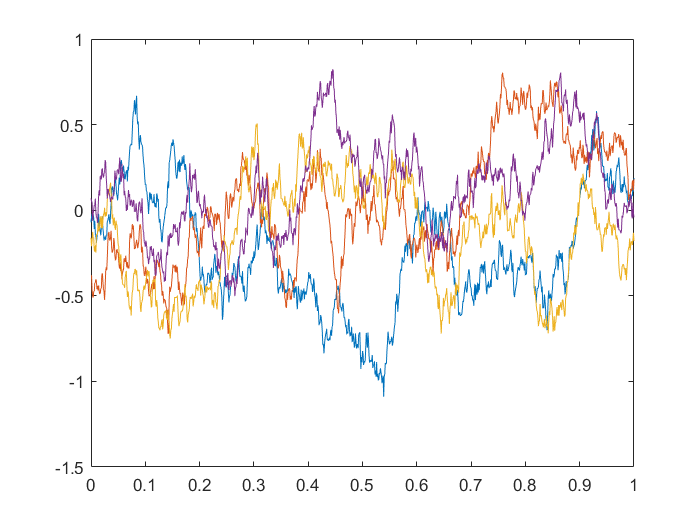}
		\end{subfigure}
	\begin{subfigure}{0.3\textwidth}
		\includegraphics[width=\textwidth]{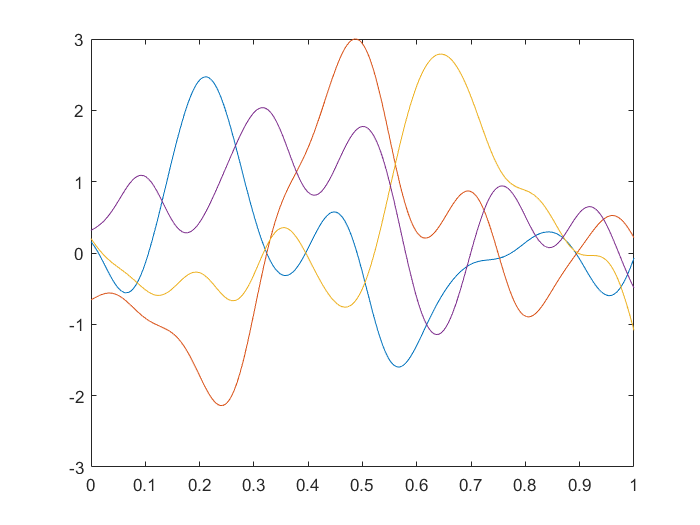}
		\end{subfigure}
	\begin{subfigure}{0.3\textwidth}
		\includegraphics[width =\textwidth]{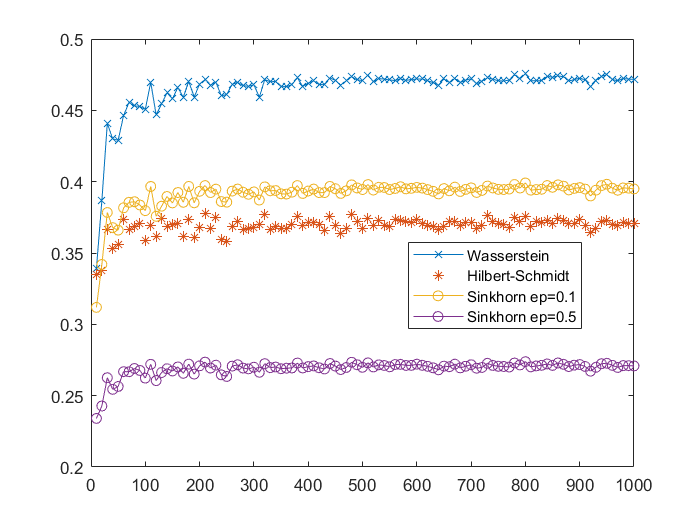}
		\end{subfigure}
		\caption{Samples of the centered Gaussian processes defined in Eq.\eqref{equation:GP-process-example}
		on $T = [0,1]$ and approximations of divergences/squared distances between them.
		Left: $K^1(x,y) = \exp(-a||x-y||)$, $a=1$. Right: $K^2(x,y) = \exp(-||x-y||^2/\sigma^2)$, $\sigma = 0.1$.
	Here $m =10,20,\ldots, 1000$.}.
		\label{figure:Gaussian-process-sample}
\end{figure}

\begin{figure}
	\begin{center}
		\includegraphics[width = 0.35\textwidth]{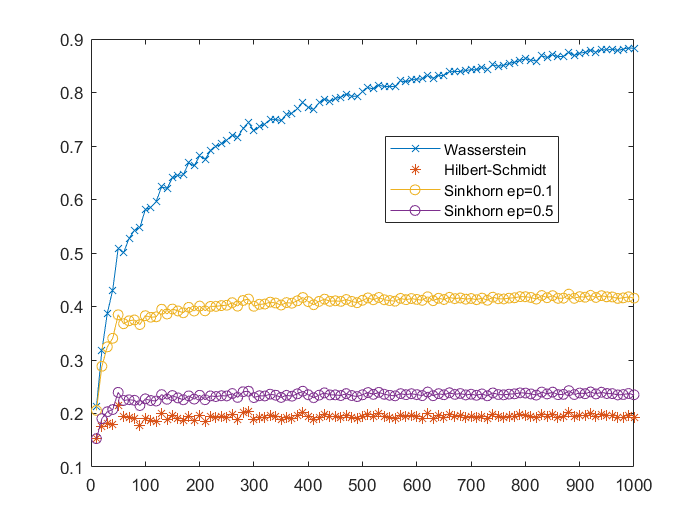}
		\includegraphics[width = 0.35\textwidth]{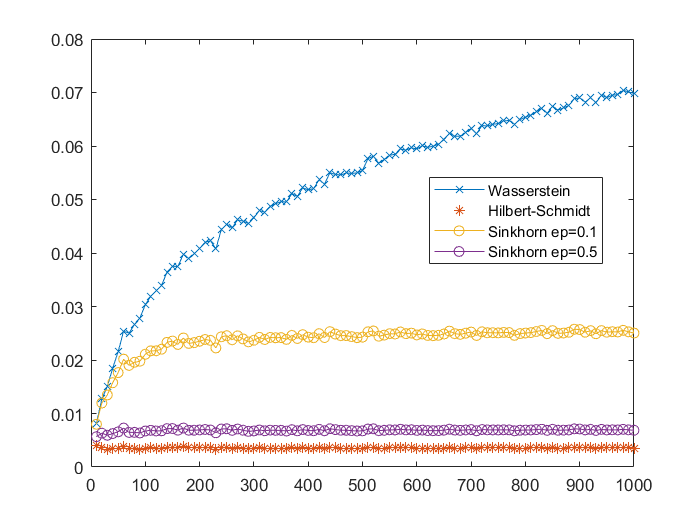}
		\caption{Approximate divergences/squared distances between  the Gaussian processes defined in Eq.\eqref{equation:GP-process-example}, using normalized $m \times m$ covariance matrices,
			on $T = [0,1]^d \subset \R^d$. Left: $d = 5$. Right: $d=50$.
	Here $m =10,20,\ldots, 1000$.	
	}
		\label{figure:different-divergences}
	\end{center}
\end{figure}

\begin{figure}
	\begin{center}
		\includegraphics[width = 0.3\textwidth]{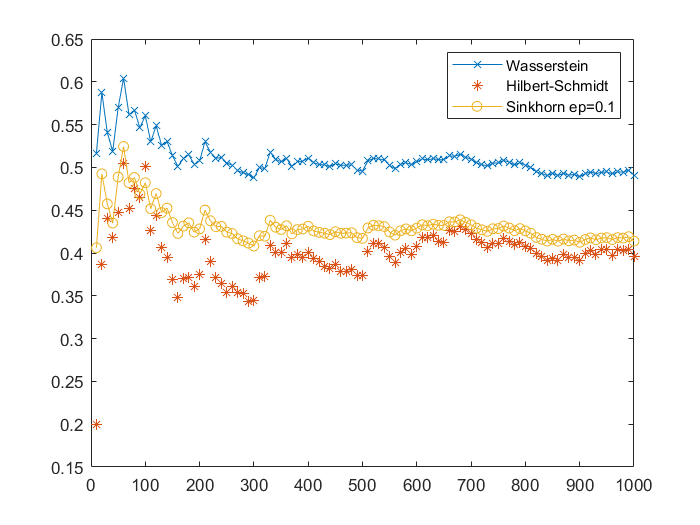}
		\includegraphics[width = 0.3\textwidth]{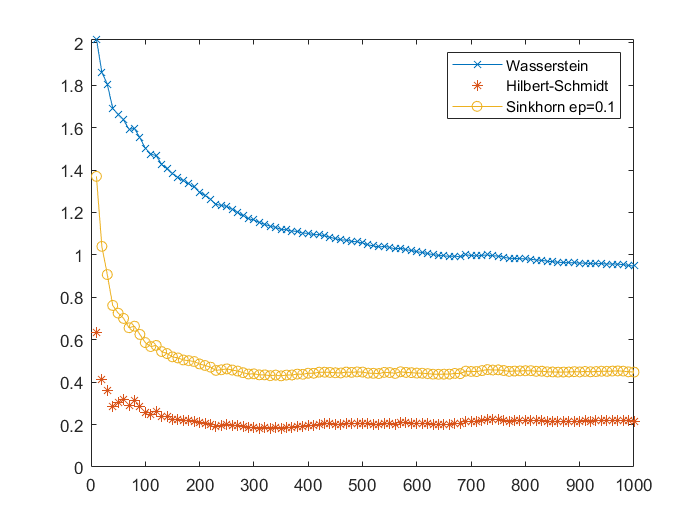}
		\includegraphics[width = 0.3\textwidth]{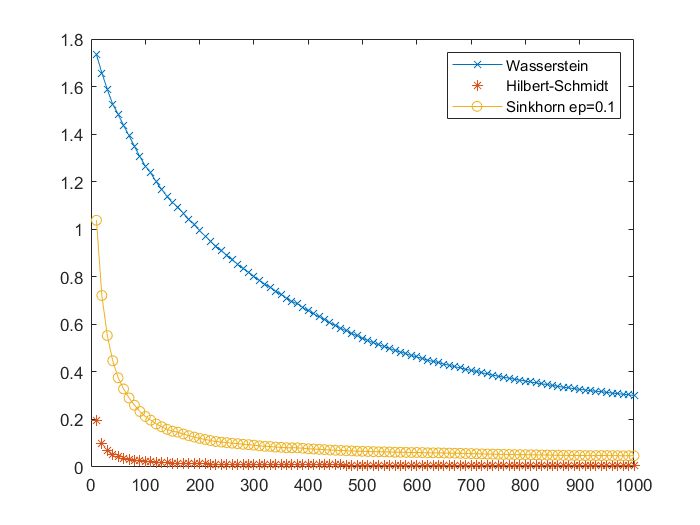}
		\caption{Approximate divergences/squared distances between the Gaussian processes defined in Eq.\eqref{equation:GP-process-example}
			on $T = [0,1]^d \subset \R^d$. Left: $d = 1$. Middle: $d=5$. Right: $d=50$.
			The estimation is obtained using $N$ realizations of each process, sampled at $m=500$ points,
			according to Algorithm \ref{algorithm:estimate-Sinkhorn}.
			Here $N =10,20,\ldots, 1000$.	
		}
		\label{figure:different-divergences-approx}
	\end{center}
\end{figure}

\section{Proofs of main results}
\label{section:proofs}

\begin{proof}
	[\textbf{Proof of Lemma \ref{lemma:R12-HS}}]
	It suffices to prove this for the case $i=1,j=2$.
	Let $\{e_k\}_{k\in \Nbb}$ be any orthonormal basis in $\H_{K^2}$, then
	\begin{align*}
		&||R_{12}||_{\HS(\H_{K^2}, \H_{K^1})}^2 = \sum_{k=1}^{\infty}||R_{12}e_k||^2_{\H_{K^1}} = \sum_{k=1}^{\infty}\left\|\int_{T}K^1_x \la e_k, K^2_x\ra_{\H_{K^2}}d\nu(x)\right\|^2_{\H_{K^1}}
		\\
		& \leq \sum_{k=1}^{\infty}\left(\int_{T}||K^1_x||_{\H_{K^1}}|\la e_k, K^2_x\ra_{\H_{K^2}}|d\nu(x)\right)^2
		\leq \sum_{k=1}^{\infty}\int_{T}||K^1_x||^2_{\H_{K^1}}d\nu(x)
		\int_{T}|\la e_k, K^2_x\ra_{\H_{K^2}}|^2d\nu(x)
		\\
		& =\int_{T}||K^1_x||^2_{\H_{K^1}}d\nu(x)\int_{T}\sum_{k=1}^{\infty}|\la e_k, K^2_x\ra_{\H_{K^2}}|^2d\nu(x)
		\;\text{by the Monotone Convergence Theorem}
		\\
		& = \int_{T}||K^1_x||^2_{\H_{K^1}}d\nu(x)\int_{T}||K^2_x||^2_{\H_{K^2}}d\nu(x)
		=\int_{T}K^1(x,x)d\nu(x)\int_{T}K^2(x,x)d\nu(x) \leq \kappa_1^2\kappa_2^2.
	\end{align*}
\end{proof}

\begin{proof}
	[\textbf{Proof of Lemma \ref{lemma:R12-empirical}}]
	For any $f \in \H_{K^2}$, $g \in \H_{K^1}$,
	\begin{align*}
		&\la g, R_{12,\Xbf}f\ra_{\H_{K^1}} = \left\la g, \frac{1}{m}\sum_{i=1}^mf(x_i)K^1_{x_i}\right\ra_{\H_{K^1}} = \frac{1}{m}\sum_{i=1}^mf(x_i)g(x_i)=\la R_{21,\Xbf}g, f\ra_{\H_{K^2}},
	\end{align*}
	showing that $R_{12,\Xbf}^{*} = R_{21,\Xbf}$. It follows that for any $f \in \H_{K^2}$,
	\begin{align*}
		&R_{12,\Xbf}^{*}R_{12,\Xbf}f = \frac{1}{m}\sum_{i=1}^mR_{12,\Xbf}^{*}f(x_i)K^1_{x_i} = \frac{1}{m^2}\sum_{i,j=1}^mf(x_i)K^2_{x_j}\la K^1_{x_i}, K^1_{x_j}\ra_{\H_{K^1}}
		\\
		& = \frac{1}{m^2}\sum_{i,j=1}^mf(x_i)K^1(x_i, x_j)K^2_{x_j}.
\\
		&R_{12,\Xbf}^{*}R_{12,\Xbf}K^2_{x_k} = \frac{1}{m^2}\sum_{i,j=1}^mK^2(x_k,x_i)K^1(x_i,x_j)K^2_{x_j}
		= \frac{1}{m^2}\sum_{j=1}^m(K^2[\Xbf]K^1[\Xbf])_{kj}K^2_{x_j}.
	\end{align*}
	It follows that, in the $\myspan\{K^2_{x_i}\}_{i=1}^m$, the matrix representation of $R_{12,\Xbf}^{*}R_{12,\Xbf}:\H_{K^2,\Xbf} \mapto \H_{K^2,\Xbf}$
	is $\frac{1}{m^2}(K^2[\Xbf]K^1[\Xbf])^T = \frac{1}{m^2}K^1[\Xbf]K^2[\Xbf]$.
\end{proof}

\begin{lemma}
	[\textbf{Corollary 5 in \cite{Minh:2021EntropicConvergenceGaussianMeasures}}]
	\label{lemma:square-root-trace-norm}
	For $A,B \in \Sym^{+}(\H) \cap \Tr(\H)$,
	\begin{align}
		||(I+A)^r - (I+B)^r||_{\tr} \leq r ||A-B||_{\tr},  \;\; 0 \leq r \leq 1.
	\end{align}
\end{lemma}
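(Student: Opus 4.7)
The plan is to use the Balakrishnan integral representation of fractional powers, reducing the statement to a single scalar Beta-function integral. The two boundary cases are immediate: at $r=0$ both sides vanish, and at $r=1$ both sides equal $\|A-B\|_{\tr}$, so I focus on $0 < r < 1$. Since $A, B \in \Sym^{+}(\H)\cap\Tr(\H)$, the operators $I+A$ and $I+B$ are bounded self-adjoint with spectra contained in $[1,\infty)$, so they admit a Borel functional calculus.

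First I would apply the scalar identity $a^r = \frac{\sin(\pi r)}{\pi}\int_0^\infty \frac{\lambda^{r-1}a}{\lambda+a}\,d\lambda$ (valid for $a > 0$ and $0 < r < 1$) via the functional calculus to obtain
\begin{equation*}
(I+A)^r - (I+B)^r = \frac{\sin(\pi r)}{\pi}\int_0^\infty \lambda^{r-1}\bigl[(I+A)(\lambda I + I + A)^{-1} - (I+B)(\lambda I + I + B)^{-1}\bigr]d\lambda.
\end{equation*}
Using the elementary rewriting $X(\lambda I + X)^{-1} = I - \lambda(\lambda I + X)^{-1}$ (applied to $X = I+A$ and $X = I+B$) followed by the resolvent identity, the bracketed operator becomes $\lambda(\lambda I + I + B)^{-1}(A-B)(\lambda I + I + A)^{-1}$, giving
\begin{equation*}
(I+A)^r - (I+B)^r = \frac{\sin(\pi r)}{\pi}\int_0^\infty \lambda^r\,(\lambda I + I + B)^{-1}(A-B)(\lambda I + I + A)^{-1}\,d\lambda.
\end{equation*}

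Next I would take trace norm, use the Hölder-type inequality $\|XYZ\|_{\tr}\leq \|X\|\,\|Y\|_{\tr}\,\|Z\|$, and the operator-norm bounds $\|(\lambda I + I + A)^{-1}\|, \|(\lambda I + I + B)^{-1}\| \leq (1+\lambda)^{-1}$ that follow from the spectral locations of $I+A$ and $I+B$. This yields
\begin{equation*}
\|(I+A)^r - (I+B)^r\|_{\tr} \leq \frac{\sin(\pi r)}{\pi}\,\|A-B\|_{\tr}\int_0^\infty \frac{\lambda^r}{(1+\lambda)^2}\,d\lambda.
\end{equation*}
The remaining scalar integral is recognized (for example via $t = \lambda/(1+\lambda)$) as a Beta function: $\int_0^\infty \lambda^r(1+\lambda)^{-2}d\lambda = B(r+1,1-r) = r\,\Gamma(r)\Gamma(1-r) = r\pi/\sin(\pi r)$. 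The prefactor $\sin(\pi r)/\pi$ cancels and the bound $r\|A-B\|_{\tr}$ drops out.

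The only genuine subtlety is justifying that the operator-valued integrand is Bochner integrable in the trace class and that the trace norm passes inside the integral. This is routine because the same estimate used for the final bound shows that the integrand is dominated in trace norm by the scalar integrable function $\lambda^r(1+\lambda)^{-2}\|A-B\|_{\tr}$, so the standard Bochner integration theory in the Banach space $\Tr(\H)$ applies and the norm-integral exchange is legitimate.
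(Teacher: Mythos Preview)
Your argument is correct. The Balakrishnan representation, the resolvent manipulation, the trace-norm H\"older estimate, and the Beta-integral evaluation are all standard and accurately carried out, and your remark on Bochner integrability in $\Tr(\H)$ closes the only nontrivial technical point.

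There is nothing to compare against in this paper: the lemma is quoted verbatim as Corollary~5 of \cite{Minh:2021EntropicConvergenceGaussianMeasures} and is not reproved here. Your self-contained proof via the integral representation of fractional powers is exactly the kind of argument one would expect; an alternative route (likely closer to the cited source) is to invoke operator-monotonicity of $t\mapsto t^r$ for $0\le r\le 1$ together with Birman--Koplienko--Solomyak-type trace-norm inequalities for operator-monotone functions, but your approach is more elementary and gives the sharp constant $r$ directly.
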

\begin{corollary}
	\label{corollary:trace-square-root-trace-norm}
	For $A,B \in \Sym^{+}(\H) \cap \Tr(\H)$,
	\begin{align}
		|\trace[-I+(I+A)^{1/2}] -\trace[-I+ (I+B)^{1/2}]| \leq \frac{1}{2}||A-B||_{\tr}.
	\end{align}
\end{corollary}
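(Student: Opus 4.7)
The plan is to reduce the statement directly to Lemma \ref{lemma:square-root-trace-norm} with $r = 1/2$, using only two elementary facts about the trace on trace-class operators: linearity and the bound $|\trace(C)| \le \|C\|_{\tr}$.

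First I would observe that both $(I+A)^{1/2} - I$ and $(I+B)^{1/2} - I$ are trace class. This follows by applying Lemma \ref{lemma:square-root-trace-norm} with the second operator set to $0$, which gives $\|(I+A)^{1/2} - I\|_{\tr} \le \tfrac{1}{2}\|A\|_{\tr} < \infty$, and similarly for $B$. Hence the two traces on the left-hand side are well defined and finite.

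Next, by linearity of the trace on $\Tr(\H)$, I would rewrite
\begin{equation*}
\trace[-I+(I+A)^{1/2}] - \trace[-I+(I+B)^{1/2}] = \trace\bigl[(I+A)^{1/2} - (I+B)^{1/2}\bigr].
\end{equation*}
Applying the standard inequality $|\trace(C)| \le \|C\|_{\tr}$ to $C = (I+A)^{1/2} - (I+B)^{1/2}$, which is trace class as the difference of two trace-class operators, I obtain
\begin{equation*}
\bigl|\trace[(I+A)^{1/2} - (I+B)^{1/2}]\bigr| \le \bigl\|(I+A)^{1/2} - (I+B)^{1/2}\bigr\|_{\tr}.
\end{equation*}
Finally, invoking Lemma \ref{lemma:square-root-trace-norm} with $r = 1/2$ yields the right-hand side bound $\tfrac{1}{2}\|A-B\|_{\tr}$, which is exactly the desired conclusion.

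There is no real obstacle here; the only delicate point is ensuring trace-class membership so that the scalar identity $\trace(X) - \trace(Y) = \trace(X-Y)$ is legitimate, and this is immediate from Lemma \ref{lemma:square-root-trace-norm} applied with one of the arguments equal to $0$. The whole proof is therefore a one-line consequence of the preceding lemma combined with $|\trace(\cdot)| \le \|\cdot\|_{\tr}$.
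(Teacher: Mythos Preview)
Your proof is correct and is exactly the intended argument: the paper states this as an immediate corollary of Lemma~\ref{lemma:square-root-trace-norm} with $r=1/2$, combined with the standard bound $|\trace(C)|\le\|C\|_{\tr}$. Your care in noting that $(I+A)^{1/2}-I$ and $(I+B)^{1/2}-I$ are trace class (by taking the second argument to be $0$ in the lemma) is a nice touch that makes the one-line deduction fully rigorous.
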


\begin{lemma}
	[\textbf{Corollary 6 in \cite{Minh:2021EntropicConvergenceGaussianMeasures}}]
	\label{lemma:logdet-square-root-trace-norm}
	For $A,B \in \Sym^{+}(\H) \cap \Tr(\H)$,
	\begin{align}
		\left|\log\det\left(\frac{1}{2}I + \frac{1}{2}(I+A)^{1/2}\right)- \log\det\left(\frac{1}{2}I + \frac{1}{2}(I+B)^{1/2}\right)\right| \leq \frac{1}{4}||A-B||_{\tr}.
	\end{align}
\end{lemma}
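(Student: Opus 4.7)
The plan is to reduce the claim to two already-stated facts: Lemma \ref{lemma:square-root-trace-norm} (which bounds $\|(I+A)^{1/2} - (I+B)^{1/2}\|_{\tr}$ by $\tfrac{1}{2}\|A-B\|_{\tr}$) together with a Lipschitz estimate for $\log\det(I+\cdot)$ in the trace norm on positive trace class operators.

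First I would rewrite the argument of the determinant in the clean form
\begin{equation*}
\tfrac{1}{2}I + \tfrac{1}{2}(I+A)^{1/2} = I + \tfrac{1}{2}M(A), \qquad M(A) := (I+A)^{1/2} - I \in \Sym^{+}(\H) \cap \Tr(\H),
\end{equation*}
where positivity and trace-class property of $M(A)$ come from $A \ge 0$ and from Lemma \ref{lemma:square-root-trace-norm} (with $B=0$, $r=1/2$), giving $\|M(A)\|_{\tr} \le \tfrac12\|A\|_{\tr}$. Thus the Fredholm determinant is well defined and strictly positive, so the logarithm makes sense.

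The key step is then to show: for any $X,Y \in \Sym^{+}(\H)\cap \Tr(\H)$,
\begin{equation*}
\bigl|\log\det(I+X) - \log\det(I+Y)\bigr| \;\le\; \|X-Y\|_{\tr}.
\end{equation*}
I would prove this by interpolation. Set $Z_t = I + tX + (1-t)Y$ for $t \in [0,1]$. Each $Z_t \ge I$, hence is invertible with $\|Z_t^{-1}\|_{\op} \le 1$, and $Z_t = I + T_t$ with $T_t \in \Sym^{+}(\H)\cap \Tr(\H)$. For the Fredholm determinant one has $\frac{d}{dt}\log\det Z_t = \trace\bigl(Z_t^{-1}(X-Y)\bigr)$ (justified by expanding in the eigenbasis, since both $T_t$ and $X-Y$ are trace class and $Z_t^{-1}$ is bounded). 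Using the duality bound $|\trace(SW)| \le \|S\|_{\op}\|W\|_{\tr}$, I get
\begin{equation*}
\left|\tfrac{d}{dt}\log\det Z_t\right| \le \|Z_t^{-1}\|_{\op}\,\|X-Y\|_{\tr} \le \|X-Y\|_{\tr},
\end{equation*}
and integrating from $0$ to $1$ yields the Lipschitz bound.

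Finally I would apply the Lipschitz bound with $X=\tfrac12 M(A)$ and $Y=\tfrac12 M(B)$, getting a factor of $\tfrac12$, and then invoke Lemma \ref{lemma:square-root-trace-norm} with $r=1/2$ to bound $\|M(A)-M(B)\|_{\tr} = \|(I+A)^{1/2} - (I+B)^{1/2}\|_{\tr} \le \tfrac12\|A-B\|_{\tr}$. The two factors of $\tfrac12$ compose to give the stated constant $\tfrac14$.

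The main obstacle is the differentiation-under-the-trace step for the Fredholm determinant in infinite dimensions; one must justify $\frac{d}{dt}\log\det(I+T_t) = \trace(Z_t^{-1}\dot T_t)$ when $T_t$ is an affine path in the positive trace class operators. This is standard (e.g., Simon's \emph{Trace Ideals}), but is the only nontrivial analytic ingredient; everything else is an application of the two previously stated lemmas.
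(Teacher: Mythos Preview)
Your argument is correct. The paper does not actually prove this lemma here; it is quoted verbatim as Corollary~6 from \cite{Minh:2021EntropicConvergenceGaussianMeasures} and used as a black box in the proof of Proposition~\ref{proposition:continuity-trace-norm-G}. Your route---rewriting $\tfrac12 I+\tfrac12(I+A)^{1/2}=I+\tfrac12 M(A)$, establishing the $1$-Lipschitz bound $|\log\det(I+X)-\log\det(I+Y)|\le\|X-Y\|_{\tr}$ on $\Sym^{+}(\H)\cap\Tr(\H)$ via the interpolation path $Z_t$ and the derivative formula $\tfrac{d}{dt}\log\det Z_t=\trace(Z_t^{-1}\dot Z_t)$, and then combining the resulting factor $\tfrac12$ with Lemma~\ref{lemma:square-root-trace-norm}---is exactly the natural two-step argument and yields the constant $\tfrac14$ cleanly. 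The only point requiring care, which you flag, is the differentiability of $t\mapsto\log\det(I+T_t)$ for an affine path $T_t$ in the positive trace class; this is standard (Simon, \emph{Trace Ideals and Their Applications}, Theorem~3.5), and in fact the positivity of $T_t$ makes all spectra bounded away from $-1$, so there is no subtlety.
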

\begin{proof}
	[\textbf{Proof of Proposition \ref{proposition:continuity-trace-norm-G}}]
	(i) 
	For $A,B\in \Sym^{+}(\H)\cap \Tr(\H)$,
	\begin{align*}
		&|G(A) - G(B)| \leq |\trace[M(A)] - \trace[M(B)]|
		+\left|\log\det\left(I + \frac{1}{2}M(A)\right) - \log\det\left(I+\frac{1}{2}M(B)\right)\right|
		\\
		& = |\trace[-I +(I+c^2A)^{1/2}]-\trace[-I + (I+c^2B)^{1/2}]|
		\\
		&+  \left|\log\det\left(\frac{1}{2}I + \frac{1}{2}(I+c^2A)^{1/2}\right) - \log\det\left(\frac{1}{2}I+\frac{1}{2}(I+c^2B)^{1/2}\right)\right|
		\\
		& \leq \frac{c^2}{2}||A-B||_{\tr} + \frac{c^2}{4}||A-B||_{\tr} = \frac{3c^2}{4}||A-B||_{\tr}, \text{by Corollary \ref{corollary:trace-square-root-trace-norm} and Lemma \ref{lemma:logdet-square-root-trace-norm}}.
	\end{align*}
	(ii) For $A,B \in \Sym(\H) \cap \HS(\H)$, using the first part,
	\begin{align*}
		|G(A^2) - G(B^2)| &\leq \frac{3c^2}{4}||A^2 - B^2||_{\tr}\leq \frac{3c^2}{4}||A(A-B) +(A-B)B||_{\tr}
		\\
		&\leq \frac{3c^2}{4}[||A||_{\HS} + ||B||_{\HS}]||A-B||_{\HS}.
	\end{align*}
	(iii) For $A,B \in \HS(\H_1, \H_2)$, by Proposition \ref{proposition:trace-norm-product-HS-different-spaces},
	$A^{*}A, B^{*}B \in \Sym^{+}(\H_1) \cap \Tr(\H_1)$ and
	\begin{align*}
		|G(A^{*}A) - G(B^{*}B)| &\leq \frac{3c^2}{4}||A^{*}A - B^{*}B||_{\tr(\H_1)} \;\;\text{from part (i)}
		\\
		&\leq
		\frac{3c^2}{4}[||A||_{\HS(\H_1, \H_2)} + ||B||_{\HS(\H_1, \H_2)}]||A-B||_{\HS(\H_1,\H_2)}.
	\end{align*}
\end{proof}

\begin{proof}
[\textbf{Proof of Proposition \ref{proposition:concentration-TK2K1-empirical}}]
It suffices to prove this for the case $i=1,j=2$.
Define the random variable $\xi: (T, \nu) \mapto \HS(\H_{K^2}, \H_{K^1})$
by $\xi(x) = K^1_x \otimes K^2_x:\H_{K^2} \mapto \H_{K^1}$, $\xi(x)f = K^1_x\la f, K^2_x\ra = f(x)K^1_x$
$\forall f \in \H_{K^2}$. Then
$\bE[\xi(x)] = \int_{T}(K^1_x \otimes K^2_x)d\nu(x) = R_{12}$,\;\;
$\frac{1}{m}\sum_{i=1}^m\xi(x_i) = \frac{1}{m}\sum_{i=1}^mK^1_{x_i} \otimes K^2_{x_i} = R_{12,\Xbf}$.
By Lemma \ref{lemma:HS-norm-rank-one-product}, 
\begin{align*}
&||\xi(x)||_{\HS(\H_{K^2},\H_{K^1})} = ||K^1_x||_{\H_{K^1}}||K^2_x||_{\H_{K^2}} = \sqrt{K^1(x,x)K^2(x,x)} \leq \kappa_1\kappa_2, \;\;\forall x \in T,
\\
&||R_{12,\Xbf}||_{\HS(\H_{K^2}, \H_{K^1})} \leq \frac{1}{m}\sum_{i=1}^m||K^1_{x_i} \otimes K^2_{x_i}||_{\HS(\H_{K^2}, \H_{K^1})} \leq \kappa_1\kappa_2,
\\
&E||\xi||^2_{\HS(\H_{K^2},\H_{K^1})} = \int_{T}K^1(x,x)K^2(x,x)d\nu(x) \leq \kappa_1^2\kappa_2^2.
\end{align*}
The
bound for $||R_{12,\Xbf} - R_{12}||_{\HS(\H_{K^2}, \H_{K^1})}$
follows from
 Proposition \ref{proposition:Pinelis}. By Corollary \ref{corollary:trace-norm-AB-different-spaces},
\begin{align*}
&||R_{12,\Xbf}^{*}R_{12,\Xbf} - R_{12}^{*}R_{12}||_{\tr(\H_{K^2})} 
\\
&\leq (||R_{12,\Xbf}||_{\HS(\H_{K^2},\H_{K^1})} + ||R_{12}||_{\HS(\H_K^2,\H_{K^1})})||R_{12,\Xbf}-R_{12}||_{\HS(\H_{K^2}, \H_{K^1})},
\end{align*}
which gives the desired bound for $||R_{12,\Xbf}^{*}R_{12,\Xbf} - R_{12}^{*}R_{12}||_{\tr(\H_{K^2})}$.
\end{proof}

\begin{proof}
	[\textbf{Proof of Theorem \ref{theorem:Sinkhorn-finite-sample-GaussianProcess}}]
	By Proposition \ref{propopsition:Sinkhorn-cross-covariance}, 
	let $c = \frac{4}{\ep}$, then
	\begin{align*}
		&\Delta = \left|\Srm^{\ep}_2[\Ncal(0,C_{K^1}), \Ncal(0, C_{K^2})] - \Srm^{\ep}_2\left[\Ncal\left(0, \frac{1}{m}K^1[\Xbf]\right), \Ncal\left(0, \frac{1}{m}K^2[\Xbf]\right)\right] \right|
		\\
		& \leq \frac{1}{c}\left[|G(L_{K^1,\Xbf}^2) - G(L_{K^1}^2)| + |G(L_{K^2,\Xbf}^2) - G(L_{K^2}^2)|
		+2|G(R_{12,\Xbf}^{*}R_{12,\Xbf}) - G(R_{12}^{*}R_{12})|\right].
	\end{align*}
	We now combine Proposition \ref{proposition:continuity-trace-norm-G} with Propositions  
	\ref{proposition:concentration-TK2K1-empirical}. For $0 < \delta < 1$, define
	{\small
	\begin{align*}
	U_i &= \left\{\Xbf \in T^m: |G(L_{K^i,\Xbf}^2) - G(L_{K^i}^2)| \leq \frac{3c^2}{2} \kappa_i^4\left(\frac{2\log\frac{6}{\delta}}{m} + \sqrt{\frac{2\log\frac{6}{\delta}}{m}}\right)\right\},\;\;i=1,2,
\\
U_3 &= \left\{\Xbf \in T^m: |G(R_{12,\Xbf}^{*}R_{12,\Xbf}) - G(R_{12}^{*}R_{12})| \leq \frac{3c^2}{2}\kappa_1^2\kappa_2^2\left[ \frac{2\log\frac{6}{\delta}}{m} + \sqrt{\frac{2\log\frac{6}{\delta}}{m}}\right]\right\}.
\end{align*}
}
Then $\nu^m(U_1) \geq 1-\frac{\delta}{3}$, $\nu^m(U_2) \geq 1-\frac{\delta}{3}$, $\nu^m(U_3)\geq 1-\frac{\delta}{3}$.
From the Inclusion-Exclusion Principle, using the property $\nu^m(U_i \cup U_j) = \nu^m(U_i) + \nu^m(U_j) - \nu^m(U_i \cap U_j)$,
\begin{align*}
\nu^m(U_1 \cap U_2 \cap U_3) &= \nu^m(U_1\cup U_2 \cup U_3) + \nu^m(U_1) + \nu^m(U_2)  + \nu^m(U_3)
\\
&  - [\nu^m(U_1 \cup U_2) + \nu^m(U_1\cup U_3) + \nu^m(U_2 \cup U_3)]
\geq 1 + 3(1-\frac{\delta}{3}) - 3 = 1-\delta.
\end{align*}
Thus for any $0 < \delta < 1$, with probability at least $1-\delta$,
\begin{align*}
\Delta \leq \frac{3c}{2}(\kappa_1^2 + \kappa_2^2)^2\left[ \frac{2\log\frac{6}{\delta}}{m} + \sqrt{\frac{2\log\frac{6}{\delta}}{m}}\right] = \frac{6}{\ep}(\kappa_1^2 + \kappa_2^2)^2\left[ \frac{2\log\frac{6}{\delta}}{m} + \sqrt{\frac{2\log\frac{6}{\delta}}{m}}\right].
\end{align*}
\end{proof}

\begin{proof}
	[\textbf{Proof of Propositions \ref{proposition:concentration-TK2K1-empirical-unbounded}}]	
	Define the random variables $Y_j: (T, \nu)^m \mapto \HS(\H_{K^2}, \H_{K^1})$ by $Y_j(\Xbf) = K^1_{x_j}\otimes K^2_{x_j}$, $1\leq j \leq m$,
	where $\Xbf = (x_j)_{j=1}^m$ is independently sampled from $(T,\nu)$. The $Y_j$'s are IID, with
	$\bE{Y_j}
	 = R_{12}$ and
	$\frac{1}{m}\sum_{j=1}^mY_j(\Xbf) = R_{12,\Xbf}$. By Lemma \ref{lemma:HS-norm-rank-one-product},
	\begin{align*}
		\bE||Y_j||_{\HS(\H_{K^2},\H_{K^1})}& = \int_{T}||K^1_x||_{\H_{K^1}}||K^2_x||_{\H_{K^2}}d\nu(x) = \int_{T}\sqrt{K^1(x,x)}\sqrt{K^2(x,x)}d\nu(x) 
		\\
		& \leq \sqrt{\int_{T}K^1(x,x)d\nu(x)}\sqrt{\int_{T}K^2(x,x)d\nu(x)} \leq \kappa_1\kappa_2,
		\\
		\bE||Y_j||^2_{\HS(\H_{K^2},\H_{K^1})}& = \int_{T}||K^1_x||^2_{\H_{K^1}}||K^2_x||^2_{\H_{K^2}} =
		\int_{T}K^1(x,x)K^2(x,x)d\nu(x) 
		\\
		& \leq \sqrt{\int_{T}[K^1(x,x)]^2d\nu(x)}\sqrt{\int_{T}[K^2(x,x)]^2d\nu(x)} \leq \kappa_1^2\kappa_2^2. 
	\end{align*}
	Define the random variable $\eta: (T, \nu)^m \mapto \R$ by
	$\eta(\Xbf) =\left\|\frac{1}{m}\sum_{j=1}^mY_j(\Xbf) - \bE{Y_j}\right\|_{\HS(\H_{K^2}, \H_{K^1})} 
	= ||R_{12,\Xbf} - R_{12}||_{\HS(\H_{K^2}, \H_{K^1})}$. Since the $Y_j$'s are IID,
	{\small
		\begin{align*}
			\bE{\eta^2} &= \bE\left\|\frac{1}{m}\sum_{j=1}^mY_j - \bE{Y_j}\right\|_{\HS}^2 =\frac{1}{m^2}\sum_{j=1}^m\bE||Y_j - EY_j||^2_{\HS}
			= \frac{1}{m^2}\sum_{j=1}^m(\bE||Y_j||^2_{\HS} - ||EY_j||^2_{\HS}) \leq \frac{\kappa_1^2\kappa_2^2}{m}.
		\end{align*}
	}
	By Chebyshev inequality, for any $t > 0$,
	$\bP(\eta \geq t) \leq \frac{\bE{\eta}}{t} \leq \frac{\sqrt{\bE{\eta^2}}}{t} \leq \frac{\kappa_1\kappa_2}{\sqrt{m}t}$.
	Let $\delta = \frac{\kappa_1\kappa_2}{\sqrt{m}t} \equivalent t = \frac{\kappa_1\kappa_2}{\sqrt{m}\delta}$,
	then 
	$\bP\{\Xbf: ||R_{12,\Xbf} - R_{12}||_{\HS} = \eta(\Xbf) \leq \frac{\kappa_1\kappa_2}{\sqrt{m}\delta}\} \geq 1 -\delta$.
	Similarly, since $\bE{||R_{12,\Xbf}||_{\HS}} \leq \frac{1}{m}\sum_{j=1}^m\bE{||Y_j||_{\HS}} \leq \kappa_1\kappa_2$, we have
	$\bP\{\Xbf: ||R_{12,\Xbf}||_{\HS} \leq \frac{\kappa_1\kappa_2}{\delta}\} \geq 1-\delta$.
	Computing the intersection of these two sets of events and replacing $\delta$ by $\frac{\delta}{2}$, we obtain the desired bounds.
\end{proof}

\begin{proof}
	[\textbf{Proof of Theorem \ref{theorem:Sinkhorn-finite-sample-GaussianProcess-unbounded}}]
	Define $\Delta$ as in the proof of Theorem \ref{theorem:Sinkhorn-finite-sample-GaussianProcess}. 
	We now combine Proposition \ref{proposition:continuity-trace-norm-G} with Propositions  
	\ref{proposition:concentration-TK2K1-empirical-unbounded}. For $0 < \delta < 1$, define
	\begin{align*}
			U_i &= \left\{\Xbf \in T^m: |G(L_{K^i,\Xbf}^2) - G(L_{K^i}^2)| \leq \frac{3c^2}{2}\kappa_i^4\left(1+\frac{6}{\delta}\right)\frac{3}{\sqrt{m}\delta}\right\}, \;\;i=1,2,
	\\
		U_3 &= \left\{\Xbf \in T^m: |G(R_{12,\Xbf}^{*}R_{12,\Xbf}) - G(R_{12}^{*}R_{12})| \leq
		\frac{3c^2}{2}\kappa_1^2\kappa_2^2\left(1+\frac{6}{\delta}\right)\frac{3}{\sqrt{m}\delta}\right\}.
	\end{align*}
Then $\nu^m(U_i) \geq 1-\frac{\delta}{3}$, $i=1,2,3$.
	Thus for any $0 < \delta < 1$, with probability at least $1-\delta$,
	\begin{align*}
		\Delta \leq \frac{3c}{2}(\kappa_1^2+\kappa_2^2)^2\left(1+\frac{6}{\delta}\right)\frac{3}{\sqrt{m}\delta}
		= \frac{18}{\ep}(\kappa_1^2+\kappa_2^2)^2\left(1+\frac{6}{\delta}\right)\frac{1}{\sqrt{m}\delta}.
	\end{align*}
\end{proof}

\begin{proof}
	[\textbf{Proof of Lemma \ref{lemma:fourth-moment-Gaussian-process}}]
	Since
	$\bE||\xi||^4_{\Lcal^2(T,\nu)} = \bE\left(\int_{T}\xi(\omega,x)^2d\nu(x)\right)^2
	\leq \bE\int_{T}\xi(\omega,x)^4d\nu(x)$ $\leq 3\kappa^4$, we have
	$\bE||\xi||^2_{\Lcal^2(T,\nu)}\leq \sqrt{\bE||\xi||^4_{\Lcal^2(T,\nu)}} \leq \sqrt{3}\kappa^2$, thus
	$\xi(\omega, .)\in \Lcal^2(T,\nu)$ $P$-almost surely.
	By H\"older Theorem and Tonelli Theorem,
	\begin{align*}
		\int_{T}K(x,x)^2d\nu(x) &= \int_{T}\left(\int_{\Omega}\xi(\omega, x)^2dP(\omega)\right)^2d\nu(x)
		\leq \int_{\Omega}\int_{T}\xi(\omega, x)^4dP(\omega)d\nu(x) \leq 3\kappa^4.
	\end{align*}
	If $\xi \sim \Ncal(0,K)$, then for each fixed $x \in T$, we have $\xi(.,x) \sim \Ncal(0, K(x,x))$. Thus
	\begin{align*}
		\bE{\xi(.,x)^4} = \int_{\Omega}\xi(\omega, x)^4  dP(\omega) = \int_{\R}t^4 d\Ncal(0,K(x,x))(t) = 3K(x,x)^2
	\end{align*}
	by using the integral $\int_{\R}t^4d\Ncal(0,\lambda)(t) = 3\lambda^2$ (see Formula 7.4.4 in \cite{Handbook:1972}).
\end{proof}

\begin{proof}
	[\textbf{Proof of Proposition \ref{proposition:concentration-empirical-covariance}}]
	We have $||K[\Xbf]||_F = m||L_{K,\Xbf}||_{\HS(\H_K)} \leq m\kappa^2$ by Proposition 
	\ref{proposition:concentration-TK2K1-empirical}.
	Define the map 
	$\zbf:\Omega \mapto \R^m$ by  $\zbf(\omega) 
	=(\xi(\omega, x_i))_{i=1}^m \in \R^m$.
	Let $Y_j:(\Omega, P)^N \mapto \Sym^{+}(m)$, $1\leq j \leq N$, be IID 
	$\Sym^{+}(m)$-valued 
	random variables defined by
	$Y_j(\Wbf) = \zbf(\omega_j)\zbf(\omega_j)^T$, where
	$\Wbf = (\omega_1, \ldots,\omega_N)$ is independently sampled from $(\Omega,P)^N$. Then
	\begin{align*}
		K[\Xbf] &= \int_{\Omega}\zbf(\omega)\zbf(\omega)^TdP(\omega) = \bE{Y_j},\;\;
		\hat{K}_{\Wbf}[\Xbf] = \frac{1}{N}\sum_{j=1}^N\zbf(\omega_j)\zbf(\omega_j)^T = \frac{1}{N}\sum_{j=1}^NY_j(\Wbf),
		\\
		\bE||Y_j||_F &= \bE||\zbf(\omega)||^2 = \int_{\Omega}\sum_{i=1}^m\xi(\omega,x_i)^2dP(\omega) 
		= \sum_{i=1}^mK(x_i,x_i) \leq m\kappa^2,
		\\
		\bE||Y_j||^2_F & = \bE||\zbf||^4 = \bE\left[(\sum_{i=1}^m\xi(\omega,x_i)^2)^2\right]
		\leq m \sum_{i=1}^m\bE\left[\xi(\omega, x_i)^4\right] = 3m\sum_{i=1}^mK(x_i,x_i)^2 \leq 3m^2\kappa^4
	\end{align*}
by Lemma \ref{lemma:fourth-moment-Gaussian-process}.
	Define
	$\eta:(\Omega, P)^N \mapto \R$ by
	$	\eta(\Wbf) = \left\|\frac{1}{N}\sum_{j=1}^NY_j(\Wbf) - \bE{Y_j}\right\|_F$ $= ||\hat{K}_{\Wbf}[\Xbf] - K[\Xbf]||_F$.
	Since the $Y_j$'s are 
	independent, identically distributed, 
	{\small
	\begin{align*}
		\bE{\eta^2} &= \bE\left\|\frac{1}{N}\sum_{j=1}^NY_j - \bE{Y_j}\right\|_F^2 =\frac{1}{N^2}\sum_{j=1}^N\bE||Y_j - EY_j||^2_F 
		= \frac{1}{N^2}\sum_{j=1}^N(\bE||Y_j||^2_F - ||EY_j||^2_F) \leq \frac{3m^2\kappa^4}{N}.
	\end{align*}
}
	By Chebyshev inequality, for any $t > 0$,
		$\bP(\eta \geq t) \leq \frac{\bE{\eta}}{t} \leq \frac{\sqrt{\bE{\eta^2}}}{t} \leq \frac{\sqrt{3}m\kappa^2}{\sqrt{N}t}$.
	Let $\delta = \frac{\sqrt{3}m\kappa^2}{\sqrt{N}t} \equivalent t = \frac{\sqrt{3}m\kappa^2}{\sqrt{N}\delta}$,
	then 
		$\bP\{\Wbf: ||\hat{K}_{\Wbf}[\Xbf] - K[\Xbf]||_F = \eta(\Wbf) \leq \frac{\sqrt{3}m\kappa^2}{\sqrt{N}\delta}\}\geq 1-\delta$.
	Similarly, $\bE{||\hat{K}_{\Wbf}[\Xbf]||_F} \leq \frac{1}{N}\sum_{j=1}^N\bE{||Y_j||_F} = m\kappa^2 \imply$
		$\bP\{\Wbf: ||\hat{K}_{\Wbf}[\Xbf]||_F \leq \frac{m \kappa^2}{\delta} \}\geq 1-\delta$.
	Computing the intersection of these two events and replacing $\delta$ by $\frac{\delta}{2}$, we 
	obtain the desired bounds.
\end{proof}

\begin{proof}
	[\textbf{Proof of Theorem \ref{theorem:Sinkhorn-estimate-unknown-1}}]
	With each pair $(\Wbf^1,\Wbf^2)$, by Theorem \ref{theorem:Sinkhorn-convergence},
	\begin{align*}
		\Delta &= \Delta(\Wbf^1,\Wbf^2) = \left|\Srm^{\ep}_2\left[\Ncal\left(0, (1/m)\hat{K}^1_{\Wbf^1}[\Xbf]\right), \Ncal\left(0, (1/m)\hat{K}^2_{\Wbf^2}[\Xbf]\right)\right]
		\right.
		\\
		&\quad \quad \quad \quad \quad \quad \quad \quad\left.- \Srm^{\ep}_2\left[\Ncal\left(0, (1/m)K^1[\Xbf]\right), \Ncal\left(0, (1/m)K^2[\Xbf]\right)\right]
		\right|
	\\
		&\leq \frac{3}{\ep m^2}[||\hat{K}^1_{\Wbf^1}[\Xbf]||_F + ||K^1[\Xbf]||_F + 2||K^2[\Xbf]||_F]||\hat{K}^1_{\Wbf^1}[\Xbf]-K^1[\Xbf]||_F
		\\
		& \quad + \frac{3}{\ep m^2}[2||\hat{K}^1_{\Wbf^1}[\Xbf]||_F + ||K^1[\Xbf]||_F + ||K^2[\Xbf]||_F]||\hat{K}^2_{\Wbf^2}[\Xbf]-K^2[\Xbf]||_F.
	\end{align*}
	For $0 < \delta < 1$,
		by Proposition \ref{proposition:concentration-empirical-covariance},
	the following sets satisfy $P_i^N(U_i) \geq 1- \frac{\delta}{2}$, $i=1,2$,
	{\small
	\begin{align*}
			&U_i = 	\left\{\Wbf^i \in \Omega_i^N: ||\hat{K}^i_{\Wbf^i}[\Xbf] - K^i[\Xbf]||_F \leq \frac{4\sqrt{3}m\kappa_i^2}{\sqrt{N}\delta},||\hat{K}^i_{\Wbf^i}[\Xbf]||_F \leq \frac{4m\kappa_i^2}{\delta}\right\}.
	\end{align*}
}
	Let $U = (U_1 \times \Omega_2) \cap (\Omega_1 \times U_2)$,
	then $(P_1\otimes P_2)^N(U) \geq 1 -\delta$ and $\forall (\Wbf^1, \Wbf^2) \in U$,
	\begin{align*}
		\Delta(\Wbf^1,\Wbf^2) 
		&\leq \frac{3}{\ep m^2}\left[ \frac{4m\kappa_1^2}{\delta} + m\kappa_1^2 + 2m\kappa_2^2\right]
		\frac{4\sqrt{3}m\kappa_1^2}{\sqrt{N}\delta}
		+ \frac{3}{\ep m^2}\left[ \frac{8m\kappa_1^2}{\delta} + m\kappa_1^2 + m\kappa_2^2\right]
		\frac{4\sqrt{3}m\kappa_2^2}{\sqrt{N}\delta}
		\\
		& = \frac{12\sqrt{3}}{\ep \delta}\left[\left(1+\frac{4}{\delta}\right)\kappa_1^4 + \left(3 + \frac{8}{\delta}\right)\kappa_1^2\kappa_2^2 + \kappa_2^4\right]\frac{1}{\sqrt{N}}.
	\end{align*}
	
\end{proof}

\begin{proof}
	[\textbf{Proof of Theorem \ref{theorem:Sinkhorn-estimate-unknown-2}}]
	For each fixed $\Xbf \in (T,\nu)^m$, define
	\begin{align*}
		\Delta_1 = 
		&\left|\Srm^{\ep}_{2}\left[\Ncal\left(0, (1/m)K^1[\Xbf]\right), \Ncal\left(0, (1/m)K^2[\Xbf]\right) - \Srm^{\ep}_{2}\left[\Ncal(0,C_{K^1}), \Ncal(0,C_{K^2} )\right]\right]\right|. 
	\end{align*}
	By Theorem \ref{theorem:Sinkhorn-finite-sample-GaussianProcess}, the following set $U_1 \subset (T,\nu)^m$ satisfies $\nu^m(U_1) \geq 1-\frac{\delta}{2}$
	{\small
	\begin{align*}
		U_1 = \left\{\Xbf \in (T,\nu)^m: \Delta_1 \leq \frac{6}{\ep}(\kappa_1^2 + \kappa_2^2)^2\left[ \frac{2\log\frac{12}{\delta}}{m} + \sqrt{\frac{2\log\frac{12}{\delta}}{m}}\right]\right\}.
	\end{align*}
}
	For each fixed $\Xbf\in (T,\nu)^m, \Wbf^1 \in (\Omega_1,P_1)^N, \Wbf^2 \in (\Omega_2,P_2)^N$, define
	{\small
	\begin{align*}
		\Delta_2 &= \left|\Srm^{\ep}_2\left[\Ncal\left(0, (1/m)\hat{K}^1_{\Wbf^1}[\Xbf]\right), \Ncal\left(0, (1/m)\hat{K}^2_{\Wbf^2}[\Xbf]\right)\right]
		\right.
		\nonumber
		\\
		&\quad \quad\left.
		- \Srm^{\ep}_2\left[\Ncal\left(0, (1/m)K^1[\Xbf]\right), \Ncal\left(0, (1/m)K^2[\Xbf]\right)\right]
		\right|.
	\end{align*}
}
	By Theorem \ref{theorem:Sinkhorn-estimate-unknown-1}, the following set $U_2\in (\Omega_1,P_1)^N \times (\Omega_2, P_2)^N$ satisfies $(P_1 \otimes P_2)^N(U_2) \geq 1-\frac{\delta}{2}$
	{\small
	\begin{align*}
		U_2 = \left\{(\Wbf^1, \Wbf^2): \Delta_2 \leq
		\frac{24\sqrt{3}}{\ep \delta}\left[\left(1+\frac{8}{\delta}\right)\kappa_1^4 + \left(3 + \frac{16}{\delta}\right)\kappa_1^2\kappa_2^2 + \kappa_2^4\right]\frac{1}{\sqrt{N}} \right\}.
	\end{align*}
}
	Let $U = (U_1 \times (\Omega_1,P_1)^N \times (\Omega_2,P_2)^N)\cap ((T,\nu)^m \times U_2)$,
	then $(\nu^m \otimes P_1^N \otimes P_2^N)(U) \geq 1-\delta$ and
	\begin{align*}
		\Delta_1 + \Delta_2 &\leq
		\frac{6}{\ep}(\kappa_1^2 + \kappa_2^2)^2\left[ \frac{2\log\frac{12}{\delta}}{m} + \sqrt{\frac{2\log\frac{12}{\delta}}{m}}\right]
		\\
		&\quad +
		\frac{24\sqrt{3}}{\ep \delta}\left[\left(1+\frac{8}{\delta}\right)\kappa_1^4 + \left(3 + \frac{16}{\delta}\right)\kappa_1^2\kappa_2^2 + \kappa_2^4\right]\frac{1}{\sqrt{N}},
		\;\;\forall (\Xbf, \Wbf^1, \Wbf^2)
		\in U.
	\end{align*}
\end{proof}

\begin{lemma}
	\label{lemma:concentration-trace}
	Under Assumptions $1-5$, let $\Xbf = (x_i)_{i=1}^m$ be independently sampled from $(T,\nu)$.
	For any $0 < \delta < 1$, with probability at least $1-\delta$,
	\begin{align}
		|\trace(L_{K,\Xbf}) - \trace(L_K)| \leq \kappa^2\left(\frac{2\log\frac{2}{\delta}}{m} + \sqrt{\frac{2\log\frac{2}{\delta}}{m}}\right).
	\end{align}
\end{lemma}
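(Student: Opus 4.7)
The plan is to reduce the operator-valued quantity to a bounded scalar-valued empirical mean and then invoke the Hilbert-space law of large numbers (Proposition \ref{proposition:Pinelis}) applied to $\H = \R$.

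First, I would compute both traces explicitly. Since $K_t \otimes K_t: \H_K \to \H_K$ is rank-one with $\trace(K_t \otimes K_t) = \|K_t\|_{\H_K}^2 = K(t,t)$, we obtain from the definitions of $L_K$ in Eq.\eqref{equation:LK} and $L_{K,\Xbf}$
\begin{align*}
\trace(L_K) = \int_T K(t,t)\,d\nu(t), \qquad \trace(L_{K,\Xbf}) = \frac{1}{m}\sum_{i=1}^m K(x_i,x_i).
\end{align*}

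Next, I would introduce the real-valued random variable $\xi:(T,\nu)\to \R$ defined by $\xi(x) = K(x,x)$. Under Assumption 5, $|\xi(x)| \leq \kappa^2$ for all $x \in T$, so $\xi$ is almost surely bounded by $M = \kappa^2$; moreover $\sigma^2(\xi) = \bE[\xi^2] = \int_T K(x,x)^2\,d\nu(x) \leq \kappa^4$, and $\bE[\xi] = \trace(L_K)$. Applying Proposition \ref{proposition:Pinelis} with $\H = \R$ to the i.i.d.\ sample $\Xbf = (x_i)_{i=1}^m$ gives, with probability at least $1-\delta$,
\begin{align*}
|\trace(L_{K,\Xbf}) - \trace(L_K)| = \left|\frac{1}{m}\sum_{i=1}^m \xi(x_i) - \bE\xi\right| \leq \frac{2\kappa^2 \log\frac{2}{\delta}}{m} + \sqrt{\frac{2\kappa^4\log\frac{2}{\delta}}{m}},
\end{align*}
which is exactly the stated bound after factoring out $\kappa^2$. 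There is no real obstacle here: the lemma is a direct scalar specialization of the same Pinelis-based argument already used to prove Proposition \ref{proposition:concentration-TK2K1-empirical}; the only mild observation is that evaluating the trace collapses the operator-valued sum to the diagonal of the kernel, for which Assumption 5 provides a uniform bound.
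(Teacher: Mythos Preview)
Your proof is correct and follows essentially the same approach as the paper: define the scalar random variable $x\mapsto K(x,x)$, identify $\trace(L_K)$ and $\trace(L_{K,\Xbf})$ as its expectation and empirical mean, bound it by $\kappa^2$ via Assumption 5 and its second moment by $\kappa^4$, and apply Proposition \ref{proposition:Pinelis} with $\H=\R$.
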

\begin{proof}
	Define the random variable $\eta :(T,\nu) \mapto \R$ by $\eta(x) = K(x,x)$, then 
	$||\eta||_{\infty} \leq \kappa^2$ and 
		$\trace(L_{K,\Xbf}) = \frac{1}{m}\trace\left[\sum_{i=1}^m K_{x_i} \otimes K_{x_i}\right] = \frac{1}{m}\sum_{i=1}^mK(x_i,x_i) = \frac{1}{m}\sum_{i=1}^m\eta(x_i)$,
		$\trace(L_K) = \int_{T}K(x,x)d\nu(x) = \bE{\xi},\;\;
		\bE|\eta^2|  = \int_{T}K(x,x)^2d\nu(x) \leq \kappa^4$.
	The desired bound then follows from Proposition \ref{proposition:Pinelis}.
\end{proof}

\begin{lemma}
	[\text{Lemma 4.1 in \cite{Powers1970free}}]
	\label{lemma:trace-HS-square-root}
	For $A,B \in \Sym^{+}(\H) \cap \Tr(\H)$,
	\begin{align}
		||A^{1/2}-B^{1/2}||^2_{\HS} \leq ||A-B||_{\tr}.
	\end{align}
\end{lemma}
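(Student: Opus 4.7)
The stated inequality is the classical Powers--St\o rmer inequality. My plan is to reconstruct its key steps rather than simply invoke the cited reference.

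First, I would expand the squared Hilbert--Schmidt norm using that $A^{1/2}-B^{1/2}$ is self-adjoint:
\begin{align*}
||A^{1/2} - B^{1/2}||_{\HS}^2 = \trace((A^{1/2}-B^{1/2})^2) = \trace(A) + \trace(B) - 2\trace(A^{1/2}B^{1/2}),
\end{align*}
reducing the claim to the trace inequality $\trace(A) + \trace(B) - 2\trace(A^{1/2}B^{1/2}) \leq ||A-B||_{\tr}$.

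Next I would apply the Jordan decomposition of the self-adjoint trace class operator $A-B$. Let $E$ be the spectral projection of $A-B$ onto $[0,\infty)$ and $F = I - E$, so that $P := E(A-B) \in \Sym^{+}(\H) \cap \Tr(\H)$ and $Q := -F(A-B) \in \Sym^{+}(\H) \cap \Tr(\H)$ satisfy $PQ = 0$, $|A-B| = P + Q$, and $||A-B||_{\tr} = \trace(P) + \trace(Q)$. Combining the identities $\trace(P) = \trace(EA) - \trace(EB)$, $\trace(Q) = \trace(FB) - \trace(FA)$, $\trace(A) = \trace(EA) + \trace(FA)$, and $\trace(B) = \trace(EB) + \trace(FB)$, a short algebraic rearrangement reduces the target to the key trace bound
\begin{align*}
\trace(EB) + \trace(FA) \leq \trace(A^{1/2}B^{1/2}),
\end{align*}
equivalently $||B^{1/2}E||_{\HS}^2 + ||A^{1/2}F||_{\HS}^2 \leq \la A^{1/2}, B^{1/2}\ra_{\HS}$.

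The main obstacle will be establishing this key inequality in the noncommuting setting. In the commuting case it is immediate from the elementary scalar bound $(\sqrt{a}-\sqrt{b})^2 \leq |a-b|$ (equivalently $\sqrt{ab} \geq \min(a,b)$) applied spectrally. In the general case, my plan is to exploit that $EAE \geq EBE$ and $FBF \geq FAF$ as operators on the respective compressed subspaces (since $E(A-B)E = P \geq 0$ and $-F(A-B)F = Q \geq 0$), and to control the cross-terms arising from the decomposition $\trace(A^{1/2}B^{1/2}) = \la A^{1/2}(E+F), B^{1/2}(E+F)\ra_{\HS}$ by combining a Cauchy--Schwarz estimate in the Hilbert--Schmidt inner product with the operator monotonicity of the square root. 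As a fallback, one may reduce to the invertible setting by replacing $A,B$ with $A+\varepsilon I, B+\varepsilon I$ and passing $\varepsilon \to 0$ (where the argument simplifies through functional calculus and the integral representation of $\sqrt{\cdot}$), or simply appeal to \cite{Powers1970free} directly as the paper does.
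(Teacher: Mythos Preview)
The paper itself gives no proof of this lemma; it simply records the statement and cites Powers--St{\o}rmer. So your proposal already goes further than the paper, and your fallback of citing \cite{Powers1970free} is exactly what the paper does.

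Your reduction is correct: expanding the Hilbert--Schmidt square and introducing the Jordan decomposition of $A-B$ does lead to the equivalent target $\trace(EB)+\trace(FA)\le\trace(A^{1/2}B^{1/2})$. However, your plan for this key step has a genuine gap. The spectral projections $E,F$ you use come from $A-B$, and while this gives $EAE\ge EBE$, operator monotonicity of the square root only yields $(EAE)^{1/2}\ge(EBE)^{1/2}$, which is \emph{not} the same as a comparison between $EA^{1/2}E$ and $EB^{1/2}E$. There is no clean way to pass from compressions of $A,B$ to compressions of $A^{1/2},B^{1/2}$ here, and the Cauchy--Schwarz/cross-term scheme you sketch does not close without this.

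The actual Powers--St{\o}rmer argument makes a different choice: take $E$ to be the spectral projection of $A^{1/2}-B^{1/2}$ (not of $A-B$) onto $[0,\infty)$, with $F=I-E$. Since $||A-B||_{\tr}\ge\trace\big((E-F)(A-B)\big)$ (as $E-F$ is a self-adjoint contraction), it suffices to show $\trace\big(E(A-B)\big)\ge\trace\big(((A^{1/2}-B^{1/2})_+)^2\big)$ and the symmetric inequality with $F$. Writing $R=A^{1/2}$, $S=B^{1/2}$, a two-line computation gives
\[
\trace\big(E(R^2-S^2)\big)-\trace\big(E(R-S)^2\big)=2\,\trace\big((R-S)_+\,S\big)\ge 0,
\]
since $(R-S)_+$ and $S$ are both positive; similarly for $F$. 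This is the missing idea: decompose $A^{1/2}-B^{1/2}$, not $A-B$. If you want to include a self-contained proof rather than a citation, this is the argument to present.
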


\begin{proof}
	[\textbf{Proof of Theorem \ref{theorem:Wasserstein-convergence-Gaussian-processes}}]
	Eqs.\eqref{equation:Wass-G-reprensetation} and \eqref{equation:Wass-G-representation-empirical}
	follow as in the case of the Sinkhorn divergence.
	By Lemma \ref{lemma:concentration-trace}, $\forall0 < \delta < 1$, the following sets satisfy $\nu^m(U_i) \geq 1-\frac{\delta}{3}$, $i=1,2$,
	{\small 
	\begin{align*}
		U_i &= \left\{\Xbf \in (T,\nu)^m: |\trace(L_{K^i,\Xbf}) - \trace(L_{K^i})| \leq 
		\kappa_i^2\left[\frac{2\log\frac{6}{\delta}}{m} + \sqrt{\frac{2\log\frac{6}{\delta}}{m}}\right]\right\}.
	\end{align*}
}
	Under the assumption $\dim(\H_{K^2}) < \infty$, we have by Lemma \ref{lemma:trace-HS-square-root},
	\begin{align*}
		&|\trace[(R_{12,\Xbf}^{*}R_{12,\Xbf})^{1/2}] - \trace[(R_{12}^{*}R_{12})^{1/2}]|
		\leq ||(R_{12,\Xbf}^{*}R_{12,\Xbf})^{1/2} - (R_{12}^{*}R_{12})^{1/2}||_{\tr(\H_{K^2})}
		\\
		& \leq\sqrt{\dim(\H_{K^2})} ||(R_{12,\Xbf}^{*}R_{12,\Xbf})^{1/2} - (R_{12}^{*}R_{12})^{1/2}||_{\HS(\H_{K^2})} 
		\\
		&\leq \sqrt{\dim(\H_{K^2})}\sqrt{||(R_{12,\Xbf}^{*}R_{12,\Xbf}) - (R_{12}^{*}R_{12})||_{\tr(\H_{K^2})}}.
	\end{align*}
	By Proposition \ref{propopsition:Sinkhorn-cross-covariance}, the following set satisfies
	$\nu^m(U_3) \geq 1-\frac{\delta}{3}$,
	{\small
	\begin{align*}
		U_3 = \left\{\Xbf \in (T,\nu)^m: ||R_{12,\Xbf}^{*}R_{12,\Xbf} - R_{12}^{*}R_{12}||_{\tr(\H_{K^2})}
		\leq 2\kappa_1^2\kappa_2^2\left[ \frac{2\log\frac{6}{\delta}}{m} + \sqrt{\frac{2\log\frac{6}{\delta}}{m}}\right]\right\}.
	\end{align*}
} 
	Let $U = U_1 \cap U_2 \cap U_3$. As in the proof of Theorem \ref{theorem:Sinkhorn-finite-sample-GaussianProcess},
	$\nu^m(U) \geq 1-\delta$ and $\forall \Xbf \in U$,
	\begin{align*}
		&	\left|
		W^2_2\left[\Ncal\left(0, (1/m)K^1[\Xbf]\right), \Ncal\left(0, (1/m)K^2[\Xbf]\right) - W^2_2[\Ncal(0,C_{K^1}), \Ncal(0, C_{K^2})]\right]
		\right|
		\\
		&
		\leq (\kappa_1^2 + \kappa_2^2)\left[ \frac{2\log\frac{6}{\delta}}{m} + \sqrt{\frac{2\log\frac{6}{\delta}}{m}}\right]
		+ 2\sqrt{2}\kappa_1\kappa_2\sqrt{\dim(\H_{K^2})}\sqrt{ \frac{2\log\frac{6}{\delta}}{m} + \sqrt{\frac{2\log\frac{6}{\delta}}{m}}}.
	\end{align*}
\end{proof}

The following is a special case of Corollary 4 in \cite{Minh:2021EntropicConvergenceGaussianMeasures}, where
$\mu_{\Xbf}$ and $C_{\Xbf}$ are the sample mean and sample covariance matrix, respectively, based on the sample $\Xbf = (x_i)_{i=1}^N$.

\begin{proposition}
	[\textbf{Estimation of 2-Wasserstein distance between Gaussian measures on $\R^d$}]
	\label{proposition:sample-complexity-Wasserstein-GaussianRd}
	Let $\rho_i = \Ncal(\mu_i,C_i)$ on $\R^d$, $i=1,2$.
	Let $\Xbf = (x_i)_{i=1}^N$ and $\Ybf = (y_j)_{j=1}^N$ be independently sampled from $(\R^d,\rho_1)$
	and $(\R^d, \rho_2)$, respectively.
	$\forall 0 < \delta < 1$, with probability at least $1-\delta$,
	\begin{align}
		&\left|W_2[\Ncal(\mu_{\Xbf}, C_{\Xbf}), \Ncal(\mu_{\Ybf}, C_{\Ybf})] - W_2(\Ncal(\mu_1,C_1), \Ncal(\mu_2,C_2))\right|
		\nonumber
		\\
		%
		& 
		\quad \leq 
		2(\eta_1 + \eta_2)\sqrt{\frac{4}{N\delta^2} + \frac{\sqrt{d}}{\sqrt{N}\delta}\left(3 + \frac{4}{\sqrt{N}\delta}\right)},
	\end{align}
	where 
	$\eta_i = (2||C_i||^2_{\HS} + 4\la \mu_i, C_i\mu_i\ra + (\trace{C_i} +||\mu_i||^2)^2)^{1/4}$, $i=1,2$.
\end{proposition}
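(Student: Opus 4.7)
The plan is to reduce the two-sample problem to two one-sample problems via the metric property of $W_2$ on $\Pcal_2(\R^d)$, then to bound each one-sample deviation by decomposing the Gaussian Wasserstein distance into a mean part and a Bures–Wasserstein covariance part, and finally to apply Chebyshev's inequality with fourth-moment estimates so that the constant $\eta_i$ naturally arises.

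First, since $W_2$ is a metric, the triangle inequality immediately yields
\begin{align*}
&\left|W_2[\Ncal(\mu_{\Xbf},C_{\Xbf}),\Ncal(\mu_{\Ybf},C_{\Ybf})] - W_2[\Ncal(\mu_1,C_1),\Ncal(\mu_2,C_2)]\right|
\\
&\qquad\leq W_2[\Ncal(\mu_{\Xbf},C_{\Xbf}),\Ncal(\mu_1,C_1)] + W_2[\Ncal(\mu_{\Ybf},C_{\Ybf}),\Ncal(\mu_2,C_2)],
\end{align*}
so it suffices to bound each one-sample term. From the closed-form Gaussian expression \eqref{equation:Gaussian-Wass-finite}, the squared $W_2$ decomposes as $\|\hat m-m\|^2+W_2^2[\Ncal(0,\hat C),\Ncal(0,C)]$, hence
\[
W_2[\Ncal(\hat\mu_i,\hat C_i),\Ncal(\mu_i,C_i)] \le \|\hat\mu_i-\mu_i\| + W_2[\Ncal(0,\hat C_i),\Ncal(0,C_i)]
\]
by $\sqrt{a+b}\le\sqrt a+\sqrt b$. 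I then apply the bound $W_2^2[\Ncal(0,A),\Ncal(0,B)]\le\|A-B\|_{\tr}$ stated earlier in the excerpt, together with the finite-dimensional norm equivalence $\|A\|_{\tr}\le\sqrt d\,\|A\|_F$, to obtain
\[
W_2[\Ncal(0,\hat C_i),\Ncal(0,C_i)] \le d^{1/4}\,\|\hat C_i-C_i\|_F^{1/2}.
\]

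Second, I estimate $\|\hat\mu_i-\mu_i\|$ and $\|\hat C_i-C_i\|_F$ by Chebyshev. For the mean, $\bE\|\hat\mu_i-\mu_i\|^2=\trace(C_i)/N$. For the covariance, writing $\hat C_i-C_i=(\hat\Sigma_i-\Sigma_i)-(\hat\mu_i\hat\mu_i^T-\mu_i\mu_i^T)$ with $\Sigma_i=C_i+\mu_i\mu_i^T$ and $\hat\Sigma_i=\tfrac1N\sum x_kx_k^T$, the IID sum $\hat\Sigma_i-\Sigma_i$ has second moment $\tfrac1N(\bE\|x_k\|^4-\|\Sigma_i\|_F^2)$. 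A direct moment computation for $x\sim\Ncal(\mu,C)$ using $\|x\|^2=\|\mu\|^2+2\la\mu,z\ra+\|z\|^2$ with $z\sim\Ncal(0,C)$ and the standard Gaussian identity $\bE\|z\|^4=2\|C\|_F^2+(\trace C)^2$ gives
\[
\bE\|x\|^4 = 2\|C\|_F^2 + 4\la\mu,C\mu\ra + (\trace C+\|\mu\|^2)^2 = \eta^4,
\]
exactly matching the proposition's $\eta_i$. Combining this with the analogous bound for the correction term $\hat\mu_i\hat\mu_i^T-\mu_i\mu_i^T$, controlled through $\|\hat\mu_i\|\le\|\mu_i\|+\|\hat\mu_i-\mu_i\|$, yields $\bE\|\hat C_i-C_i\|_F^2\lesssim\eta_i^4/N$.

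Third, Chebyshev on each of the four events (two sample means, two sample covariances), together with a union bound at level $\delta/4$, controls all quantities simultaneously on an event of probability at least $1-\delta$. Inserting these into the decomposition from the first step and collecting terms produces the stated bound, with the $1/(N\delta^2)$ contribution coming from $\|\hat\mu_i-\mu_i\|^2$ and the $\sqrt d/(\sqrt N\delta)$ contribution coming from $d^{1/4}\|\hat C_i-C_i\|_F^{1/2}$ after Chebyshev. The main obstacle is the fourth-moment bookkeeping: one must handle the coupling between $\hat\mu_i$ and $\hat C_i$ cleanly enough that the bound on $\|\hat C_i-C_i\|_F$ ends up expressed in terms of $\eta_i$ and not the centered fourth moment $\bE\|x-\mu\|^4$; the rest is routine Chebyshev and union-bound bookkeeping to match the precise constants in the stated inequality.
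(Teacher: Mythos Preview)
The paper does not prove this proposition: it is stated as ``a special case of Corollary 4 in \cite{Minh:2021EntropicConvergenceGaussianMeasures}'' and imported without argument from the author's earlier work. So there is no proof in the present paper to compare your attempt against.

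That said, your outline is the natural strategy and almost certainly the one underlying the cited corollary. The key ingredients---the triangle inequality for $W_2$, the bound $W_2^2[\Ncal(0,A),\Ncal(0,B)]\le\|A-B\|_{\tr}\le\sqrt{d}\,\|A-B\|_F$, and Markov/Chebyshev driven by the Gaussian fourth-moment identity $\bE\|x\|^4 = 2\|C\|_F^2 + 4\la\mu,C\mu\ra + (\trace C+\|\mu\|^2)^2 = \eta^4$---are all correct and are exactly what one needs. Two remarks on execution: (i) to land on the precise form $2(\eta_1+\eta_2)\sqrt{\cdots}$, keep the \emph{squared} one-sample distance intact, bound it by $\|\hat\mu_i-\mu_i\|^2+\sqrt{d}\,\|\hat C_i-C_i\|_F$, factor out $\eta_i^2$ (using $\trace C_i\le\eta_i^2$ and $\|\mu_i\|\le\eta_i$, both immediate from the definition of $\eta_i$), and only then take the square root; (ii) the ``four events'' are really the two mean events $\|\hat\mu_i-\mu_i\|$ and the two uncentered-second-moment events $\|\hat\Sigma_i-\Sigma_i\|_F$; the bound on $\|\hat C_i-C_i\|_F$ is then deterministic from those via $\hat C_i-C_i=(\hat\Sigma_i-\Sigma_i)-(\hat\mu_i\hat\mu_i^{T}-\mu_i\mu_i^{T})$, which is where the extra $\frac{4}{\sqrt N\delta}$ term inside the parenthesis comes from.
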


\begin{proof}
	[\textbf{Proof of Theorem \ref{theorem:Wass-estimate-unknown-1}}]
	Apply Proposition \ref{proposition:sample-complexity-Wasserstein-GaussianRd} with
	$d=m$, $\mu_i = 0$,  $C_i = \frac{1}{m}K^i[\Xbf]$, and
	$\trace(C_i) = \frac{1}{m}\trace(K^i[\Xbf]) = \frac{1}{m}\sum_{j=1}^mK^i(x_j,x_j) \leq \kappa_i^2$, $||C_i||^2_{\HS} \leq [\trace(C_i)]^2 \leq \kappa_i^4$. 
	Thus $\eta_i = (2||C_i||^2_{\HS} + [\trace(C_i)]^2)^{1/4} \leq 3^{1/4}\kappa_i < \frac{3}{2}\kappa_i$.
\end{proof}

\begin{proof}
	[\textbf{Proof of Theorem \ref{theorem:Wass-estimate-unknown-2}}]
	This follows by combining Theorems \ref{theorem:Wasserstein-convergence-Gaussian-processes} and \ref{theorem:Wass-estimate-unknown-1}, as in Theorem \ref{theorem:Sinkhorn-estimate-unknown-2}. Here we make use of the elementary inequality $|a-b|^2 \leq |a^2 - b^2|$ for $a\geq 0, b \geq 0$.
\end{proof}


\begin{lemma}
	\label{lemma:Fubini-switch}
	Under Assumptions 1-4, $\forall x \in T, \forall f \in \Lcal^2(T,\nu)$,
	\begin{align}
		&\bE[\xi \otimes \xi]f(x) = \bE\int_{T}\xi(\omega, x)\xi(\omega, t)f(t)d\nu(t) = \int_{T}K(x,t)f(t)d\nu(t).
	\end{align}
\end{lemma}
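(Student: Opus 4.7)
The plan is to start from the definition of the tensor-product operator applied to $f$, namely $(\xi(\omega,\cdot) \otimes \xi(\omega,\cdot)) f (x) = \xi(\omega, x) \int_T \xi(\omega,t) f(t)\,d\nu(t)$, and take expectation in $\omega$. The target identity then amounts to a Fubini swap of the $\bE$ (i.e.\ integration against $P$) with the $\int_T \cdot\, d\nu(t)$, followed by the identification $\bE[\xi(\omega,x)\xi(\omega,t)] = K(x,t)$.

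The key step is to justify Fubini. I would verify the $L^1$ condition on the product space $\Omega \times T$ with measure $P \otimes \nu$ by a Cauchy--Schwarz estimate: for each fixed $x \in T$,
\begin{align*}
\int_\Omega \!\! \int_T |\xi(\omega,x)|\,|\xi(\omega,t)|\,|f(t)|\,d\nu(t)\,dP(\omega)
&\leq \int_\Omega |\xi(\omega,x)|\,\|\xi(\omega,\cdot)\|_{\Lcal^2(T,\nu)}\,\|f\|_{\Lcal^2(T,\nu)}\,dP(\omega) \\
&\leq \|f\|_{\Lcal^2(T,\nu)} \sqrt{\bE[\xi(\omega,x)^2]}\,\sqrt{\bE\|\xi(\omega,\cdot)\|_{\Lcal^2(T,\nu)}^2}.
\end{align*}
Here $\bE[\xi(\omega,x)^2] = K(x,x) < \infty$ by continuity of $K$, and $\bE\|\xi(\omega,\cdot)\|_{\Lcal^2(T,\nu)}^2 = \int_T K(t,t)\,d\nu(t) \leq \kappa^2$ by Assumption 3 together with Tonelli (which applies to the nonnegative integrand $\xi(\omega,t)^2$). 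Hence the triple integral is finite and Fubini applies.

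Once Fubini is justified, I would swap the order of integration to get
\begin{align*}
\bE[\xi\otimes\xi]f(x) = \int_\Omega \xi(\omega,x)\!\int_T \xi(\omega,t)f(t)\,d\nu(t)\,dP(\omega) = \int_T f(t) \!\int_\Omega \xi(\omega,x)\xi(\omega,t)\,dP(\omega)\,d\nu(t),
\end{align*}
and then substitute $\int_\Omega \xi(\omega,x)\xi(\omega,t)\,dP(\omega) = K(x,t)$ (the defining property of the covariance function, valid because $\xi$ is centered). This yields $\int_T K(x,t)f(t)\,d\nu(t)$, which matches the right-hand side.

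The main obstacle is purely the integrability check, since the operator-valued expectation $\bE[\xi \otimes \xi]$ is defined as a Bochner integral in $\HS(\Lcal^2(T,\nu))$ (well-posed by Assumption 3), while the claim reinterprets it as a pointwise-in-$(x,t)$ Lebesgue integral; the Cauchy--Schwarz bound above bridges these two viewpoints in a single stroke, so no deeper analytic input is required.
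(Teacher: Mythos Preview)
Your proof is correct and follows essentially the same approach as the paper: verify the $L^1$ condition on $\Omega\times T$ via Cauchy--Schwarz to obtain the bound $\|f\|_{\Lcal^2(T,\nu)}\sqrt{K(x,x)}\sqrt{\int_T K(t,t)\,d\nu(t)}$, then apply Fubini and identify the inner $\omega$-integral with $K(x,t)$. The only cosmetic difference is that the paper applies Cauchy--Schwarz once on the product space $(\Omega\times T, P\otimes\nu)$, splitting the integrand as $|\xi(\omega,t)|\cdot|\xi(\omega,x)f(t)|$, whereas you iterate Cauchy--Schwarz first in $T$ and then in $\Omega$; both routes produce the identical bound.
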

\begin{proof}
	By H\"older Theorem and Tonelli Theorem, $\forall f \in \Lcal^2(T,\nu)$, $\forall x \in T$,
	\begin{align*}
		&\left(\int_{\Omega}\int_{T}|\xi(\omega, x)\xi(\omega,t)f(t)|d\nu(t)dP(\omega)\right)^2 
		\\
		&\leq 
		\int_{\Omega \times T}\xi(\omega,t)^2d\nu(t)dP(\omega)\int_{\Omega \times T}\xi(\omega, x)^2f(t)^2dP(\omega)d\nu(t)
		\\
		& = ||f||^2_{\Lcal^2(T,\nu)}K(x,x)\int_{T}K(t,t)d\nu(t) \leq \kappa^2||f||^2_{\Lcal^2(T,\nu)}K(x,x) < \infty.
	\end{align*}
	Thus $\xi(., x)\xi(.,.)f \in \Lcal^1(\Omega \times T, P \times \nu)$. By Fubini Theorem,
	\begin{align*}
		&\bE\int_{T}\xi(\omega, x)\xi(\omega, t)f(t)d\nu(t) = \int_{\Omega}\left(\int_{T}\xi(\omega, x)\xi(\omega,t)f(t)d\nu(t)\right)dP(\omega) 
		\\
		&= \int_{T}\left(\int_{\Omega}\xi(\omega, x)\xi(\omega,t)f(t)dP(\omega)\right)d\nu(t)
		= \int_{T}K(x,t)f(t)d\nu(t).
	\end{align*}
\end{proof}

\begin{proof}
	[\textbf{Proof of Lemma \ref{lemma:sample-covariance-expectation}}]
	By definition of $K_{\Wbf}$, Lemma \ref{lemma:fourth-moment-Gaussian-process}, and Tonelli Theorem,
	\begin{align*}
		&\bE\int_{T}K_{\Wbf}(x,x)d\nu(x) = \int_{T}\bE[\xi(\omega,x)^2]d\nu(x) = \int_{T}K(x,x)d\nu(x) \leq \kappa^2,
		\\
		&\int_{T}K_{\Wbf}(x,x)^2d\nu(x)=\frac{1}{N^2}\int_{T}\left(\sum_{i=1}^N \xi(\omega_i,x)^2\right)^2d\nu(x)
		\leq \frac{1}{N}\int_{T}\sum_{i=1}^N\xi(\omega_i,x)^4d\nu(x),
		\\
		&\bE\int_{T}K_{\Wbf}(x,x)^2d\nu(x) \leq \int_{T}\bE[\xi(\omega, x)^4]d\nu(x) = 3\int_{T}K(x,x)^2d\nu(x)
		\leq 3\kappa^4.
	\end{align*}
\end{proof}
\begin{proof}	
	[\textbf{Proof of Proposition \ref{proposition-concentration-sample-cov-operator}}]
	
	Define random variable $Y_j: (\Omega, P)^N \mapto \HS(\Lcal^2(T,\nu))$ by $Y_j(\Wbf) = \xi(\omega_j,.) \otimes \xi(\omega_j,.)$, where $\Wbf =(\omega_1, \ldots, \omega_N)$ is independently sampled from $\Omega$.
	Then the $Y_j$'s are IID and $C_K = \bE{Y_j}$, $C_{K,\Wbf} = \frac{1}{N}\sum_{j=1}^NY_j(\Wbf)$, and
	\begin{align*}
		||Y_j||_{\HS(\Lcal^2(T,\nu))} &= ||\xi(\omega_j,.)||^2_{\Lcal^2(T,\nu)} = \int_{T}\xi(\omega_j,t)^2d\nu(t),
		\\
		\bE||Y_j||_{\HS(\Lcal^2(T,\nu))} &= \int_{\Omega}\int_{T}\xi(\omega,t)^2d\nu(t)dP(\omega) = \int_{T}K(t,t)d\nu(t) \leq \kappa^2,
		\\
		\bE||Y_j||^2_{\HS(\Lcal^2(T,\nu))} & = 
		\int_{\Omega}\left(\int_{T}\xi(\omega,t)^2d\nu(t)\right)^2dP(\omega)= \bE||\xi||^4_{\Lcal^2(T,\nu)}
		\\
		&\leq \int_{\Omega}\int_{T}\xi(\omega,t)^4d\nu(t)dP(\omega) = 3\int_{T}K(t,t)^2d\nu(t) \leq 3 \kappa^4
		\;\;\text{by Lemma \ref{lemma:fourth-moment-Gaussian-process}}.
	\end{align*}
	Define the random variable $\eta: (\Omega, P)^N \mapto \R$ by
	$\eta(\Wbf) = \left\|\frac{1}{N}Y_j(\Wbf) - \bE{Y_j}\right\|^2_{\HS} = ||C_{K,\Wbf} - C_K||^2_{\HS}$.
	Since the $Y_j$'s are independent, identically distributed,
	{\small
		\begin{align*}
			\bE{\eta^2} &= \bE\left\|\frac{1}{N}\sum_{j=1}^NY_j - \bE{Y_j}\right\|_{\HS}^2 =\frac{1}{N^2}\sum_{j=1}^N\bE||Y_j - EY_j||^2_{\HS} 
			= \frac{1}{N^2}\sum_{j=1}^N(\bE||Y_j||^2_{\HS} - ||EY_j||^2_{\HS}) \leq \frac{3\kappa^4}{N}.
		\end{align*}
	}
	By the Chebyshev inequality, for any $t > 0$,
	$	\bP(\eta \geq t) \leq \frac{\bE{\eta}}{t} \leq \frac{\sqrt{\bE{\eta^2}}}{t} \leq \frac{\sqrt{3}\kappa^2}{\sqrt{N}t}$.
	Let $\delta = \frac{\sqrt{3}\kappa^2}{\sqrt{N}t} \equivalent t = \frac{\sqrt{3}\kappa^2}{\sqrt{N}\delta}$,
	then 
	$\bP\{\Wbf: ||C_{K,\Wbf} - C_K||_{\HS} = \eta(\Wbf) \leq \frac{\sqrt{3}\kappa^2}{\sqrt{N}\delta}\} \geq 1-\delta$.
	Similarly, since $\bE{||C_{K,\Wbf}||_{\HS}} \leq \frac{1}{N}\sum_{j=1}^N\bE{||Y_j||_F} = \kappa^2$, we have
	$\bP\{\Wbf: ||C_{K,\Wbf}||_{\HS} \leq \frac{\kappa^2}{\delta}\} \geq 1-\delta$.
	Computing the intersection of these two sets, replacing $\delta$ with $\frac{\delta}{2}$, gives us the desired result.
\end{proof}

\begin{proof}
	[\textbf{Proof of Theorem \ref{theorem:Sinkhorn-convergence-sample-covariance-operator}}]
	By Theorem \ref{theorem:Sinkhorn-convergence},
	\begin{align*}
		&\Delta = \left|\Srm^{\ep}_2[\Ncal(0,C_{K^1,\Wbf^1}), \Ncal(0,C_{K^2,\Wbf^2})] - \Srm^{\ep}_2[\Ncal(0,C_{K^1}),\Ncal(0,C_{K^2})]\right|
		\nonumber
		\\
		&\leq \frac{3}{\ep}[||C_{K^1,\Wbf^1}||_{\HS} + ||C_{K^1}||_{\HS} + 2||C_{K^2}||_{\HS}]||C_{K^1,\Wbf^1} - C_{K^1}||_{\HS}
		\nonumber
		\\
		&\quad +\frac{3}{\ep}[2||C_{K^1,\Wbf^1}||_{\HS} + ||C_{K^1}||_{\HS} + ||C_{K^2}||_{\HS}]||C_{K^2,\Wbf^2} - C_{K^2}||_{\HS}.
	\end{align*}
	For $0 < \delta < 1$, by Proposition \ref{proposition-concentration-sample-cov-operator}, the following sets satisfy $P_i^N(U_i) \geq 1-\frac{\delta}{2}$,
	$i=1,2$,
	\begin{align*}
		U_i &= \left\{\Wbf^i \in (\Omega_i,P_i)^N:||C_{K^i,\Wbf^i}||_{\HS} \leq \frac{4\kappa_i^2}{\delta}, ||C_{K^i,\Wbf^i} - C_{K^i}||_{\HS} \leq \frac{4\sqrt{3}\kappa_i^2}{\sqrt{N}\delta} \right\}.
	\end{align*}
	Then $U = (U_1 \times \Omega_2^N) \cap (\Omega_1^N\times  U_2)$
	satisfies $(P_1^N \times P_2^N)(U) \geq 1 - \delta$. For $(\Wbf^1, \Wbf^2) \in U$,
	\begin{align*}
		\Delta &\leq \frac{3}{\ep}\left[\frac{4\kappa_1^2}{\delta} + \kappa_1^2 + 2\kappa_2^2\right]\frac{4\sqrt{3}\kappa_1^2}{\sqrt{N}\delta} + \frac{3}{\ep}\left[\frac{8\kappa_1^2}{\delta} + \kappa_1^2 + \kappa_2^2\right]\frac{4\sqrt{3}\kappa_2^2}{\sqrt{N}\delta}
		\\
		& \leq \frac{12\sqrt{3}}{\ep\sqrt{N}\delta}\left(\left(1+\frac{4}{\delta}\right)\kappa_1^4 + \left(3+\frac{8}{\delta}\right)\kappa_1^2\kappa^2 + \kappa_2^4\right).
	\end{align*}
\end{proof} 

\begin{proof}
	[\textbf{Proof of Theorem \ref{theorem:Sinkhorn-convergence-sample-covariance-operator-finite-sample}}]
	This is similar to 
	Theorem \ref{theorem:Sinkhorn-estimate-unknown-2}.
	By Lemma \ref{lemma:sample-covariance-expectation},
	$\forall 0 < \delta < 1$, 
	\begin{align*}
		\bP\left\{(\Wbf^1,\Wbf^2) \in (\Omega_1,P_1)^N \times (\Omega_2,P_2)^N:\int_{T}K_{\Wbf^1}(x,x)^2d\nu(x) \leq \frac{24\kappa_1^4}{\delta}, 
		\right.
		\\
		\left.
		\int_{T}K_{\Wbf^2}(x,x)^2d\nu(x) \leq \frac{24\kappa_2^4}{\delta}
		\right\} \geq 1-\frac{\delta}{4}.
	\end{align*}
	Let
	$	\Delta_1 =\left|\Srm^{\ep}_2[\Ncal(0,C_{K^1,\Wbf^1}), \Ncal(0,C_{K^2,\Wbf^2})] - \Srm^{\ep}_2[\Ncal(0,C_{K^1}),\Ncal(0,C_{K^2})]\right|
	$. By Theorem \ref{theorem:Sinkhorn-convergence-sample-covariance-operator},
	\begin{align*}
		&\bP\left\{(\Wbf^1,\Wbf^2) \in (\Omega_1,P_1)^N\times (\Omega_2,P_2)^N: \right.
		\\
		&\quad \left.\Delta_1 \leq \frac{48\sqrt{3}}{\ep\delta}\left(\left(1+\frac{16}{\delta}\right)\kappa_1^4 + \left(3+\frac{32}{\delta}\right)\kappa_1^2\kappa^2 + \kappa_2^4\right)\frac{1}{\sqrt{N}}  \right\}
		\geq 1 -\frac{\delta}{4}.
	\end{align*}
	It follows that the following set satisfies $\bP(U_1) \geq 1 -\frac{\delta}{2}$,
	\begin{align*}
		U_1 &=\left\{(\Wbf^1,\Wbf^2) \in (\Omega_1,P_1)^N\times (\Omega_2,P_2)^N:
		\int_{T}K_{\Wbf^1}(x,x)^2d\nu(x) \leq \frac{24\kappa_1^4}{\delta},\right.
		\\ 
		&\left.
		\int_{T}K_{\Wbf^2}(x,x)^2d\nu(x) \leq \frac{24\kappa_2^4}{\delta},
		\Delta_1 \leq \frac{48\sqrt{3}}{\ep\delta}\left(\left(1+\frac{16}{\delta}\right)\kappa_1^4 + \left(3+\frac{32}{\delta}\right)\kappa_1^2\kappa^2 + \kappa_2^4\right)\frac{1}{\sqrt{N}}  \right\}.
	\end{align*}
	Let $\Delta_2 = \left|\Srm^{\ep}_2[\Ncal(0, \frac{1}{m}\hat{K}^1_{\Wbf^1}[\Xbf]), \Ncal(0, \frac{1}{m}\hat{K}^2_{\Wbf^2}[\Xbf])]- \Srm^{\ep}_2[\Ncal(0,C_{K^1,\Wbf^1}), \Ncal(0,C_{K^2,\Wbf^2})]\right|$.
	Theorem \ref{theorem:Sinkhorn-finite-sample-GaussianProcess-unbounded} implies that
	for $(\Wbf^1, \Wbf^2) \in U_1$ fixed, the following set satisfies $\bP(U_2) \geq 1-\frac{\delta}{2}$,
	\begin{align*}
		U_2 = \left\{\Xbf \in (T,\nu)^m: \Delta_2 \leq \frac{18}{\ep}(((24/\delta)^{1/4}\kappa_1)^2+((24/\delta)^{1/4}\kappa_2)^2)^2\left(1+\frac{12}{\delta}\right)\frac{2}{\sqrt{m}\delta}\right.
		\\
		= \left.\frac{864}{\ep\delta}(\kappa_1^2 + \kappa_2^2)^2\left(1+\frac{12}{\delta}\right)\frac{1}{\sqrt{m}}\right\}.
	\end{align*}
	Let $U = (U_1 \times (T,\nu)^m) \cap (((\Omega_1,P_1)^N\times (\Omega_2,P_2)^N) \times U_2)$,
	then $\bP(U) \geq 1-\delta$.
\end{proof}

\section{Estimation of Hilbert-Schmidt distance}
\label{section:Hilbert-Schmidt-distance}

For completeness, we present the finite sample estimate of 
$||C_{K^1} - C_{K^2}||_{\HS(\Lcal^2(T,\nu))}$.
For this, it is {\it not}  necessary to assume that $C_{K^i}$, $i=1,2$ are self-adjoint, positive.
The only requirement is that $C_{K^i} \in \HS(\Lcal^2(T,\nu))$.

{\bf Assumption 7}. Let $T$ be a complete, separable metric space, $\nu$ a Borel probability measure 
on $T$. Let $K, K^1, K^2: T \mapto \R$ be pointwise defined.
Assume $\exists \kappa, \kappa_1,\kappa_2 > 0$ such that
\begin{align}
	\sup_{x,y \in T\times T}|K(x,y)| \leq \kappa^2, \; \sup_{(x,y) \in T\times T}|K^i(x,y)| \leq \kappa_i^2, \;\;i=1,2.
\end{align}
It is well-known (see e.g. Theorem VI.23 in \cite{ReedSimon:Functional}) that the following operator $C_K:\Lcal^2(T,\nu) \mapto \Lcal^2(T,\nu)$ (similarly $C_{K^i}$, $i=1,2$) is Hilbert-Schmidt
\begin{align}
	(C_Kf)(x) &= \int_{T}K(x,y)f(y)d\nu(y), \text{ with }
	||C_K||^2_{\HS(\Lcal^2(T,\nu))} = \int_{T \times T}K(x,y)^2d\nu(x)d\nu(y).
\end{align}

\begin{theorem}
	[\textbf{Estimation of Hilbert-Schmidt distance}]
	\label{theorem:HS-convergence-general-HS-operators}
	Under Assumption 7,
	let $\Xbf = (x_i)_{i=1}^m$, $\Ybf = (y_j)_{j=1}^n$ be independently sampled from $(T, \nu)$.
	Let $K[\Xbf,\Ybf] \in \R^{m \times n}$ 
	be defined by $(K[\Xbf, \Ybf])_{ij} = K(x_i, y_j)$, 
	$1 \leq i \leq m$, $1\leq j \leq n$.
	$\forall 0 < \delta < 1$, with probability at least $1-\delta$,
	\begin{align}
		&\left|\frac{1}{mn}||K^1[\Xbf,\Ybf] - K^2[\Xbf,\Ybf]||_F^2 - ||C_{K^1} - C_{K^2}||^2_{\HS(\Lcal^2(T,\nu))}\right|
		\nonumber
		\\
		&
		\leq (\kappa_1^2 + \kappa_2^2)^2 \left(\frac{2\log\frac{2}{\delta}}{mn} + \sqrt{\frac{2\log\frac{2}{\delta}}{mn}}\right).
	\end{align}
\end{theorem}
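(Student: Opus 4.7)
The plan is to reduce the estimation to a concentration inequality for a bounded, real-valued random variable on the product space $(T \times T, \nu \otimes \nu)$. First, since $C_{K^1} - C_{K^2} = C_{K^1 - K^2}$ and the integral operator with kernel $\Phi \in \Lcal^2(T \times T, \nu \otimes \nu)$ has squared Hilbert-Schmidt norm equal to $\int_{T\times T} |\Phi|^2 \, d\nu \otimes d\nu$ (as recalled just before the theorem statement), one obtains the representation
\begin{align*}
\|C_{K^1} - C_{K^2}\|^2_{\HS(\Lcal^2(T,\nu))} = \int_{T\times T}\bigl(K^1(x,y) - K^2(x,y)\bigr)^2 \,d\nu(x)\,d\nu(y).
\end{align*}

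Next, I would introduce the real-valued random variable $\xi:(T \times T, \nu \otimes \nu) \mapto \R$ defined by $\xi(x,y) = (K^1(x,y) - K^2(x,y))^2$. Under Assumption 7 and the triangle inequality $|K^1(x,y) - K^2(x,y)| \leq \kappa_1^2 + \kappa_2^2$, one gets $0 \leq \xi(x,y) \leq (\kappa_1^2 + \kappa_2^2)^2$ pointwise, so $M = (\kappa_1^2+\kappa_2^2)^2$ is a deterministic almost-sure bound and the second moment satisfies $\sigma^2(\xi) = \bE[\xi^2] \leq M^2$. By the integral representation above, $\bE[\xi] = \|C_{K^1} - C_{K^2}\|^2_{\HS(\Lcal^2(T,\nu))}$, while the empirical mean of $\xi$ over the $mn$ sampled pairs $\{(x_i, y_j)\}_{i=1,\ldots,m,\, j=1,\ldots,n}$ equals exactly $\tfrac{1}{mn}\|K^1[\Xbf,\Ybf] - K^2[\Xbf,\Ybf]\|_F^2$.

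Finally, I would apply Proposition \ref{proposition:Pinelis} (Pinelis' concentration) to $\xi$ with $\H = \R$, constants $M$ and $\sigma^2$ as above, and effective sample size $N = mn$. With $\sigma^2 \leq M^2$ the two terms collapse to a common prefactor $(\kappa_1^2+\kappa_2^2)^2$, yielding the claimed inequality
\begin{align*}
\Bigl|\tfrac{1}{mn}\|K^1[\Xbf,\Ybf] - K^2[\Xbf,\Ybf]\|_F^2 - \|C_{K^1} - C_{K^2}\|^2_{\HS(\Lcal^2(T,\nu))}\Bigr| \leq (\kappa_1^2+\kappa_2^2)^2\Bigl(\tfrac{2\log(2/\delta)}{mn} + \sqrt{\tfrac{2\log(2/\delta)}{mn}}\Bigr).
\end{align*}
The delicate point is that the $mn$ pairs $\{(x_i, y_j)\}$ share coordinates along rows and columns and are therefore \emph{not} jointly i.i.d.\ draws from $\nu \otimes \nu$, so Proposition \ref{proposition:Pinelis} does not apply verbatim. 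I expect this to be the main technical obstacle: to recover the $1/\sqrt{mn}$ rate one either needs a decoupling argument, or a two-step conditional application of Proposition \ref{proposition:Pinelis}, first to the conditionally i.i.d.\ rows $\{\xi(x_i, y_j)\}_{j=1}^n$ for each fixed $\Xbf$, and then to the resulting empirical mean over the independent $x_i$'s, with the union bound absorbed by adjusting $\delta$.
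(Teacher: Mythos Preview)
Your approach is essentially the paper's. The paper first isolates a single-kernel lemma (Lemma~\ref{lemma:HS-squarenorm-estimage-general}) bounding $\bigl|\tfrac{1}{mn}\|K[\Xbf,\Ybf]\|_F^2 - \|C_K\|^2_{\HS}\bigr|$ by defining $\eta(x,y)=K(x,y)^2$ on $(T\times T,\nu\otimes\nu)$ and invoking the Pinelis-type concentration bound, then proves the theorem in one line by applying that lemma to $K=K^1-K^2$, using $C_{K^1}-C_{K^2}=C_{K^1-K^2}$ and $\|K^1-K^2\|_\infty\le\kappa_1^2+\kappa_2^2$. Your direct treatment of the difference kernel just merges these two steps.

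The dependence issue you raise is real and is \emph{not} addressed in the paper's proof either: the lemma simply writes $\tfrac{1}{mn}\sum_{i,j}\eta(x_i,y_j)$ as an empirical average and invokes the concentration proposition without comment on the fact that the $mn$ pairs $(x_i,y_j)$ are not jointly i.i.d.\ from $\nu\otimes\nu$. So your reservation is well-founded, but for the purpose of matching the paper, the argument you wrote is precisely what the paper does; the two-step or decoupling repair you sketch would be an improvement over the paper's own treatment, not a deviation from it.
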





\begin{lemma}
	\label{lemma:HS-squarenorm-estimage-general}
	Under Assumption 7,
	let $\Xbf = (x_i)_{i=1}^m$, $\Ybf = (y_j)_{j=1}^n$ be independently sampled from $(T, \nu)$.
	$\forall 0 < \delta < 1$, with probability at least $1-\delta$,
	\begin{align}
		\left|\frac{1}{mn}||K[\Xbf,\Ybf]||_F^2 - ||C_K||^2_{\HS(\Lcal^2(T,\nu))}\right|
		\leq \kappa^4 \left(\frac{2\log\frac{2}{\delta}}{mn} + \sqrt{\frac{2\log\frac{2}{\delta}}{mn}}\right).
	\end{align}
\end{lemma}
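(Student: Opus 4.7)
The plan is to apply the scalar version of Proposition \ref{proposition:Pinelis} on the product probability space $(T \times T, \nu \otimes \nu)$. First I would define the random variable $\eta : T \times T \mapto \R$ by $\eta(x,y) = K(x,y)^2$. By Assumption 7, $||\eta||_\infty \leq \kappa^4$, and by the integral representation of the Hilbert--Schmidt norm of $C_K$ recalled just before the lemma,
\[
\bE[\eta] = \int_{T \times T} K(x,y)^2 d\nu(x) d\nu(y) = ||C_K||^2_{\HS(\Lcal^2(T,\nu))},
\]
while the second moment is controlled by the $\ell^\infty$ bound: $\sigma^2(\eta) = \bE[\eta^2] \leq ||\eta||_\infty^2 \leq \kappa^8$. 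These are the three ingredients Pinelis requires.

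Next I would rewrite the empirical quantity as an average of $mn$ evaluations of $\eta$ on the sample pairs drawn from $\Xbf$ and $\Ybf$,
\[
\frac{1}{mn}||K[\Xbf,\Ybf]||_F^2 = \frac{1}{mn}\sum_{i=1}^m\sum_{j=1}^n \eta(x_i, y_j),
\]
so that in expectation we recover $||C_K||^2_{\HS(\Lcal^2(T,\nu))}$ exactly. Then I would invoke Proposition \ref{proposition:Pinelis} with sample size $mn$, $M = \kappa^4$, and $\sigma^2 \leq \kappa^8$; the stated concentration bound then follows by direct substitution into the Pinelis inequality and by factoring $\kappa^4$ out of the linear and the square-root terms.

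The delicate point is justifying the invocation of Proposition \ref{proposition:Pinelis} on the $mn$ collection $\{(x_i, y_j)\}_{i,j}$: these pairs are not jointly independent, since any two in the same row share an $x$-coordinate and any two in the same column share a $y$-coordinate, even though each individual pair is marginally $\nu \otimes \nu$-distributed. This step parallels exactly the same reduction to a product-space average used for Theorem \ref{theorem:HS-convergence-general-HS-operators}, which reports the identical $1/\sqrt{mn}$ rate; I would follow that template. The mean computation is unaffected by the intra-row/intra-column dependency, and once the boundedness $M = \kappa^4$ and the scalar variance $\sigma^2 \leq \kappa^8$ are substituted, the stated multiplicative constants fall out mechanically.
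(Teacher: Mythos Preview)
Your proposal is essentially identical to the paper's own proof: the paper defines the same scalar random variable $\eta(x,y)=K(x,y)^2$ on $(T\times T,\nu\otimes\nu)$, records $||\eta||_\infty\le\kappa^4$, $\bE\eta=||C_K||^2_{\HS}$, identifies the empirical average with $\frac{1}{mn}||K[\Xbf,\Ybf]||_F^2$, and then invokes the Pinelis-type concentration bound. The independence concern you flag about the $mn$ pairs $\{(x_i,y_j)\}$ is not addressed in the paper either; the paper simply applies the concentration inequality to this collection without comment, so your proof matches the paper's level of rigor on that point.
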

\begin{proof}
	Define the random variable $\eta: (T\times T, \nu \otimes \nu) \mapto \R$ by $\eta(x,y) = K(x,y)^2$.
	Then 
	\begin{align*}
		||\eta||_{\infty} \leq \kappa^4, \;\;
		\bE_{x,y}{\eta} &= \int_{T\times T}K(x,y)^2d\nu(x)d\nu(y) = ||C_K||^2_{\HS(\Lcal^2(T,\nu))},
		\\
		\frac{1}{mn}\sum_{i=1}^m\sum_{j=1}^n\eta(x_i,y_j) &= \frac{1}{mn}\sum_{i=1}^m\sum_{j=1}^nK(x_i,y_j)^2 = \frac{1}{mn}||K[\Xbf,\Ybf]||^2_F.
	\end{align*}
	The desired bound the follows from Proposition  \ref{proposition:concentration-TK2K1-empirical}.
\end{proof}

\begin{proof}
	[\textbf{Proof of Theorem \ref{theorem:HS-convergence-general-HS-operators}}]
	We apply Lemma \ref{lemma:HS-squarenorm-estimage-general} to $K^1-K^2$, noting that
	$C_{K^1} - C_{K^2} = C_{K^1-K^2}$ and $||K^1-K^2||_{\infty} \leq ||K^1||_{\infty} + ||K^2||_{\infty}
	\leq \kappa_1^2 + \kappa_2^2$.
\end{proof}

\section{Hilbert-Schmidt operators between two Hilbert spaces}
\label{section:Hilbert-Schmidt}

For completeness, we include here several properties of the set of Hilbert-Schmidt operators
$\HS(\H_1,\H_2)$
between two separable Hilbert spaces $\H_1$ and $\H_2$. Many standard texts in functional analysis
consider the set $\HS(\H)$, with $\H=\H_1 = \H_2$. The definition of $\HS(\H_1,\H_2)$ that we use here is from \cite{Kadison:1983}
\begin{align}
	\HS(\H_1, \H_2) = \{ A \in \Lcal(\H_1, \H_2): ||A||_{\HS(\H_1, \H_2)} < \infty\},
\end{align}
where the Hilbert-Schmidt norm $||\;||_{\HS(\H_1,\H_2)}$ is defined by
\begin{align}
	||A||^2_{\HS(\H_1,\H_2)} = \sum_{k,j=1}^{\infty}\la Ae_{k,1}, e_{j,2}\ra^2_{\H_2} = \sum_{k=1}^{\infty}||Ae_{k,1}||^2_{\H_2} = \trace(A^{*}A)
\end{align}
for any orthonormal bases $\{e_{k,i}\}_{k\in \Nbb}$ of $\H_i$, $i=1,2$,
independently of the choice of bases.
\begin{proposition}
	\label{proposition:trace-norm-product-HS-different-spaces}
	Let $\H_1, \H_2$ be two separable Hilbert spaces. Let
	$A,B\in \HS(\H_1,\H_2)$. Then $A^{*}B \in \Tr(\H_1)$ and
	$||A^{*}B||_{\tr(\H_1)} \leq ||A||_{\HS(\H_1,\H_2)}||B||_{\HS(\H_1, \H_2)}$.
\end{proposition}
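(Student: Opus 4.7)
The plan is to use the polar decomposition $A^{*}B = V|A^{*}B|$, where $V \in \Lcal(\H_1)$ is a partial isometry with $\|V\| \leq 1$ (a standard fact for any bounded operator on a Hilbert space), and then control $\trace(|A^{*}B|)$ via Cauchy--Schwarz in the Hilbert--Schmidt inner product. The operator $A^{*}B:\H_1 \to \H_1$ is certainly bounded, and the self-adjoint positive operator $|A^{*}B|$ has a well-defined trace in $[0,+\infty]$, computed as $\trace(|A^{*}B|) = \sum_k \la e_k, |A^{*}B|e_k\ra_{\H_1}$ in any orthonormal basis $\{e_k\}$ of $\H_1$, independently of the basis. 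To conclude trace-class membership and the quantitative bound, it will suffice to show this sum is at most $\|A\|_{\HS}\|B\|_{\HS}$.

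Using the polar decomposition together with the fact that $V^{*}V$ acts as the identity on the closure of the range of $|A^{*}B|$ (so that $V^{*}A^{*}B = V^{*}V|A^{*}B| = |A^{*}B|$ on all of $\H_1$), I would compute
\begin{align*}
\trace(|A^{*}B|) = \sum_k \la e_k, V^{*}A^{*}B e_k\ra_{\H_1} = \sum_k \la AVe_k, Be_k\ra_{\H_2}.
\end{align*}
Applying Cauchy--Schwarz in $\H_2$ term-by-term and then in $\ell^2(\Nbb)$ yields
\begin{align*}
\sum_k |\la AVe_k, Be_k\ra_{\H_2}| \leq \left(\sum_k \|AVe_k\|_{\H_2}^2\right)^{1/2}\left(\sum_k \|Be_k\|_{\H_2}^2\right)^{1/2} = \|AV\|_{\HS(\H_1,\H_2)}\|B\|_{\HS(\H_1,\H_2)}.
\end{align*}
The standard inequality $\|CW\|_{\HS} \leq \|C\|_{\HS}\|W\|$ for $C \in \HS(\H_1,\H_2)$ and $W \in \Lcal(\H_1)$ (proved directly from the definition of the HS norm) then gives $\|AV\|_{\HS} \leq \|A\|_{\HS}\|V\| \leq \|A\|_{\HS}$, so the sum is finite and bounded by $\|A\|_{\HS}\|B\|_{\HS}$, which establishes both $A^{*}B \in \Tr(\H_1)$ and the claimed norm inequality.

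There is no real obstacle here; this is a textbook fact (``the product of two Hilbert--Schmidt operators is trace class''). The only point requiring mild care is the polar-decomposition step, specifically the observation that the partial isometry $V$ has initial space equal to $\overline{\operatorname{ran}}(|A^{*}B|)$, which is what guarantees $V^{*}A^{*}B = |A^{*}B|$ everywhere rather than merely on a subspace. A tidy alternative, if one prefers to bypass polar decomposition entirely, is to use the singular value decomposition of the compact operator $B \in \HS(\H_1,\H_2)$, write $B = \sum_n s_n(B)\,\la \cdot, e_n\ra_{\H_1} f_n$, deduce $A^{*}B = \sum_n s_n(B)\,\la \cdot, e_n\ra_{\H_1} A^{*}f_n$ as a sum of rank-one operators, and bound
\begin{align*}
\|A^{*}B\|_{\tr(\H_1)} \leq \sum_n s_n(B)\,\|A^{*}f_n\|_{\H_1} \leq \Bigl(\sum_n s_n(B)^2\Bigr)^{1/2}\Bigl(\sum_n \|A^{*}f_n\|_{\H_1}^2\Bigr)^{1/2} \leq \|B\|_{\HS}\,\|A^{*}\|_{\HS}
\end{align*}
after extending $\{f_n\}$ to a full orthonormal basis of $\H_2$, using $\|A^{*}\|_{\HS} = \|A\|_{\HS}$.
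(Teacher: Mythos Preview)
Your proof is correct and follows essentially the same approach as the paper: polar decomposition $A^{*}B = V|A^{*}B|$ followed by Cauchy--Schwarz. The only cosmetic difference is that the paper restricts to an orthonormal basis of the initial space $\H_U = \ker(V)^{\perp}$ and uses that $\{Ve_k\}$ is again orthonormal there, whereas you work on all of $\H_1$ via $V^{*}A^{*}B = |A^{*}B|$ and then invoke $\|AV\|_{\HS} \leq \|A\|_{\HS}\|V\|$; both routes are equivalent.
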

\begin{proof}
	Consider the polar decomposition $A^{*}B = U|A^{*}B|$ where $U$ is a partial isometry on $\H_1$, i.e. an isometry 
	on the closed subspace $\H_{U} = \ker(U)^{\perp} = \ker(AB)^{\perp}$. 
	Let $\{e_k\}_{k \in \Nbb}$ be an orthonormal basis in $\H_U$, then
	$\{Ue_k\}_{k\in \Nbb}$ is also an orthonormal basis in $\H_U$ and
	\begin{align*}
		\trace|A^{*}B| &= \trace|A^{*}B|_{\H_U} = \sum_{k=1}^{\infty}\la |A^{*}B|e_{k}, e_{k}\ra_{\H_1}
		= \sum_{k=1}^{\infty}\la U|A^{*}B|e_k, Ue_k\ra_{\H_1} 
		\\
		& = \sum_{k=1}^{\infty}\la A^{*}Be_k, Ue_k\ra_{\H_1} = \sum_{k=1}^{\infty}\la Be_k, AUe_k\ra_{\H_2}
		\leq \sum_{k=1}^{\infty}||Be_k||_{\H_2}||AUe_k||_{\H_2}
		\\
		&\leq \left(\sum_{k=1}^{\infty}||Be_k||^2_{\H_2}\right)\left(\sum_{k=1}^{\infty}||AUe_k||^2_{\H_2}\right)^{1/2}
		\leq ||B||_{\HS(\H_1,\H_2)}||A||_{\HS(\H_1,\H_2)},
	\end{align*}
	where the last inequality is an equality if $\ker(AB)^{\perp} = \{0\}$, i.e. $\H_U = \H_1$.
\end{proof}

\begin{corollary}
	\label{corollary:trace-norm-AB-different-spaces}
	Let $\H_1, \H_2$ be two separable Hilbert spaces. Let
	$A,B\in \HS(\H_1,\H_2)$. Then $A^{*}A, B^{*}B \in \Tr(\H_1)$ and
	\begin{align}
		||A^{*}A - B^{*}B||_{\tr(\H_1)} \leq (||A||_{\HS(\H_1,\H_2)}+||B||_{\HS(\H_1,\H_2)})||A-B||_{\HS(\H_1,\H_2)}.
	\end{align}
\end{corollary}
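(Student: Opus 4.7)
The plan is to derive the bound as an immediate consequence of Proposition \ref{proposition:trace-norm-product-HS-different-spaces}, which already gives $\|A^{*}B\|_{\tr(\H_1)}\le \|A\|_{\HS(\H_1,\H_2)}\|B\|_{\HS(\H_1,\H_2)}$ for any pair $A,B \in \HS(\H_1,\H_2)$. In particular, setting $A=B$ shows $A^{*}A \in \Tr(\H_1)$ (and similarly $B^{*}B \in \Tr(\H_1)$), so the first claim in the corollary is immediate.

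The main step is to split $A^{*}A-B^{*}B$ as a sum of two products of Hilbert--Schmidt operators via the standard telescoping identity
\begin{equation*}
A^{*}A - B^{*}B \;=\; A^{*}(A-B) + (A-B)^{*}B.
\end{equation*}
Since $A-B \in \HS(\H_1,\H_2)$ (as $\HS(\H_1,\H_2)$ is a vector space), each summand is the product of the adjoint of one Hilbert--Schmidt operator with another Hilbert--Schmidt operator, and therefore lies in $\Tr(\H_1)$.

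Applying the triangle inequality for $\|\,\cdot\,\|_{\tr(\H_1)}$ followed by Proposition \ref{proposition:trace-norm-product-HS-different-spaces} to each of the two terms yields
\begin{align*}
\|A^{*}A-B^{*}B\|_{\tr(\H_1)} &\le \|A^{*}(A-B)\|_{\tr(\H_1)} + \|(A-B)^{*}B\|_{\tr(\H_1)} \\
&\le \|A\|_{\HS(\H_1,\H_2)}\|A-B\|_{\HS(\H_1,\H_2)} + \|A-B\|_{\HS(\H_1,\H_2)}\|B\|_{\HS(\H_1,\H_2)},
\end{align*}
which factors to give the claimed bound. There is essentially no obstacle here; the whole argument is a two-line consequence of the previous proposition, and the only thing worth being careful about is writing the telescoping in a form that respects the different spaces $\H_1,\H_2$ (so that both summands are mappings $\H_1 \to \H_1$ to which the trace norm on $\H_1$ applies).
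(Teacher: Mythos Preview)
Your proof is correct and is exactly the intended argument: the paper states this result as a corollary of Proposition~\ref{proposition:trace-norm-product-HS-different-spaces} without a separate proof, and your telescoping identity $A^{*}A-B^{*}B=A^{*}(A-B)+(A-B)^{*}B$ together with the triangle inequality is precisely the one-line deduction the paper has in mind.
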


\begin{lemma}
	\label{lemma:HS-norm-rank-one-product}
	Let $\H_1,\H_2$ be two separable Hilbert spaces. Let $u_1\in \H_1, u_2 \in \H_2$.
	Then $u_1 \otimes u_2 \in \HS(\H_2, \H_1)$ and
	$||u_1 \otimes u_2||_{\HS(\H_2, \H_1)} = ||u_1||_{\H_1}||u_2||_{\H_2}$.
\end{lemma}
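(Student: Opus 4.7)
The plan is to apply the definition of the Hilbert-Schmidt norm directly to the rank-one operator $u_1 \otimes u_2: \H_2 \mapto \H_1$, which is given by $(u_1 \otimes u_2)(v) = \la v, u_2\ra_{\H_2}\, u_1$ for $v \in \H_2$. This is a finite-rank (hence bounded) operator whose action on any vector returns a scalar multiple of $u_1$, so the Hilbert-Schmidt sum will factor cleanly.

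First I would fix an arbitrary orthonormal basis $\{e_k\}_{k\in \Nbb}$ of $\H_2$ and write
\begin{align*}
\|u_1 \otimes u_2\|_{\HS(\H_2,\H_1)}^2
&= \sum_{k=1}^{\infty} \|(u_1\otimes u_2)e_k\|_{\H_1}^2
= \sum_{k=1}^{\infty} |\la e_k, u_2\ra_{\H_2}|^2 \, \|u_1\|_{\H_1}^2.
\end{align*}
Then I would pull $\|u_1\|_{\H_1}^2$ out of the sum and apply Parseval's identity in $\H_2$ to the Fourier coefficients of $u_2$, which gives $\sum_{k=1}^{\infty}|\la e_k,u_2\ra_{\H_2}|^2 = \|u_2\|_{\H_2}^2$. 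Combining yields $\|u_1\otimes u_2\|_{\HS(\H_2,\H_1)}^2 = \|u_1\|_{\H_1}^2\|u_2\|_{\H_2}^2 < \infty$, which simultaneously establishes membership in $\HS(\H_2,\H_1)$ and the claimed norm identity.

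There is essentially no obstacle: the whole argument is a one-line application of the definition of the HS norm together with Parseval. The only mild point to watch is basis-independence of the HS norm (already standard, and noted in Section \ref{section:Hilbert-Schmidt}), which justifies picking any convenient orthonormal basis of $\H_2$ rather than having to argue symmetrically in bases of both $\H_1$ and $\H_2$.
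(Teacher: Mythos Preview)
Your proposal is correct and matches the paper's proof essentially line for line: both pick an orthonormal basis of $\H_2$, expand $\|(u_1\otimes u_2)e_k\|_{\H_1}^2 = |\la e_k,u_2\ra_{\H_2}|^2\|u_1\|_{\H_1}^2$, and finish with Parseval's identity.
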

\begin{proof} 
	Let $\{e_k\}_{k\in \Nbb}$ be any orthonormal basis in $\H_2$. By definition,
	$	||u_1 \otimes u_2||^2_{\HS(\H_2, \H_1)} = \sum_{k=1}^{\infty}||u_1\la u_2, e_k\ra_{\H_2}||^2_{\H_1} = ||u_1||^2_{\H_1}\sum_{k=1}^{\infty}|\la u_2, e_k\ra_{\H_2}|^2$
	%
	$=||u_1||_{\H_1}^2||u_2||_{\H_2}^2 < \infty
	$.
\end{proof}

\bibliographystyle{siamplain}
\bibliography{/Users/Minhs/Dropbox/cite_RKHS}

\end{document}